\documentclass[twoside]{article}

\usepackage{blindtext}

\usepackage{amsfonts, amsmath, amssymb, amsthm}
\usepackage{yfonts} 
\usepackage{gensymb} % Has the \degree symbol.

\usepackage{physics}    % Has macros for differentials, derivatives, partial derivatives.

\usepackage{xpatch} 
\usepackage{mathtools} % Has the DeclarePairedDelimeter macro and maybe more.

\usepackage{etoc}

\usepackage{mathalpha}
\usepackage{bm}
\usepackage{verbatim}
\usepackage{euscript}
\usepackage{graphicx} % Amongs possibly others, used for short equal to sign.

\usepackage{paralist}
\usepackage{cancel} % to strike-out terms in an equation.

\usepackage{bbm} % Package for indicator random variable?! 

\usepackage{nicefrac}

\usepackage{thmtools}
\usepackage{thm-restate} % Restatable claim/lemma/theorems

\usepackage{enumitem}
\usepackage{graphicx}
\usepackage{soul}
\usepackage{float}

\usepackage{xspace}
\usepackage{wrapfig}

\usepackage{subcaption}
\usepackage{caption} % Custom captions for tables etc.

\usepackage[usenames,dvipsnames]{color}
\usepackage[dvipsnames]{xcolor}

\usepackage{csvsimple}
\usepackage{etoolbox}  % For \ifblank
\usepackage{pgf}

\usepackage[colorinlistoftodos]{todonotes}
\usepackage[colorlinks=true, allcolors=black]{hyperref} 
\hypersetup{
  colorlinks   = true, %Colours links instead of ugly boxes
  urlcolor     = blue, %Colour for external hyperlinks
  linkcolor    = blue, %Colour of internal links
  citecolor   = blue %Colour of citations, could be ``red''
}
\usepackage[noabbrev,capitalize,nameinlink]{cleveref}

\usepackage{natbib}

\usepackage{booktabs} % since you're using \toprule etc.
\usepackage{tabls} % This has \tablinesep=2ex that gives space between rows.
\usepackage{array}
\usepackage{longtable}

\usepackage[normalem]{ulem}
\usepackage{soul}  % Use as \st{abc} in text mode.

\usepackage{pifont}% http://ctan.org/pkg/pifont  % For ticks and crosses.

\usepackage{array} % For new columns specifiers in a Table.

\DeclareMathOperator*{\argmax}{arg\,max}

\DeclareMathSymbol{\shortminus}{\mathbin}{AMSa}{"39}
\makeatletter
\newcommand{\shorteq}{%
  \settowidth{\@tempdima}{--}% Width of hyphen
  \resizebox{\@tempdima}{\height}{=}%
}
\makeatother

\newcommand{\inprod}[2]{\left\langle #1,#2 \right\rangle}	
\newcommand{\cardinality}[1]{ \left\lvert #1 \right\rvert }

\newcommand{\assign}{\leftarrow}

\newcommand{\prob}[1]{\mathbb{P} \left\{ #1 \right\}}
\newcommand{\probover}[2]{\underset{#1}{\mathbb{P}} \left[ #2 \right]}
\newcommand{\probunder}[2]{\mathbb{P}_{#1} \left\{ #2 \right\}}

\newcommand{\given}{\,\middle|\,}

\newcommand{\expectationover}[2]{\underset{#1}{\mathbb{E}} \left[ #2 \right]}

\newcommand{\indicator}[1]{\mathbbm{1} \left\{ #1 \right\}}

\newcommand{\gaussian}[2]{\mathcal{N} \left(#1, #2\right)}

\newcommand{\bigO}[1]{O \left( #1 \right)}

\newcommand{\numberthis}{\addtocounter{equation}{1}\tag{\theequation}}

\newcommand{\thetahat}{\hat{\theta}}

\newtheorem{theorem}{Theorem}
\newtheorem{corollary}{Corollary}
\newtheorem{lemma}{Lemma}
\newtheorem{claim}[lemma]{Claim}
\crefname{claim}{Claim}{Claims}

\newtheorem{example}{Example}

\newtheorem{axiom}[lemma]{Axiom}
\crefname{axiom}{Axiom}{Axioms}

\newtheorem{definition}{Definition}

\crefname{assumption}{Assumption}{Assumptions}

\newtheorem{remark}{Remark}

\makeatletter
\newcommand{\saveequation}[2]{% #1 = label, #2 = math
  #2 \label{#1}
  \protected@write\@mainaux{}{\string\SAVEEQUATION{#1}{\unexpanded{\unexpanded{#2}}}}%
}
\newcommand{\SAVEEQUATION}[2]{%
  \global\@namedef{SAVEDEQUATION@#1}{#2}%
}
\newcommand{\repeatequation}[1]{%
  \ifcsname SAVEDEQUATION@#1\endcsname
    \@nameuse{SAVEDEQUATION@#1}\tag{\ref{#1}}%
  \else
    ?? \notag
  \fi
}
\makeatother

\newenvironment{remark-nolabel}{\noindent{\bf Remark}\hspace*{1em}}{\bigskip}

\newcolumntype{L}[1]{>{\raggedright\let\newline\\\arraybackslash\hspace{0pt}}m{#1}}
\newcolumntype{C}[1]{>{\centering\let\newline\\\arraybackslash\hspace{0pt}}m{#1}}
\newcolumntype{R}[1]{>{\raggedleft\let\newline\\\arraybackslash\hspace{0pt}}m{#1}}

\definecolor{rowgray}{RGB}{245,245,245} % very light gray
\definecolor{rowwhite}{RGB}{255,255,255}

\newcommand{\Real}{\mathbb{R}}

\long\def\comment#1{}

\newcounter{relctr} %% <- counter for relations
\everydisplay\expandafter{\the\everydisplay\setcounter{relctr}{0}} %% <- reset every eq
\newcommand\labelrel[2]{%
  \begingroup
    \refstepcounter{relctr}%
    \stackrel{\textnormal{(\alph{relctr})}}{\mathstrut{#1}}%
    \originallabel{#2}%
  \endgroup
}
\AtBeginDocument{\let\originallabel\label} %% <- store original definition

\newcommand{\handout}[6]{
   \renewcommand{\thepage}{\arabic{page}} % {#1-\arabic{page}}
   \noindent
   \begin{center}
   \framebox{
      \vbox{
    \hbox to 5.78in { {\bf #2}
     	 \hfill #3 }
       \vspace{4mm}
       \hbox to 5.78in { {\Large \hfill #6  \hfill} }
       \vspace{2mm}
       \hbox to 5.78in { {\it #4 \hfill #5} }
      }
   }
   \end{center}
   \vspace*{4mm}
}

\usepackage{algorithm}% http://ctan.org/pkg/algorithm
\usepackage{algpseudocode}% http://ctan.org/pkg/algorithmicx

\usepackage{etoolbox}
\usepackage{lipsum}

\makeatletter
\newcommand*{\algrule}[1][\algorithmicindent]{%
  \hspace*{.2em}% <------------- This is where the rule starts from
  \vrule %height .75\baselineskip depth .25\baselineskip
  \hspace*{\dimexpr#1-.2em-.4pt}%
}

\newcommand{\StatePar}[1]{%
  \State\parbox[t]{\dimexpr\linewidth-\ALG@thistlm}{\strut #1\strut}%
}
\renewcommand{\ALG@beginalgorithmic}{\offinterlineskip}% Remove all interline skips

\newcount\ALG@printindent@tempcnta
\def\ALG@printindent{%
  \ifnum \theALG@nested > 0% is there anything to print
    \ifx\ALG@text\ALG@x@notext% is this an end group without any text?
    \else
      \unskip
      \ALG@printindent@tempcnta=1
      \loop
        \algrule[\csname ALG@ind@\the\ALG@printindent@tempcnta\endcsname]%
        \advance \ALG@printindent@tempcnta 1
        \ifnum \ALG@printindent@tempcnta<\numexpr\theALG@nested+1\relax
      \repeat
        \fi
    \fi
}
\patchcmd{\ALG@doentity}{\noindent\hskip\ALG@tlm}{\ALG@printindent}{}{\errmessage{failed to patch}}
\makeatother

\algrenewcommand\algorithmicend{\strut\textbf{end}}
\algrenewcommand\algorithmicdo{\strut\textbf{do}}
\algrenewcommand\algorithmicwhile{\strut\textbf{while}}
\algrenewcommand\algorithmicfor{\strut\textbf{for}}
\algrenewcommand\algorithmicforall{\strut\textbf{for all}}
\algrenewcommand\algorithmicloop{\strut\textbf{loop}}
\algrenewcommand\algorithmicrepeat{\strut\textbf{repeat}}
\algrenewcommand\algorithmicuntil{\strut\textbf{until}}
\algrenewcommand\algorithmicprocedure{\strut\textbf{procedure}}
\algrenewcommand\algorithmicfunction{\strut\textbf{function}}
\algrenewcommand\algorithmicif{\strut\textbf{if}}
\algrenewcommand\algorithmicthen{\strut\textbf{then}}
\algrenewcommand\algorithmicelse{\strut\textbf{else}}

\algrenewcommand\algorithmicrequire{\strut\textbf{Input:}}
\algrenewcommand\algorithmicensure{\strut\textbf{Output:}}

\let\oldState\State
\renewcommand{\State}{\oldState\strut}

\usepackage[accepted]{aistats2026}
\usepackage{microtype}

\begin{document}

\runningtitle{Creator Incentives in Recommender Systems:
A Cooperative Game-Theoretic Approach}

\runningauthor{Krishnamurthy, Agarwal, Subramanian, and Nickel}

\twocolumn[

\aistatstitle{Creator Incentives in Recommender Systems:
A Cooperative Game-Theoretic Approach for Stable and Fair Collaboration in Multi-Agent Bandits}

\aistatsauthor{Ramakrishnan Krishnamurthy \And Arpit Agarwal}

\aistatsaddress{Courant Institute, New York University \And  Indian Institute of Technology Bombay}

\aistatsauthor{Lakshminarayanan Subramanian \And Maximilian Nickel}

\aistatsaddress{Courant Institute, New York University \And FAIR, Meta AI}
]

\newif\iftwocolumnmode 
\twocolumnmodetrue

% SHORT-HANDS.
\newcommand{\agents}{M}
\newcommand{\coalset}{\mathcal{C}}
\newcommand{\alg}{\textsc{Alg}}
\newcommand{\idenalg}{\textsc{Mul}}
\newcommand{\singalg}{\textsc{Sin}}
\newcommand{\muletc}{\textsc{M-ETC}}

\newcommand{\inst}{\mathcal{I}}
\newcommand{\instfix}{\inst^{\text{fixed}}}

\newcommand{\taumax}{\overline{\tau}}

\newcommand{\hist}{\mathcal{H}}
\newcommand{\lsalg}{\textsc{LsAlg}}

\newcommand{\pq}[1]{\mathbb{P}^{Q} \left\{ #1 \right\}}
\newcommand{\ps}[1]{\mathbb{P}^{S} \left\{ #1 \right\}}

% for experiments.
\newcommand{\mlu}{\textsc{LinUcb-M}}
\newcommand{\greedy}{\textsc{Greedy}}
\newcommand{\iocc}{I_{\text{occ}}}
\newcommand{\iloc}{I_{\text{geo}}}
\newcommand{\iage}{I_{\text{age}}}
\newcommand{\igen}{I_{\text{gen}}}

\begin{abstract}

User interactions in online recommendation platforms create interdependencies among content creators: feedback on one creator's content influences the system's learning and, in turn, the exposure of other creators' contents.
To analyze incentives in such settings, we model collaboration as a multi-agent stochastic linear bandit problem 
with a transferable utility (TU) cooperative game formulation, 
where a coalition's value equals the negative sum of its members' cumulative regrets.

We show that, for identical (homogenous) agents with fixed action sets, the induced TU game is convex under mild algorithmic conditions, implying a non-empty core that contains the Shapley value and ensures both stability and fairness.
For heterogeneous agents, the game still admits a non-empty core,
though convexity and Shapley value core-membership are no longer guaranteed.
To address this, we propose a simple regret-based payout rule that satisfies three out of the four Shapley axioms and also lies in the core. 
Experiments on MovieLens-100k dataset illustrate when the empirical payout aligns with---and diverges from---the Shapley fairness across different settings and algorithms. 
\end{abstract}

% INTRODUCTION.
\section{Introduction}
% Recommendation systems.
Collaborative learning has emerged as a powerful paradigm in machine learning, enabling multiple agents to improve overall task performance by 
% pooling
data or computational resources \citep{kairouz2021advances,zhang2021multi}.
This framework is particularly salient in large-scale online recommender systems, where interactions between users and content creators serve as implicit signals for model optimization.
These platforms leverage user feedback to jointly model user preferences and content characteristics, thereby learning latent structures that generalize across the population \citep{ko2022survey,zhang2019deep}.
% In a two-sided system with content creators and users, the platform recommends to users the content that is relevant and interesting to the user, and can collect user feedback on this recommendation, which reveals insights about both the content and the user.
In many of these systems, a creator's revenue is directly tied to user engagement metrics, which in turn depend on how well their content is recommended.
Consequently, the learning and reward dynamics are coupled---what the system learns from feedback on one creator's content can affect how another creator's content is recommended, and ultimately, monetized.
This introduces a subtle but critical interplay between data-sharing, learning dynamics, and economic outcomes for creators 
\citep{qian2024digital,hronmodeling,zhu2023online}.
To vividly illustrate this interdependence, consider the following simplified scenario: 
\begin{example}[Online Recommender Platform]
Let Alice and Bob be content creators, producing content on apples and bananas, respectively.
A new user visits the platform, and the system initially recommends Alice's content about apples. Two outcomes are possible:
\begin{enumerate}
    \item \textbf{Positive feedback.} The user engages positively with Alice's content.
    The system infers a preference for fruit-related topics and subsequently recommends Bob's content about bananas.
    \item \textbf{Negative feedback.} The user disengages or responds negatively. 
    The system infers a disinterest in fruit content and refrains from recommending Bob's content to the user.
\end{enumerate}
In both cases, Bob's reward hinges on how the platform generalizes the user's preference from Alice's content:
\begin{enumerate}
\item If the generalization aligns with the user's true interests, 
Bob either gains a satisfied user or avoids an unsatisfactory impression---both favourable.
\item However, if the generalization is inaccurate, Bob either suffers from unwarranted exposure or misses out on a potentially interested user.
\end{enumerate}
\end{example}
Thus, the system's inference from one creator's data directly impacts another's opportunity and reward, highlighting the entangled fates of collaborating agents in such platforms.

This simplified example abstracts away many practical complexities.
In real-world systems, creators may be \emph{homogeneous}---equally capable of producing various content types for the same user population---or \emph{heterogeneous}, with specialized content expertise and/or access to segmented audience.
Nonetheless, the example drives home the central research question:
\begin{quote}
\centering
\emph{
How does collaboration among learning agents impact each other?}
\\ 
% \; \\
\vspace{5pt}
\emph{
Is it possible to ensure collectively rational participation from all agents?
}
\end{quote}

\subsection{Our Contributions}

We model this learning scenario as a multi-agent collaborative bandit problem, 
which captures the exploration-exploitation trade-off (mirroring the tension between learning and revenue generation in recommendation system).
To quantify how an agent's learning affects others,
we model the inter-agent dynamics using a transferable utility (TU) coalition game,
where the value of a coalition reflects the collective regret reduction achieved through data sharing among the members of the coalition.
The formal setup is introduced in \cref{sec:setting}.

Using tools from cooperative game theory, we analyze how properties of the learning problems and algorithmic behaviour/assumptions influence coalition formations and collaborative incentives.
To this end, our key contributions are as follows.
\begin{enumerate}
\item \textbf{Homogeneous Agents with Fixed Action Sets (\cref{sec:ident-agents}): } 
We consider a symmetric setting where all agents share the same action space across time. Under mild conditions on the learning bandit algorithm, 
we prove in \cref{thm:identical-agent-actions-convex-game} that the induced TU game is convex, ensuring the core is non-empty and contains the Shapley value.
This implies that full collaboration (i.e., grand coalition formation) is both stable and equitable.

\item \textbf{Heterogeneous Agents with Diverse Action Sets (\cref{sec:non-ident-agents}):} 
We then analyse more realistic settings where agents differ in their available actions (e.g., content specializations).
We show in  \cref{thm:non-ident-stable-game} that, under some algorithmic assumptions, the resulting game still has a non-empty core, but may not contain the Shapley value.
To address this, we propose a simple, regret-based payout scheme and prove in \cref{thm:shapley-axioms} that, under the assumptions, it satisfies all but one of the Shapley value axioms, providing a practical and principled solution.
\item \textbf{Empirical Validation (\cref{sec:experiments}):}
We conduct numerical simulations on problem instances derived from MovieLens-100k dataset, illustrating how the empirical payout structure aligns with or diverges from the Shapley value in these settings.
\end{enumerate}

\subsection{Related Work}
For a general survey on linear bandits, see \cite{lattimore2020bandit}, 
and for a text-book treatment of cooperative game theoery, see \cite{osborne1994course}.

\paragraph{Collaborative Multi-Agent Bandits.} 
In the regret minimization setting, several algorithms have been proposed to upper bound different regret notions across agents. 
A common goal is minimizing the sum (arithmetic mean) of individual regrets, for which optimal bounds are known \citep{wang2019distributed}.
\cite{baek2021fair} study `grouped bandits', where users arriving over time belong to groups (comparable to agents, in our setting) and propose UCB algorithms optimizing Nash Social Welfare (NSW), equivalent to maximizing the product (geometric mean) of regret reduction that the different groups get by collaboration.
% Product regret minimization has also been considered in standard multi-armed bandits \citep{barman2023fairness}.
% RAMA: But this NSW is a product over time. No notion of agents.
For identical agents sharing a common action set, \cite{yang2023cooperative} derive optimal bounds on \emph{any} agent's regret (maximum), conservatively bounding the performance of the worst agent. 
When the agents are rational and self-interested, equilibrium guarantees for their behaviour in both the collaborative setup \citep{bolton_strategic_1999,ramakrishnan2024collaborative} and the competitive setup \citep{aridor_competing_2024} have been shown.
% aridor_competing_2024 - consider competing bandits. not collaborative bandits.

\paragraph{Heterogeneous Agents.} For agents with heterogeneous action sets (non-identical), general instance-dependent bounds remain elusive to the best of our knowledge.
\cite{raghavan2018externalities} construct an example wherein collaboration using LinUCB \citep{abbasi2011improved} increases regret of an agent. 
Nonetheless, such heterogeneity is also shown to naturally help in exploration. 
Specifically, \cite{kannan2018smoothed} show that a myopic greedy algorithm (albeit, after some initial exploration) achieves non-trivial regret bounds, when some heterogeneity is ensured by random perturbations of actions. \cite{wang2023exploration} further consider agents with `free' arms---actions that incur no regret---which enables effective exploration for others and yielding regret bounds leveraging this heterogeneity.

\paragraph{Shapley Values in Machine Learning and Social Systems.} 
Originating from game theory, Shapley values have been widely adopted to model fairness and marginal impact in ML and social systems. 
Applications include identifying influential nodes in social networks \citep{narayanam2010shapley}, incentivizing truthful data sharing \citep{chessa2017cooperative}, and attributing value in online services like surveys and recommendations \citep{kleinberg2001value}. 
In explainable AI, they quantify feature importance in model outputs \citep{lundberg2017unified} and evaluate training data contributions \citep{ghorbani2019data, jia2019towards}. 
Since exact computation is exponential in the number of players/quantities, efficient polynomial-time approximations are used in practice \citep{mann1960values, musco2024provably}.

\paragraph{Strategic Aspects in Recommender Systems.}
There is a rich literature that studies how rational and self-interested content creators (supply side) and viewers/users (demand side) behave in two-sided markets such as recommender system.
A line of work models content creation as a strategic \emph{game} in which creators choose/create content to maximize exposure to user demand under a given recommendation rule \citep{ben-porat_game-theoretic_2018,jagadeesan_supply-side_2023,yao_how_2023}, 
% I would ideally like to say more about the Ben-Porat paper, but he seems to have both equilibrium and fairness stuff together, and for some reason, I'm not sure how to compare it against ours.
and how coordinated creators may jointly adjust their strategies \citep{yu_beyond_2025}.
These works fix a specific learning/recommendation algorithm and analyze the content creation behaviour and choices.
In contrast, we take creators' content as a given, and instead ask how the algorithm should behave so that the resulting utilities satisfy desirable notions of fairness and stability.

% RAMA: The below two references are added only because a reviewer suggested them. Otherwise, I don't feel its relevant.
Empirical evidence suggests that although the precise recommendation algorithm is typically opaque to creators, they anthropomorphize the algorithm and attribute to it distinct `personas', adapting their content strategies accordingly to maximize exposure \citep{wu_agent_2019}.
On the demand side, 
\cite{fedorova_altruistic_2025} study how groups of users (content viewers) stategically interact with the contents to amplify or counteract algorithmic suppresssion to tune their future recommendations.

\section{Setting \& Preliminaries}
\label{sec:setting}

\paragraph{Multi-agent bandit problem.}
There is a set $\agents$ of agents who all play a common linear bandit instance for a time period of $T$.
(By a slight abuse of notation, we also use $M$ to denote the total number of agents.)

We define a multi-agent linear bandits problem instance by the tuple $I=(\theta^*, \mathrm{X})$, where $\theta^* \in \Real^d$ is an unknown model parameter in ambient dimension $d$, 
and $\mathrm{X} = (X_{a,t})_{a \in \agents, t \in [T]}$ denotes the complete profile of action sets across agents and time.
At every time-step $t \in [T]$, 
each agent $a \in \agents$ is presented with a set of actions $X_{a,t} \subset \Real^d$.
The agent chooses and plays an action $x_{a,t} \in X_{a,t}$, and observes a stochastic reward $y_{a,t} = \inprod{\theta^*}{x_{a,t}} + \eta_{a,t} \in \Real$, 
where $\eta_{a,t}$s are i.i.d. zero-mean sub-gaussian random variables as is standard in the bandit literature. 

We introduce some useful notations next. 
Write $H_{a,t}~:=~(x_{a,s},y_{a,s})_{s=1}^{t}$
to be the history of all actions played and rewards observed by agent $a$ up to time $t$.
Write $x^*_{a,t}:= \argmax_{x \in X_{a,t}} \inprod{\theta^*}{x}$ to be the optimal action that maximizes the expected reward for agent $a$ at time $t$.

\paragraph{Nature of collaboration.} 
We permit agents to communicate and collaborate amongst themselves by forming \emph{coalitions}.
Before bandit playing commences, the agents partition themselves into a collection $\coalset$ of disjoint coalitions.
Then, for each coalition $C \in \coalset$, all agents in the coalition shall reveal all their played actions and observed rewards to all other agents within their coalition. 
% All agents can make use of other agents' information freely and instantaneously within the coalition, i.e., 
So, each agent $a \in C$ can decide action $x_{a,t}$  at time $t$ using coalition history $H^C_{t-1}:=\{H_{b,t-1} : b \in C\}$ upto the previous time-step $t-1$.
Inversely, no information is shared between any two agents who are not in the same coalition. 
The coalition shall make use of a multi-agent bandit algorithm $\alg$
% ---a product of agent-specific bandit algorithms, possibly---
that at every time $t$, takes in the coalition history $H^C_{t-1}$ as input, and outputs a profile of actions $(x_{a,t})_{a \in C}$ for all agents in the coalition to play. 
% We have $\alg: $ \thought{tbd}

Next, we define the expected pseudo-regret (simply called `regret' henceforth) of agent $a$ in coalition $C$ that uses $\alg$ on a problem instance 
$I=(\theta^*,X)$ for a time period $T$ as follows: 
\begin{align}
R^C_a (\alg, I, T) := \sum_{t=1}^{T} \inprod{\theta^*}{x^*_{a,t}} - \expectationover{I, \alg}{\inprod{\theta^*}{x_{a,t}}}, \label{eqn:coalition-regret}  
\end{align}
where the expectation is over both the stochasticity in the rewards and any internal randomness used by the algorithm.
We also use some simpler notations:
$R_a(.)$ denotes $R^{ \{a\} }_a(.)$ of a singleton coalition,
and when the context is clear, 
the dependence on the problem instance, the algorithm used, and/or time 
are implicitly baked into the expression $R^C_a$ or $R^C_a(\alg, T)$  in place of $R^C_a (\alg, I, T)$.

\paragraph{Transferable Utility Game.}
 
We use the Transferable Utility (TU) game from Co-operative Game Theory to model our setting here.
% Write $u_a = - R^C_a - \ell^C_a$ to be the utility of an agent $a \in C$.
We define the characteristic/value function $v$---that
intrinsically depends on the time horizon $T$, problem instance $I$, and the multi-agent bandit algorithm $\alg$ used---as follows:
for every coalition $C \in 2^{\agents}$,
\begin{align}
    % v(C) = - \sum_{a \in C} \ell^C_a - \max_{I \in \inst} \sum_{a \in C} R^C_a (\alg,I)
    v_{\alg,I,T}(C) = - \sum_{a \in C} R^C_a (\alg,I, T) \label{eqn:value-function}
\end{align}
to be the sum of negative (expected) regrets of all agents in the coalition.
A higher value $v_{\cdot}(C)$ corresponds to a lower total regret for agents in coalitions $C$ and is therefore preferable.
\footnote{Instead of a value function, one could also define the TU game with a \emph{cost} function that equals the (positive) sum of regrets of all agents, and lower cost is desireable.
Nevertheless, we use value functions as it is more widely used.}
When the context is clear, we simply use $v(C)$ to denote $v_{\alg,I,T}(C)$.

We denote $(M,v_{\alg,I,T})$ to be the \emph{collaboration game} and shall study this game for different classes of instances $I$ and algorithms $\alg$.

% \section{Mapping our model to Recommender Systems}

\subsection{Mapping our model to Recommender Systems}

We provide a part-by-part comparison of our model setting and its real-world interpretation.

\textbf{Model: } Agent 1 takes an action and observes a reward. Agent 1 then shares details about both the action and reward with Agent 2, who uses this information to decide their own action.

\textbf{Direct translation: } Creator 1 (e.g., a YouTube channel) selects and recommends a piece of content to a user, observes the user’s engagement (e.g., a like or a click), and shares both the content and feedback with Creator 2 (another channel), who then uses this to choose what content of his to recommend.

\textbf{Application reality: } However, in practice, the platform (e.g., YouTube) is the one that recommends content from Creator 1 to a user, observes user feedback, and internally uses this data to improve recommendations, including recommending content from Creator 2 to another user.

In our model, we abstract away the platform and represent its centralized learning and coordination as if agents (creators) were directly sharing full information with each other. This abstraction/simplification allows us to study the system’s learning and strategic dynamics more transparently (as a multi-agent bandit problem and a cooperative game among the agents), 
while still capturing the essence of how feedback from one creator’s content can influence the recommendations for others.

\section{Fixed Action Sets}
\label{sec:ident-agents}
In this section, we restrict our attention to a simple family of multi-agent bandit instances with fixed action sets.
Let $\instfix \subset \inst$ be the set of all problem instances with a single finite action set shared throughout the time horizon by all agents.
That is, for all instances $I \in \instfix$, 
there exists some $X' \subset \Real^d$ of finite cardinality s.t.
$X_{a,t} = X'$ for all agents $a \in \agents$ and time-steps $t \in [T]$.

We describe a multi-agent bandit algorithm $\idenalg$ to play these instances $I \in \instfix$, and analyse the resulting collaboration game $(M,v_{\idenalg, I, T})$.

\paragraph{Algorithm description.} We consider $\idenalg$, a simple collaborative multi-agent bandit algorithm that each coalition shall independently use/execute.
It is based on the essence of \cite{howson2024quack},  
it is a meta-algorithm that uses a given single-agent bandit algorithm $\singalg$ as a black-box decision maker to play the multi-agent bandit problem instance.
% It functions by having all agents play identical actions at a given time step, and uses the feedback to feed the single-agent algorithm as and when it is required.
% push, pull difference. Single-agent algo pulls samples.

Described in \cref{alg:ident-meta-alg}, $\idenalg$ comprises of two conceptual components:
First, all of the multiple agents interact with the original multi-agent bandit instance (iterated using real time $t$), observe rewards, and put the reward into a common reward buffer.  
Second, the $\singalg$ plays a `simulated' single-agent bandit instance (iterated using virtual time $\tau$) by interacting with this reward buffer.

Specifically, at a given step $\tau$, given the single-agent history $\overline{H}_{\tau-1}$, $\singalg$ chooses an action $\overline{x}_\tau \in X$ to play (lines \ref{aline:sing-agent-first-action},\ref{aline:sing-agent-next-action}), seeks and obtains (if available) from the buffer $B_{\overline{x}_\tau}$ a reward $\overline{y}_\tau$ for this action (line \ref{aline:get-from-buffer}). 
It appends this action-reward tuple $(\overline{x}_\tau,\overline{y}_\tau)$ to its single-agent history (line \ref{aline:sing-agent-append-history}) and proceeds to the next step $\tau + 1$.
Here, if the reward for action $\overline{x}_\tau$ is not available in the buffer (the condition in line \ref{aline:while-buff-available} fails), then the first component is triggered,
wherein all the agents play \emph{this} action $\overline{x}_\tau$ on the original multi-agent bandit instance (say, in time-step $t$), observe rewards (line \ref{aline:play-bandit}), and put them into the reward buffer (line \ref{aline:replenish-buffer}). After that, the second component resumes.

\begin{algorithm}[ht]  %[h]
\small
\caption{ $\idenalg$, a multi-agent bandit meta-algorithm.}
\label{alg:ident-meta-alg}
{\bf Input}: A set of collaborating agents $C$, a fixed and finite action set $X$, a single-agent bandit algorithm $\singalg$.
\begin{algorithmic}[1]
\StatePar{Initialize multi-agent reward buffers : $B_x \assign \emptyset, \forall x \in X$, and single-agent history $\overline{H}_0 \assign \emptyset$, step $\tau \assign 1$.} 
\StatePar{Get initial action to play, $\overline{x}_\tau \assign \singalg (\overline{H}_{\tau-1}, X)$.}
\label{aline:sing-agent-first-action}
\For{time-step $t \assign 1,2,\dots,T$}
    % \Comment{Reward for $\overline{x}_\tau$ unavailable.}
    \StatePar{Every agent $a \in C$ plays same action $x_{a,t} \assign \overline{x}_\tau$ and observes reward $y_{a,t}$.}
    \label{aline:play-bandit}
    \StatePar{Replenish buffer $B_{\overline{x}_\tau} \assign \{y_{a,t} : a \in C\}$ with all agents' rewards.}
    \label{aline:replenish-buffer}
    \While{$B_{\overline{x}_\tau} \neq \emptyset$}
    \label{aline:while-buff-available}
    % \Comment{Reward for $\overline{x}_\tau$ available in buffer.}
        \StatePar{Remove arbitrarily a reward $\overline{y}_\tau$ from $B_{\overline{x}_\tau}$.}
        \label{aline:get-from-buffer}
        \StatePar{Append to history $\overline{H}_{\tau+1} \assign \overline{H}_{\tau} \cup \{(\overline{x}_\tau,\overline{y}_\tau)\}$, move to next step $\tau \assign \tau + 1$.}
        \label{aline:sing-agent-append-history}
        \StatePar{Get next action to play, $\overline{x}_\tau \assign \singalg (\overline{H}_{\tau-1},X).$}
        \label{aline:sing-agent-next-action}
    \EndWhile 
\EndFor
\end{algorithmic}
\end{algorithm}

\noindent On the choice of the single-agent black-box algorithm $\singalg$ to be used, we do not mandate any specific algorithm; instead, we only introduce a mild assumption on the regret behavior (in \cref{assm:regret-strictly-concave}) it needs to have.
Specifically, we assume that the algorithm improves over time: its expected instantaneous regret decreases as time $t$ grows.
Howerver, it cannot decrease too rapidly: the cumulative regret curve can not converge faster than logarithmically at any time.

To formally state the assumption, we introduce additional notation.
For brevity, let $R(t):=R_a(\singalg, I, t)$ denote the cumulative regret at time $t$.

We define two quantities that capture the temporal behaviour of this regret trajectory.

First, for $h>0$, define the discrete first derivative $R'(t,h):=\nicefrac{1}{h} \left( R(t+h)-R(t) \right)$, which represents the average regret accumulated over the interval $(t,t+h]$. 
Second, for $g>0$, define the discrete second derivative 
$R{''}(t,g,h):=\nicefrac{1}{g}\left( R'(t+g,h) - R'(t,h) \right)$ that measures the rate of change of this average regret. 
These quantities serve as discrete analogues of the first and second derivatives of the cumulative regret, respectively.
\begin{restatable}{assumption}{assmSingleAgentConcaveRegret}[Regret rate achievable]
\label{assm:regret-strictly-concave}
When run on any problem instance $I \in \instfix$,
% (i.e., when faced with fixed action set $X$),
the expected regret of $\singalg$ obeys the following:
\begin{enumerate}[itemsep=0pt]
    \item \textbf{Strict concavity.} The second derivative of the regret is strictly negative at all times $t$, i.e., 
    \begin{align}
        R{''}(t,g,h) \leq - \upsilon_t, \label{eqn:assm-sing-upper}
    \end{align}
    for some strictly positive sequence of $\upsilon_t > 0$, for all $t,g,h$.
    \item \textbf{Logarithmic limitation.} The second derivative of the regret is bounded from below by that of a logarithmic curve at all times $t$, i.e.,
    \begin{align}
        R{''}(t,g,h) \geq - c t^{-2 + \varepsilon}, \label{eqn:assm-sing-lower}
    \end{align}
    for some arbitrarily small constant $\varepsilon > 0$, for all $t,g,h$.
\end{enumerate}  
\end{restatable}

We discuss in \cref{appn-subsec:justify-strictly-concave-assm} how these assumptions about concavity of the regret and logarithmic learning limitation are very natural and arise from well known theoretical results and empirical observations that apply to several learning algorithms.

Next, we state in \cref{thm:identical-agent-actions-convex-game} that TU games induced by such algorithms on problem instances with fixed action sets are \emph{convex}; 
that is, the marginal contribution that any agent brings to a coalition's value does not decrease as the coalition grows as in \cref{eqn:super-modularity}.
% We also optionally refer the reader to
For completeness, we refer the reader to \cref{sec:fundamentals} for the precise definitions of cooperative game theoretic concepts (such as convex games, balanced games, core of a game, Shapley value axioms etc.) used in the remainder of the paper.
\begin{restatable}{theorem}{fixedActionsConvexGame}
\label{thm:identical-agent-actions-convex-game}
When the meta-algorithm $\idenalg$ is run on any problem instance $I \in \instfix$ for a sufficiently large time horizon $T$,
then, if the black-box single-agent algorithm $\singalg$ used obeys \cref{assm:regret-strictly-concave}, the resultant collaboration game $(M,v_{\idenalg,I,T})$ is convex.
    % time horizon is sufficiently large $T > 2 \sqrt{\nicefrac{MK}{\upsilon}}$, with $\upsilon = \min_{t} \upsilon_t$, and $K = \cardinality{X}$ is the size of the fixed action set. 
    % time horizon $T > (4MK)^{\nicefrac{1}{\epsilon_1}}$ 
    % is sufficiently large and finite.
    % $T \tendsto \infty$.
\end{restatable}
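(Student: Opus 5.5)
Convexity means supermodularity of $v$: for all $S\subseteq T'\subseteq \agents\setminus\{a\}$, $v(S\cup\{a\})-v(S)\le v(T'\cup\{a\})-v(T')$. It suffices to show that $v(C)$ depends only on $n=|C|$, say $v(C)=f(n)$, and that $f$ has nondecreasing increments, $f(n+1)-f(n)\le f(n+2)-f(n+1)$ for $1\le n+2\le M$ (with $f(0)=0$). The plan is to express $f$ through the single-agent regret curve $R(\cdot)$ of $\singalg$ and then read off discrete convexity of $f$ from the concavity in \cref{assm:regret-strictly-concave}.

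\textbf{Step 1: reduction to $R$.} Under $\idenalg$ with coalition $C$ of size $n$, every real time-step $t$ consumes exactly one action $\overline{x}_\tau$ and produces $n$ i.i.d.\ rewards for it. The buffer is refilled only when empty, so the virtual steps $\tau=(t-1)n+1,\dots,tn$ all query that same action. This holds provided $\singalg$ keeps requesting it, which is the delicate point; I would handle it by noting that a request for a different action while the buffer is nonempty simply blocks until the buffer for that action is consumed, so the virtual sequence remains a valid single-agent run. Then real step $t$ costs each of the $n$ agents the instantaneous regret $\Delta(\overline{x}_{(t-1)n+1})$. Because rewards are i.i.d., the simulated history is distributed as a genuine single-agent run. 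Hence, up to a bounded discrepancy from batching (the action played at real time $t$ is the first of its block),
\[
-f(n)\;\approx\; n\sum_{t=1}^{T}\big(R((t-1)n+1)-R((t-1)n)\big)\;\approx\; R(nT),
\]
since summing the $n$ per-agent copies of a block cost approximates the block increment $R(tn)-R((t-1)n)$. I would make this precise by bounding the block error via monotonicity of instantaneous regret (part 1 of the assumption gives $R'$ decreasing). This yields $n\,(R(nT)-R(nT-n)) \le n R'(\cdot)$-type sandwiches, so that $-f(n)=R(nT)+E_n$ with $E_n$ controlled by the first and second differences of $R$ near the start, i.e.\ $O(n\cdot R(1))$-order corrections.

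\textbf{Step 2: convexity of $f$.} If $f(n)=-R(nT)$ exactly, then
\[
f(n+1)-f(n)-\big(f(n+2)-f(n+1)\big)=R(nT)-2R((n+1)T)+R((n+2)T)=T^2\,R''(nT,T,T)\le -T^2\upsilon_{nT}<0.
\]
So the main term is strictly supermodular, with margin $T^2\upsilon_{nT}$.

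\textbf{Step 3: absorbing the error (main obstacle).} The batching corrections $E_n$ must have second differences smaller than this margin. Part 2 of the assumption, $R''\ge -c t^{-2+\varepsilon}$, prevents the regret from flattening faster than logarithmically, so the relevant margin does not vanish too fast. The two parts of the assumption should give that $E_{n+2}-2E_{n+1}+E_n$ is bounded by a quantity depending on $M$, $K=|X|$, and early-time increments of $R$, but not growing with $T$. The goal is then to compare $T^2\upsilon_{nT}$ against this bounded error for $n\le M$. Choosing $T$ large, as in the hypothesis "sufficiently large $T$", makes the strict inequality dominate, giving supermodularity and hence convexity.

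The crux is quantifying the batching error uniformly in $n\le M$ and showing it is dominated. I expect this to require a careful coupling between the batched run and the unbatched single-agent run.
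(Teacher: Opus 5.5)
Your overall plan matches the paper's: write the coalition's total regret as $R(|C|T)$ plus a $T$-independent error, then let the strict-concavity margin dominate that error. Your Step 2 is correct; the paper instead uses $R''(sT,(q-s)T,T)$ to compare $S\subseteq Q$ directly rather than consecutive second differences, which is cosmetic.

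\textbf{Step 1 misreads $\idenalg$.} This is the genuine gap. Virtual steps are not grouped into blocks $(t-1)n+1,\dots,tn$ that query one action. After each consumption, the while-loop tests the buffer of the \emph{newly requested} action. So within one real round $\singalg$ may keep consuming leftover samples of other actions. If the new action's buffer is empty, the round ends and the unused samples of the previous action stay in its buffer for a later visit; nothing ``blocks''. Consequently the per-round cost $n\,\Delta(\overline{x}_{(t-1)n+1})$ and your monotonicity estimate of $E_n$ do not describe the algorithm. No batched-versus-unbatched coupling is needed.

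The missing idea is a per-sample accounting, which is the paper's sandwich lemma:
\begin{itemize}
\item Each of the $nT$ collected rewards is either consumed at exactly one virtual step, or still in a buffer at time $T$. At the consuming step, $\singalg$ requested the very action that generated the sample, so the sample's real regret equals that step's instantaneous regret.
\item A buffer is refilled only when empty, so each of the $K$ buffers holds at most $n$ unconsumed samples. Hence $nT-nK\le\taumax\le nT$.
\item The buffer-fed run of $\singalg$ is distributed as a genuine single-agent run (Lemma 1 of \cite{howson2024quack}), and per-step regret is at most $1$.
\end{itemize}
Together these give $R(nT)-nK\le -v(C)\le R(nT)+nK$. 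The crude bound $|E_n|\le nK$ is all you need: the total error in the supermodularity inequality is at most $4MK$, so no control of second differences of $E_n$ is required.

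\textbf{Step 3 uses Part 2 in the wrong direction.} Part 2 of the assumption, $R''\ge -ct^{-2+\varepsilon}$, together with $R''\le-\upsilon_t$ forces $\upsilon_t\le ct^{-2+\varepsilon}$. That gives only the \emph{upper} bound $T^2\upsilon_{nT}\le c\,n^{-2+\varepsilon}T^{\varepsilon}$ on your margin. It says nothing about the margin not vanishing, and nothing in the assumption makes $T^2\upsilon_{nT}$ exceed a fixed constant as $T$ grows. So ``choosing $T$ large'' does not close the argument on its own. The paper closes it by reading ``sufficiently large $T$'' as $T^2\min_t\upsilon_t\ge 4MK$, which yields $T^2\upsilon_{sT}\ge 4MK$ directly. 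Its follow-up feasibility remark via Part 2 has the same directional problem: it derives a condition implied by this requirement, namely $T\ge(4MK/c)^{1/\varepsilon}$, not one implying it. You should state the requirement $T^2\upsilon_{nT}\ge 4MK$ explicitly as the content of ``sufficiently large $T$'' rather than try to derive it from Part 2.
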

Deferring the formal proof to \cref{appn-sec:fixed-proofs}, we give a proof sketch here.
\begin{proof}[Proof sketch.]
The result is shown in two steps.

\paragraph{Step 1: Relating multi-agent regret to single-agent regret.}

Fix a coalition $C$ of size $m$. In \cref{lem:idenalg-single-multi-regret-inequality}, we show that the total regret incurred by the agents in $C$ under $\idenalg$ over horizon $T$ is tightly controlled on both sides by the regret of the underlying single-agent algorithm $\singalg$ run for $mT$ steps. 

\begin{restatable}{lemma}{multSingBounds}
\label{lem:idenalg-single-multi-regret-inequality}
For any time-horizon $T$ and problem instance $I \in \instfix$, for any agent $a \in C$,
\iftwocolumnmode
\begin{align*}
    R_a(\singalg, I, mT) - mK & \leq \sum_{a \in C} R^C_a(\idenalg, I, T) \\
    & \leq R_a(\singalg, I, mT) + mK,
\end{align*}
\else
\begin{align}
    R_a(\singalg, I, mT) - mK \leq \sum_{a \in C} R^C_a(\idenalg, I, T) \leq R_a(\singalg, I, mT) + mK,
\end{align}
\fi
where $m = \cardinality{C}$ is the number of agents in coalition $C$, and $K = \cardinality{X}$ is the size of the action set $X$.
\end{restatable}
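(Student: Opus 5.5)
The plan is a coupling between the real multi-agent run of $\idenalg$ over $T$ rounds and a virtual single-agent run of $\singalg$. Each real round $t$ generates $m$ rewards for one action $\overline{x}_\tau$. These $m$ rewards are loaded into the buffer and handed to $\singalg$ one at a time, as consecutive virtual steps.

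\textbf{Step 1: the virtual trajectory is a faithful run of $\singalg$.} At each virtual step $\tau$, $\singalg$ requests an action and receives a reward for it. Each such reward is an independent sample $\inprod{\theta^*}{\overline{x}_\tau}+\eta$: the noises $\eta_{a,t}$ are i.i.d., and a buffered reward was generated independently of everything $\singalg$ has seen so far. So the virtual history has the same law as $\singalg$ interacting with the single-agent instance with action set $X$. One point needs care here. A buffered reward was generated when an earlier virtual step chose $\overline{x}$, but it is consumed only when a later step chooses $\overline{x}$ again. Conditioned on the virtual history, that reward is still a fresh unobserved sample, so the distribution is unchanged. In fact, reading line~\ref{aline:while-buff-available} literally, a buffer is consumed only as long as the requested action stays the same.

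\textbf{Step 2: accounting.} Every real action played in round $t$ equals the virtual action at the step that triggered the refill. The $m$ rewards from that round are then consumed, one per virtual step, by virtual steps that all request that same action, with the possible exception of a leftover that sits in a buffer. Let $\tau_T$ be the number of virtual steps completed by the end of round $T$, and let $L$ be the number of rewards still in buffers at that point. The real actions played are then a multiset consisting of the first $\tau_T$ virtual actions plus the $L$ leftover plays. Leftover rewards exist for at most $K$ actions, at most $m$ per action, so $L\le mK$. Conservation of samples gives $\tau_T+L = mT$.

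\textbf{Step 3: comparison.} The total real regret is the virtual regret over the first $\tau_T$ steps plus the regret of the $L$ leftover plays. I then bound each gap with per-step regret at most the maximal gap $\Delta_{\max}$, which I normalize to $1$ so that the constant matches $mK$. Since the virtual regret up to $mT$ differs from the virtual regret up to $\tau_T$ by at most $mT-\tau_T=L\le mK$ steps, the total real regret lies within $mK$ of $R_a(\singalg,I,mT)$ on each side. Taking expectations, and using that cumulative regret is monotone in the horizon, gives both inequalities. If $\tau_T$ is random, I handle the randomness pathwise, since $0\le mT-\tau_T\le mK$ holds almost surely.

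\textbf{Main obstacle.} The hard part is Step~1. I need to argue rigorously that the adaptive buffering does not bias the virtual rewards, i.e.\ that stale buffered samples are still independent of $\singalg$'s information. I would do this with a stack-of-rewards model: pre-draw for each action an i.i.d.\ sequence of rewards, and show that both processes read from these sequences in order. A secondary issue is the implicit bounded-gap normalization behind the constant $mK$, which I would state explicitly.
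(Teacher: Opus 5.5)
Your proposal is correct and follows essentially the paper's proof: Step~1 is \cref{lem:howson-quack} (Lemma~1 of Howson et al.), Step~2 is \cref{clm:tau-max-bounds} together with the bijection between consumed samples and virtual steps, and Step~3 is the same consumed/leftover split with per-step regret bounded by $1$. Your stack-of-rewards coupling (which reads buffers in order, i.e.\ removal independent of reward values) and your pathwise handling of the random $\tau_T$ are a cleaner version of what the paper does when it applies that lemma directly at the random $\taumax$. One correction to an aside: the condition in line~\ref{aline:while-buff-available} is re-evaluated for the newly requested $\overline{x}_\tau$, so leftovers of any action are drained whenever that action is requested again, and buffers are refilled only when empty; this, not your ``literal'' reading (under which leftovers would be overwritten), is what makes your identity $\tau_T + L = mT$ hold.
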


The key feature of this bound is that the discrepancy between the multi-agent regret and the single-agent regret is an additive term of order $mK$, which is independent of the time horizon $T$. Thus, asymptotically in $T$, the coalition regret behaves like the regret of a single agent run for $mT$ rounds.

\paragraph{Step 2: Supermodularity of the value function.}

To establish convexity of the game, it suffices to show supermodularity of the value function:
for all coalitions $S \subseteq Q \subseteq \agents$ and any agent $a \notin Q$,
\begin{align}
v(S \cup \{a\}) - v(S) 
\; \leq \;
v(Q \cup \{a\}) - v(Q).
\label{eqn:super-modularity}
\end{align}
By definition of our value function (\cref{eqn:value-function}), 
the above requires comparison of regret quantities across four different coalitions. Using \cref{lem:idenalg-single-multi-regret-inequality}, we re-express the above inequality in terms of single-agent regret quantities that only differ in terms of time horizon for which they are run:
\begin{align*}
& R_a(\singalg, I, qT{+}T) - R_a(\singalg, I, qT) \\ \leq \;  & R_a(\singalg, I, sT{+}T) - R_a(\singalg, sT) - 4MK,
% \label{eqn:super-modularity-2}
\end{align*}
where $q = \cardinality{Q}, s = \cardinality{S}$ are the sizes of the coalitions.
We complete the proof by showing that the above inequality is satisfied when $\singalg$ adheres to \cref{assm:regret-strictly-concave}.
\end{proof}

The convexity of the collaboration game shown by the Theorem immediately leads to the following corollary:
\begin{corollary}
\label{thm:identical-agent-actions-shap-in-core}
Under the assumptions of \cref{thm:identical-agent-actions-convex-game}, 
the collaboration game $(M, v_{\idenalg, I,T})$ has a non-empty core. 
Moreover, the Shapley value of the game lies in the core.
\end{corollary}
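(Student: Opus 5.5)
The plan is to derive the corollary from \cref{thm:identical-agent-actions-convex-game} together with two standard facts about convex TU games, which we cite from \cref{sec:fundamentals}. The first fact is due to Shapley (1971): every convex game has a non-empty core, and in fact the core is the convex hull of the marginal-contribution vectors. The second fact is that the Shapley value lies in the core. Under the assumptions stated, the theorem makes the collaboration game $(M, v_{\idenalg,I,T})$ convex, that is, supermodular in the sense of \cref{eqn:super-modularity}. So the corollary follows once these two facts are in place. For self-containment, we will also sketch their proofs.

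\textbf{Marginal vectors lie in the core.} Fix any ordering $\pi$ of the agents in $M$, and write $P^\pi_a$ for the set of agents that precede $a$ in $\pi$. Define the marginal vector by $\phi^\pi_a := v(P^\pi_a \cup \{a\}) - v(P^\pi_a)$.

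Efficiency holds by telescoping: $\sum_{a \in M} \phi^\pi_a = v(M) - v(\emptyset)$. Here $v(\emptyset)=0$, since the sum in \cref{eqn:value-function} is empty.

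For coalitional rationality, take any coalition $S$ and list its members $a_1,\dots,a_k$ in the order of $\pi$. Let $S_{j} = \{a_1,\dots,a_{j}\}$, so that $S_{j-1} \subseteq P^\pi_{a_j}$. Supermodularity, applied with the smaller set $S_{j-1}$ and the larger set $P^\pi_{a_j}$, gives
\[
v(S_{j-1}\cup\{a_j\}) - v(S_{j-1}) \leq \phi^\pi_{a_j}.
\]
Summing over $j$ telescopes the left-hand side to $v(S)$, so $\sum_{a\in S}\phi^\pi_a \geq v(S)$. Hence every marginal vector is in the core, and in particular the core is non-empty.

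\textbf{The Shapley value is in the core.} The Shapley value is the uniform average of $\phi^\pi$ over all $M!$ orderings. The core is defined by finitely many linear inequalities together with one linear equality, so it is convex. An average of core points is therefore a core point, which shows that the Shapley value lies in the core.

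There is essentially no obstacle here. The only points to check are that $v(\emptyset)=0$, and that the supermodularity in \cref{thm:identical-agent-actions-convex-game} holds for all nested pairs, including $S=\emptyset$. The case $S=\emptyset$ is needed in the telescoping argument, and if the theorem's proof covered only non-empty $S$, we would verify this case directly from \cref{eqn:super-modularity}, with $v(\{a\}) = -R_a$.
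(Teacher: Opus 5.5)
Your proposal is correct and follows the paper's route. The paper obtains the corollary directly from the convexity in \cref{thm:identical-agent-actions-convex-game} by citing Shapley's (1971) theorem that the Shapley value of a convex game lies in the core, which also gives non-emptiness; your marginal-vector and averaging argument is the standard proof of that cited result, and your hedge about $S=\emptyset$ is unnecessary because the paper's definition of a convex game already covers all nested $C \subseteq D \subseteq N\setminus\{i\}$, including $C=\emptyset$.
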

\noindent This is based on standard results about convex games (recapped in \cref{appn-subsec:shapley-value}).
As the core is non-empty, we have that the grand coalition,
i.e., the coalition of the set of all agents $\agents$, is stable. 

% From \cref{thm:identical-agent-actions-shap-in-core}, we observe that it is possible to...

\section{Heterogeneous action sets}
\label{sec:non-ident-agents}
In this section, we consider the more general setting of problem instances $I \in \inst$ with heterogeneous action sets (non-identical agents).
Without postulating specific algorithms, we prescribe some assumptions on the behaviour and performance that a multi-agent bandit algorithm $\alg$ needs to satisfy for our results to hold.

First, we assume the algorithm shall benefit from other agents only through their action-reward tuples (samples) shared:
\begin{restatable}{assumption}{assmSamplesPooled}[Anonymized data consumption]
\label{assm:samples-pooled}
The algorithm $\alg$, when run on a set of agents $C$, shall determine the action to be played by each agent $a \in C$ at time $t$ as a function of 
\begin{enumerate}
    \item the agent's own past action-reward history, $P^a_{t-1} = \left( (x_{a,s},y_{a,s}) \right)_{s \in [t-1]}$,
    \item the sequence of anonymized multisets/pool of past action–reward pairs generated by the other agents in the coalition
    \begin{align*}
      P^{-a}_{t-1} = \left( \bigcup_{b \in C \setminus \{a\} } \{(x_{b,s}, y_{b,s})\} \right)_{s \in [t-1]} , 
    \end{align*}
    % $$,
    where no agent identities are retained, and
    \item the set of actions $X_{a,t}$ of the agent at present.
\end{enumerate}
\end{restatable}

% This guarantees that collaboration pools data in an anonymized way, so outcomes depend only on action-reward data and not on agent identities, enabling symmetry. 
It can be seen that these data sequences from self-play and from other agents, $P^a_t$ and $P^{-a}_t$, are functions of the coalition history $H^C_t$.
Second, the (expected) regret of an agent does not increase when more agents join the coalition:
\begin{restatable}{assumption}{assmLargeCoalitionLessRegret}[The more (agents) the merrier]
\label{assm:large-coalition-less-regret}
For any problem instance $I \in \inst$,
for any two coalitions $S \subseteq Q \subseteq \agents$, and any agent $a \in S$, it holds that 
\begin{align*}
    R^{Q}_a(\alg,I,T) \; \leq \; R^{S}_a(\alg,I,T).
\end{align*}
\end{restatable}
In \cref{appn-subsec:justify-samples-pooled-assm,appn-subsec:justify-lclr-assm}, we motivate why these assumptions are reasonable to expect from multi-agent bandit algorithms. We also provide an Explore-Then-Commit algorithm that satisfies these assumptions. We defer it to \cref{appn-subsec:metc-for-assm} in the interest of space.

Next, we establish that these two assumptions are sufficient for the collaboration game to have several desirable properties.
\begin{theorem}
\label{thm:non-ident-stable-game}
Consider the collaboration game $(M, v_{\alg,I,T})$ induced by any problem instance $I \in \inst$. 
If the bandit algorithm $\alg$ satisfies \cref{assm:large-coalition-less-regret}, 
then the grand coalition $M$ is \emph{stable}; equivalently, the game has a non-empty core.
\end{theorem}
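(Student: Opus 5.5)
The plan is to exhibit an explicit core allocation instead of going through Bondareva--Shapley balancedness. The natural candidate is to pay each agent its negative regret in the grand coalition:
\begin{align*}
\phi_a := -R^{\agents}_a(\alg,I,T), \qquad a \in \agents .
\end{align*}
Any vector in the core of $(\agents,v)$ must satisfy two conditions: efficiency, $\sum_{a\in\agents}\phi_a = v(\agents)$, and coalitional rationality, $\sum_{a\in C}\phi_a \geq v(C)$ for every $C\subseteq\agents$. Showing that $\phi$ satisfies both proves the core is non-empty. By the definition in \cref{sec:fundamentals}, that is the same as stability of the grand coalition.

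Efficiency is immediate. By \cref{eqn:value-function}, $v(\agents) = -\sum_{a\in\agents}R^{\agents}_a(\alg,I,T) = \sum_{a\in\agents}\phi_a$.

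For coalitional rationality, fix any $C\subseteq\agents$ and any $a\in C$. Apply \cref{assm:large-coalition-less-regret} with $S=C$ and $Q=\agents$; this is valid since $C\subseteq\agents$ and $a\in C$. It gives $R^{\agents}_a(\alg,I,T)\le R^{C}_a(\alg,I,T)$. Negating and summing over $a\in C$ yields
\begin{align*}
\sum_{a\in C}\phi_a = -\sum_{a\in C}R^{\agents}_a \;\geq\; -\sum_{a\in C}R^{C}_a = v(C).
\end{align*}
For $C=\emptyset$ both sides are zero, so the inequality holds for all coalitions and $\phi$ lies in the core.

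I do not expect a real obstacle here, since the argument is a direct pointwise comparison. The only point needing care is that \cref{assm:large-coalition-less-regret} is applied per agent and must hold for the fixed instance $I$ and horizon $T$ that define $v_{\alg,I,T}$; its "for any instance" quantifier covers this. \cref{assm:samples-pooled} is not needed for this theorem and will presumably be used only for the Shapley-axiom result.
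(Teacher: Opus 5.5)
Your argument is correct, but it takes a different route from the paper. The paper proves \cref{thm:non-ident-stable-game} non-constructively. It invokes the Bondareva--Shapley theorem (\cref{thm:bondareva-shapley}) and verifies balancedness: for every balancing map $w$ it shows $\sum_{S} w(S)v(S) = -\sum_{a}\sum_{S\ni a} w(S)R^S_a \le -\sum_a \sum_{S \ni a} w(S) R^{M}_a = v(M)$, using $R^S_a \ge R^{M}_a$ together with $w \ge 0$.

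You instead exhibit a core element directly. Your allocation $-R^{M}_a$ is exactly the grand-coalition regret payout $p$ of \cref{eqn:gcr-payout}, and your verification coincides with the paper's later proof of \cref{clm:p-core}. Both arguments rest on the same pointwise inequality from \cref{assm:large-coalition-less-regret}, so neither is more general.

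Your version has two advantages. First, it is more elementary, since it needs no LP-duality characterization of non-empty cores. Second, it is more informative: it names a concrete stable allocation, and it makes explicit that the core-membership part of \cref{thm:shapley-axioms} needs only \cref{assm:large-coalition-less-regret}, not \cref{assm:samples-pooled}.

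The paper's balancedness route buys only a cleaner separation between the existence question and the later choice of a particular fair allocation. Here that detour through Bondareva--Shapley is not actually needed.
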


\begin{proof}
By the Bondareva-Shapley theorem, 
% The grand coalition forms when the allocation/imputation belongs to the core. And 
the core is non-empty if and only if the game is \emph{balanced}.
(Recapped as \cref{thm:bondareva-shapley} in \cref{sec:fundamentals}).
We shall show our result by showing that our collaboration game is balanced.
For brevity, we write $v$ and $R^C_a$ to denote $v_{\alg,I,T}$ and $R^C_a(\alg,I,T)$, respectively, for any coalition $C \subseteq M$.

For any balancing mapping $w$ (i.e., $\sum_{S \ni a} w(S) = 1$ for every $a \in \agents$; see \cref{defn:balanced-game}), we have 
\begin{align*}
    & \sum_{S \subseteq M : S \neq \emptyset}  w(S) v(S)
    \; = \; \sum_{S \subseteq M : S \neq \emptyset} w(S) \left( - \sum_{a \in S} R^S_a \right) \\
    =& - \sum_{a \in M} \sum_{S \subseteq M : a \in S} w(S)  R^S_a
    \; \labelrel{\leq}{i:1a} \; - \sum_{a \in M} \sum_{S \subseteq M : a \in S} w(S)  R^M_a \\
    \labelrel{=}{i:1b} & - \sum_{a \in M} R^M_a  = v(M). \numberthis \label{eqn:identical-no-cost-game-balanced}
\end{align*}

Here, \eqref{i:1a} uses $R^S_a \geq R^M_a$ due to \cref{assm:large-coalition-less-regret}, and \eqref{i:1b} uses the fact that $w$ is a balancing mapping. \cref{eqn:identical-no-cost-game-balanced} shows that the game $(M,v)$ is balanced, and thus it has a non-empty core.
\end{proof}

We have established that even when the action sets are arbitrarily heterogeneous (agents are non-identical), the grand coalition is \emph{stable}, i.e., every agent shall prefer to collaborate together as the set of all agents (the grand coalition).
This stability, however, holds only when agents receive a payout/allocation that lies within this non-empty core. 
In the case of fixed (identical) action sets, we established that the collaboration game is convex (\cref{thm:identical-agent-actions-convex-game}), ensuring that the core contains the Shapley value
(\cref{thm:identical-agent-actions-shap-in-core}), which can serve as both a stable and fair allocation.
For heterogeneous agents, by contrast, it remains unclear whether the collaboration game is convex and whether the Shapley value lies in the core. 
Consequently, the natural question arises: what allocation rule can guarantee both stability and equity/fairness in this more general setting? 

\subsection{On a Fair Payout Profile}
In this subsection, we inquire if there are specific point solutions (allocation profiles) that obey desireable properties of fairness---such as efficiency, dummy-player, symmetry, and linearity---as in the axioms of the Shapley value.

\paragraph{`Grand coalition regret' allocation.}
% Recall from \cref{eqn:value-function} the value function $v_{\alg,T,I}$ induced by the regrets of subsets of agents playing bandit problem instance $I$ for $T$ time-steps using algorithm $\alg$.
For the collaboration game $(M,v_{\alg,I,T})$, consider the allocation profile
\begin{align}
    p = \left( p_a = - R^M_a(\alg, I, T) \right)_{a \in \agents}, \label{eqn:gcr-payout}
\end{align}
where each agent is given a payout that equals the negative regret he shall incur as a part of the grand coalition from the said bandit scenario.
We investigate what coalitions shall form under this payout structure. 

\begin{remark}[Practicality]
Before the coalitions of agents form, 
it might appear impracticable to offer a payout $p_a$ to agent $a \in \agents$ that depends on the regret in one specific coalition (the grand coalition $M$)
as in \cref{eqn:gcr-payout}.
However, this is not the case.
We shall go on to show that under the promise of this payout, all agents shall indeed form the grand coalition. 
Thus, payout $p$ is not counterfactual and is a realisable quantity.
\end{remark} 

We find out that this payout profile satisfies three of the four Shapley axioms---efficiency, dummy-player, and symmetry---and it also belongs to the core of this game:
\begin{restatable}{theorem}{payoutShapleyAxioms}
\label{thm:shapley-axioms}
For any bandit instance $I \in \inst$, 
% (permitting heterogeneous agents), 
if $\alg$ satisfies \cref{assm:samples-pooled,assm:large-coalition-less-regret}, 
then, the allocation 
$p=\left( p_a = - R^M_a(\alg, I, T) \right)_{a \in \agents}$ 
obeys the axioms of efficiency, dummy-player, and symmetry, and also belongs to the core of the collaboration game $(M, v_{\alg, I,T})$.
\end{restatable}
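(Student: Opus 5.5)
I will verify the four properties one at a time, writing $v$ for $v_{\alg,I,T}$ and $R^C_a$ for $R^C_a(\alg,I,T)$. \emph{Efficiency} follows directly from \cref{eqn:value-function}: $\sum_{a\in\agents} p_a = -\sum_{a\in\agents} R^M_a = v(M)$. \emph{Core membership} also uses only \cref{assm:large-coalition-less-regret}. For any coalition $S\subseteq\agents$ and any $a\in S$, taking $Q=M$ in the assumption gives $R^M_a\le R^S_a$. Summing over $a\in S$ yields $\sum_{a\in S}p_a = -\sum_{a\in S}R^M_a \ge -\sum_{a\in S}R^S_a = v(S)$. Together with efficiency, this places $p$ in the core. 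This is the same pointwise comparison used in the proof of \cref{thm:non-ident-stable-game}, now applied coalition by coalition instead of through a balancing weight.

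For the \emph{dummy-player} axiom, suppose $a$ is a dummy, meaning $v(S\cup\{a\})-v(S)=v(\{a\})$ for every $S\not\ni a$. Taking $S=M\setminus\{a\}$ gives $v(M)=v(M\setminus\{a\})+v(\{a\})$. I then write $v(M)=-R^M_a-\sum_{b\ne a}R^M_b$ and use \cref{assm:large-coalition-less-regret} twice. First, $R^M_a\le R_a$. Second, $R^M_b\le R^{M\setminus\{a\}}_b$ for each $b\ne a$. These give $v(M)\ge v(\{a\}) + v(M\setminus\{a\})$, with the two regret comparisons as the only possible sources of slack. The dummy equality forces the total slack to be zero, and since each term is nonnegative, each is zero. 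In particular $R^M_a=R_a$, so $p_a=v(\{a\})$, as the axiom requires.

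For \emph{symmetry}, suppose $a,b$ satisfy $v(S\cup\{a\})=v(S\cup\{b\})$ for all $S\not\ni a,b$. I need $R^M_a=R^M_b$, and this is where \cref{assm:samples-pooled} enters; it is the main obstacle. Value-function symmetry alone is not enough in general. The plan is therefore to read symmetry as also covering the primitives: swapping $a$ and $b$ (their action-set sequences $X_{a,\cdot}\leftrightarrow X_{b,\cdot}$) leaves the instance invariant. Under \cref{assm:samples-pooled}, each agent's decision at time $t$ depends only on its own history, the anonymized pool of the others, and its current action set. Given identical action sets, agent $a$ facing $(P^a_{t-1},P^{-a}_{t-1})$ and agent $b$ facing the correspondingly swapped data are in the same position. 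The argument is a coupling: exchange the noise sequences of $a$ and $b$, and show by induction on $t$ that the joint law of the coalition history is invariant under relabeling $a\leftrightarrow b$. The induction step uses that the multiset pool $P^{-a}$ in the swapped world equals $P^{-b}$ in the original, because identities are discarded. Invariance of the law gives $\mathbb{E}\inprod{\theta^*}{x_{a,t}}=\mathbb{E}\inprod{\theta^*}{x_{b,t}}$, and with $x^*_{a,t}=x^*_{b,t}$ this yields $R^M_a=R^M_b$ and hence $p_a=p_b$.

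If the paper's symmetry definition is purely in terms of $v$, I would state explicitly that symmetric players are taken to be interchangeable in the instance, and note that this is the natural reading in this model.
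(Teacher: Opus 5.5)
Your efficiency, core and dummy-player arguments are the paper's. The paper derives $R_a=R^{S\cup\{a\}}_a$ for every $S\not\ni a$ by the same nonnegative-slack argument and then sets $S=\agents\setminus\{a\}$; you specialize first.

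The real difference is symmetry.
\begin{itemize}
\item \textbf{Paper's route.} It proves the anonymity form (\cref{axio:shap-sym}). For an arbitrary permutation $\pi$, with every agent keeping its own action-set sequence, it shows $R^{\agents}_{a_i}=R^{\tilde{\agents}}_{\tilde{a}_{\pi(i)}}$. The argument is an induction on $t$ showing the joint law of all histories is invariant under relabeling. This holds because each agent's own history, the anonymized pool of the others, and its action set are all preserved (\cref{assm:samples-pooled}).
\item \textbf{Your route.} You aim at equal treatment of equals. You rightly note that the value-based hypothesis of \cref{axio:shap-sym-eqtoeq} is too weak: for $\agents=\{a,b\}$ it only says $R_a=R_b$, nothing about how the grand-coalition regret splits. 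You replace it by $X_{a,\cdot}=X_{b,\cdot}$ and prove $R^{\agents}_a=R^{\agents}_b$ by a noise-swap coupling.
\end{itemize}

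The engine is the same induction. Your statement is exactly the case $\pi=(a\ b)$ of the paper's, since swapping two agents with identical action sets leaves the labeled instance unchanged.

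What each buys:
\begin{itemize}
\item The paper's version needs no equal-action-set hypothesis. Reading $p$ on the relabeled game off the relabeled instance, it yields $p_i(v)=p_{\pi(i)}(\pi v)$, which is the axiom the theorem actually claims.
\item Yours gives the more substantive ``equal primitives, equal payout'' guarantee. It also explains why a purely value-based symmetry cannot hold for $p$.
\end{itemize}

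To match the theorem as intended, run your induction with a general $\pi$ and arbitrary action sets; no new idea is needed. Note that both arguments tacitly require the decision rule in \cref{assm:samples-pooled}, including any internal randomization, to be the same map for every agent.
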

Deferring the formal proof to \cref{appn-sec:hetero-proofs}, we provide a proof sketch here.
For brevity, we write $v$ and $R^C_a$ to denote $v_{\alg,I,T}$ and $R^C_a(\alg,I,T)$, respectively, for any coalition $C \subseteq M$.

\begin{proof}[Proof sketch]
First, the payout satisfies the efficiency axiom by design; the value of the grand coalition $ -\sum_{a \in \agents} R^M_a$ is exhaustively divided among the agents.
Building upon the efficiency, we have for any coalition $S \subseteq \agents$ that
\begin{align}
\sum_{a \in S} p_a = - \sum_{a \in S} R^M_a \labelrel{\geq}{i:2aa} - \sum_{a \in S} R^S_a = v(S)
\end{align}
% $ \sum_{a \in S} p_a = - \sum_{a \in S} R^M_a \labelrel{\geq}{i:2aa} - \sum_{a \in S} R^S_a = v(S)$,
where \eqref{i:2aa} is due to \cref{assm:large-coalition-less-regret}.
This shows that no coalition `blocks' the payout $p$ and that it belongs to the core.
The dummy-player axiom can also be similarly shown by building from the definition of a dummy player.

The symmetry axiom mandates that the payout of an agent---in our context, the regret of an agent $a$ in the grand coalition---doesn't change when all the agents are relabeled. 
% We shall establish this by showing that the distribution over trajectories of bandit play remains unaltered when agents are relabeled.
We shall rely on \cref{assm:samples-pooled} and argue that the learning algorithm removes all the agent-specific information during the union operation to pool the data, and it is this pool of anonymized data that is used to determine the actions to play. 
Conditioned on this pool, the action choice only depends on the agent's action set and not the agent's identity.
As a result, the distribution of bandit play trajectories remains unchanged under relabeling of agents, and thus the regret, and by extension, the payout $p_a = -R^{\agents}_a$ remains unchanged under relabeling of agents.
\end{proof}

While we have shown that the payout $p$ obeys three of four Shapley axioms, we believe it cannot satisfy the final axiom in general.
In \cref{appn-subsec:on-linearity}, we give some thoughts on why this payout $p$ can not satisfy the final Shapley axiom of Linearity.
\paragraph{Comparison with actual Shapley value.}
Achieving a payout profile $p^*$
that coincides exactly with the Shapley value is often impractical for two key reasons. 
First, computing the Shapley value has exponential complexity in the number of agents, making it infeasible beyond small-scale settings. 
Second, even for a modest number of agents, its computation requires access to each agent’s regret under \emph{all} possible coalitions in order to evaluate 
$v$ in \cref{eqn:shap-value-definition}.
These regret quantities are counterfactual and generally not directly observable. 

In that context, \cref{thm:shapley-axioms} shows, perhaps surprisingly, that the payout profile 
$p$ satisfies three of the four Shapley axioms without requiring any explicit computation or any redistribution of regret at the end of bandit play. 
In particular, the allocation 
$p_a = -R^M_a$, i.e., 
the regret incurred by agent $a$ when participating in the grand coalition,
emerges naturally as the payoff when the grand coalition forms.

\section{Numerical Simulations}
\label{sec:experiments}

\begin{figure*}[ht]
    \centering
    \begin{subfigure}{.49\linewidth}
        \centering
        \includegraphics[width=\linewidth]{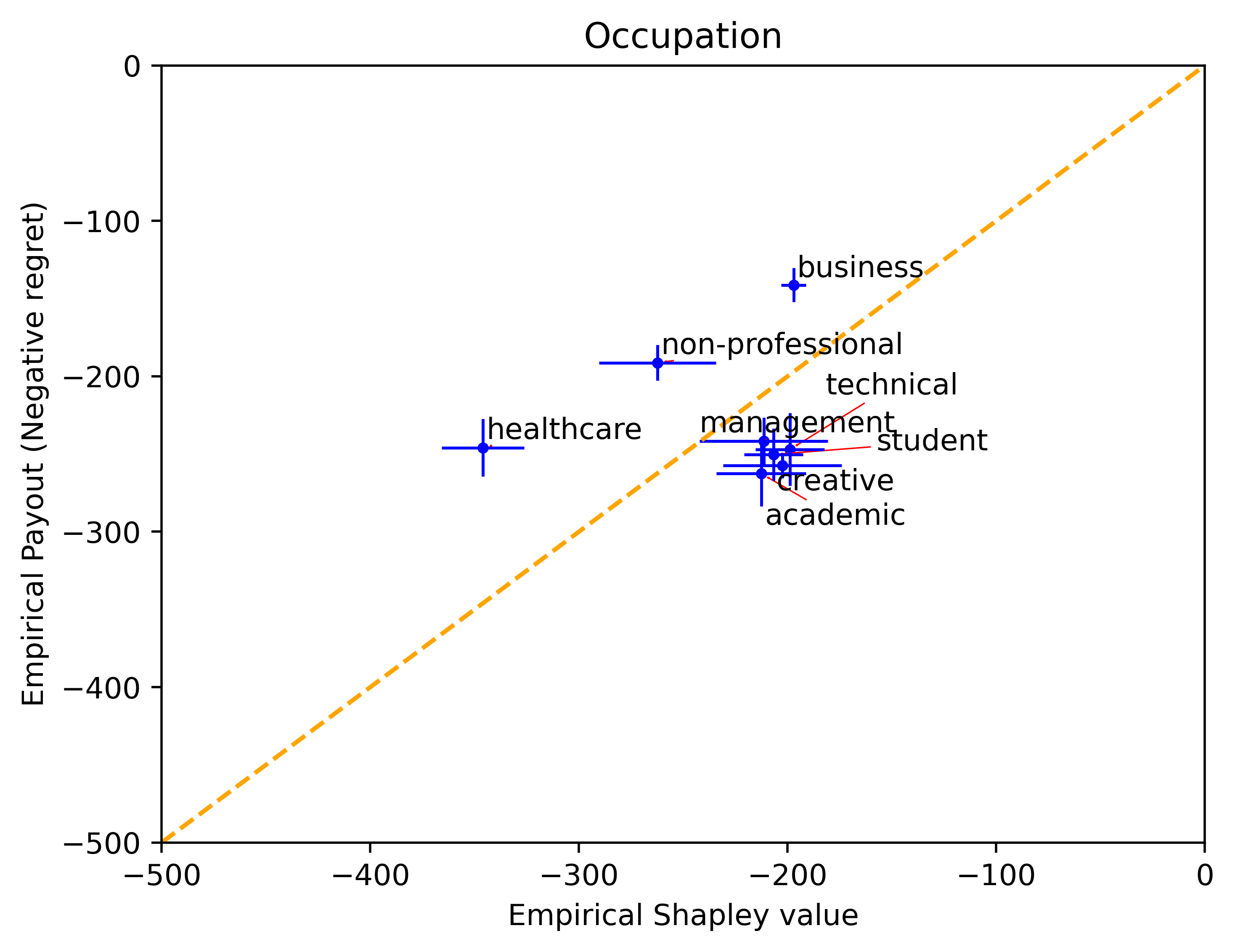}
        \caption{$(\iocc, \mlu)$ }\label{fig:ml-occu}
    \end{subfigure}%
    \begin{subfigure}{.49\linewidth}
        \centering
        \includegraphics[width=\linewidth]{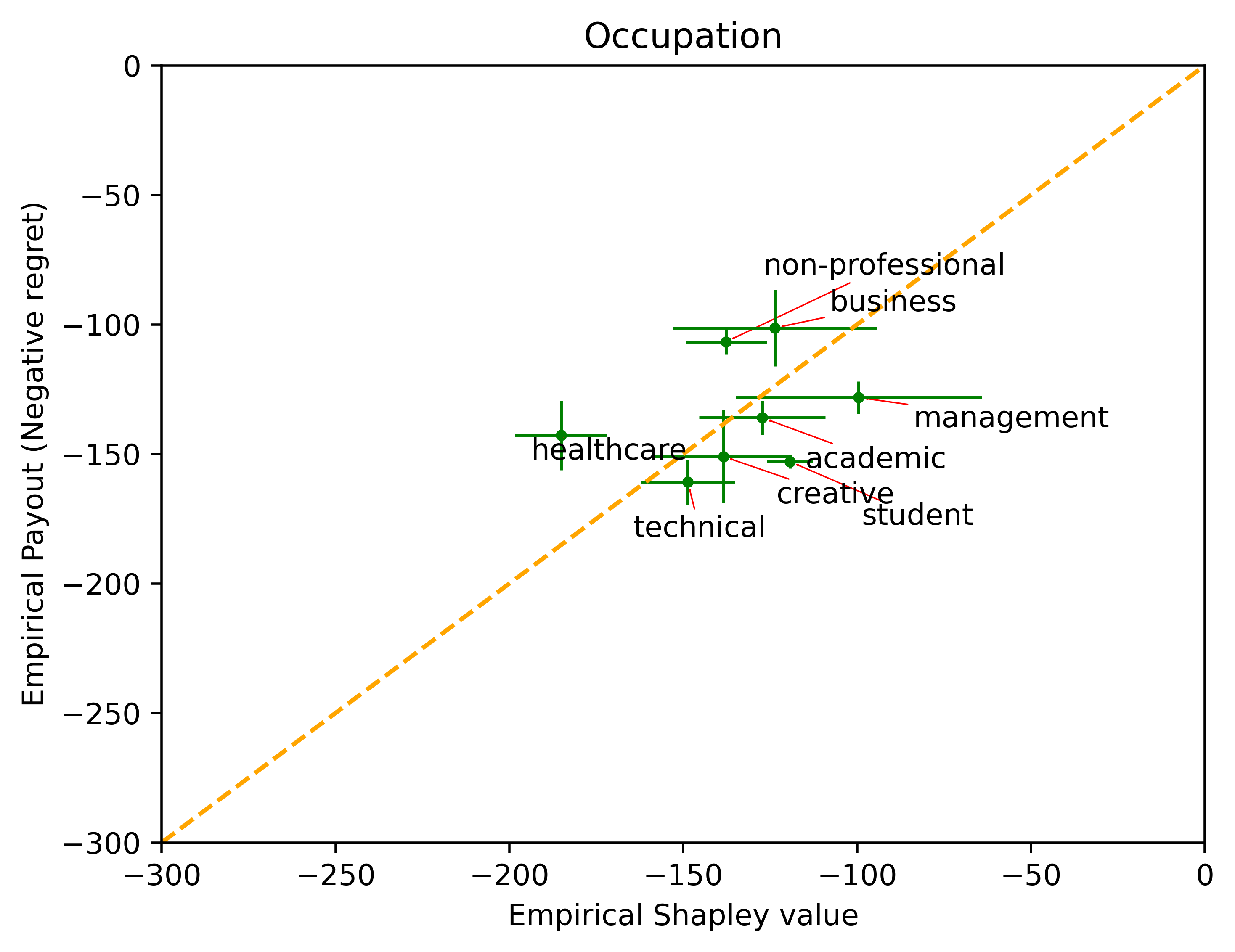}
        \caption{$(\iocc, \greedy)$ } \label{fig:ml-occu-greedy}
    \end{subfigure}
    
    \begin{subfigure}{.49\linewidth}
        \centering
        \includegraphics[width=\linewidth]{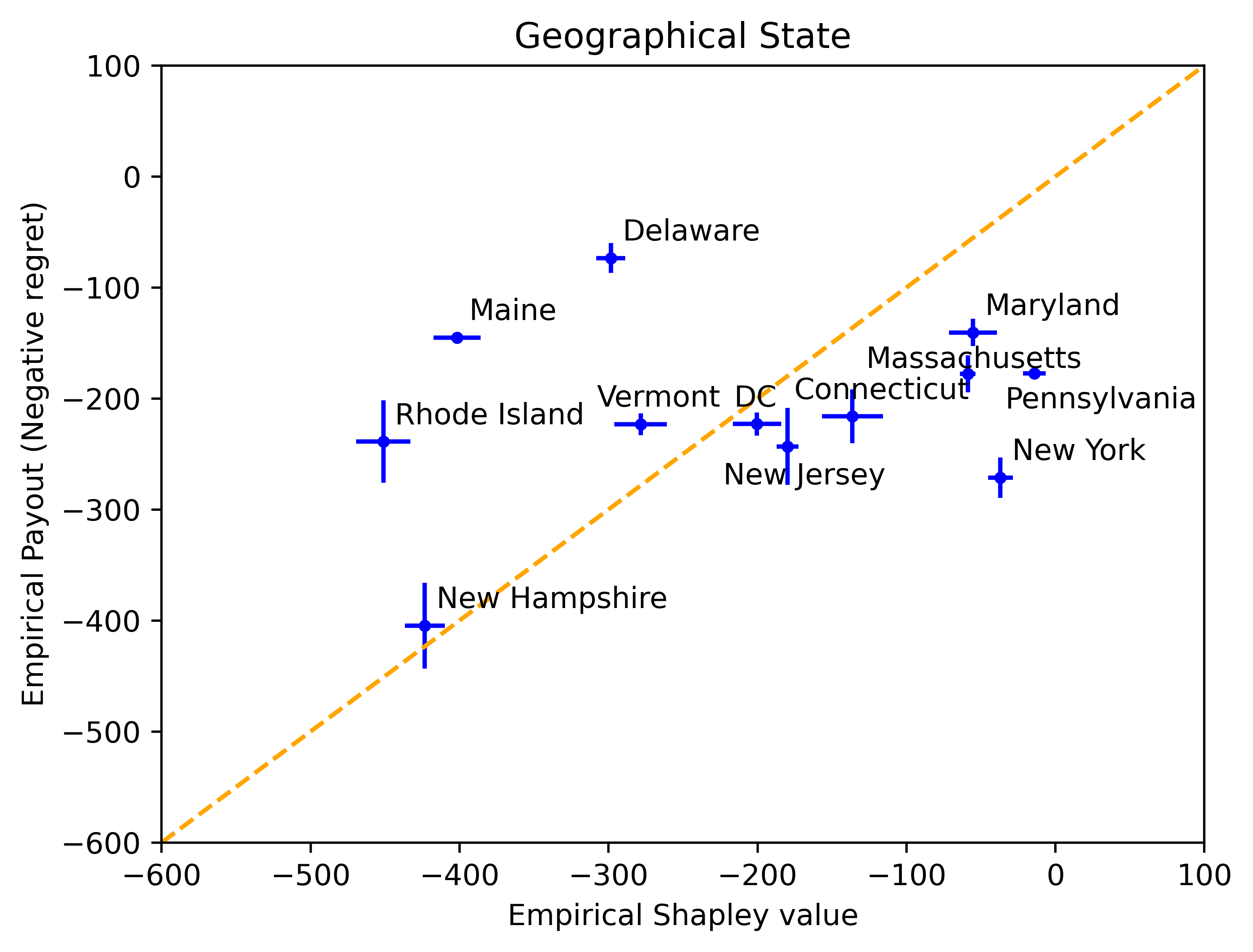}
        \caption{$(\iloc , \mlu)$}\label{fig:ml-states}
    \end{subfigure}
    \begin{subfigure}{.49\linewidth}
        \centering
        \includegraphics[width=\linewidth]{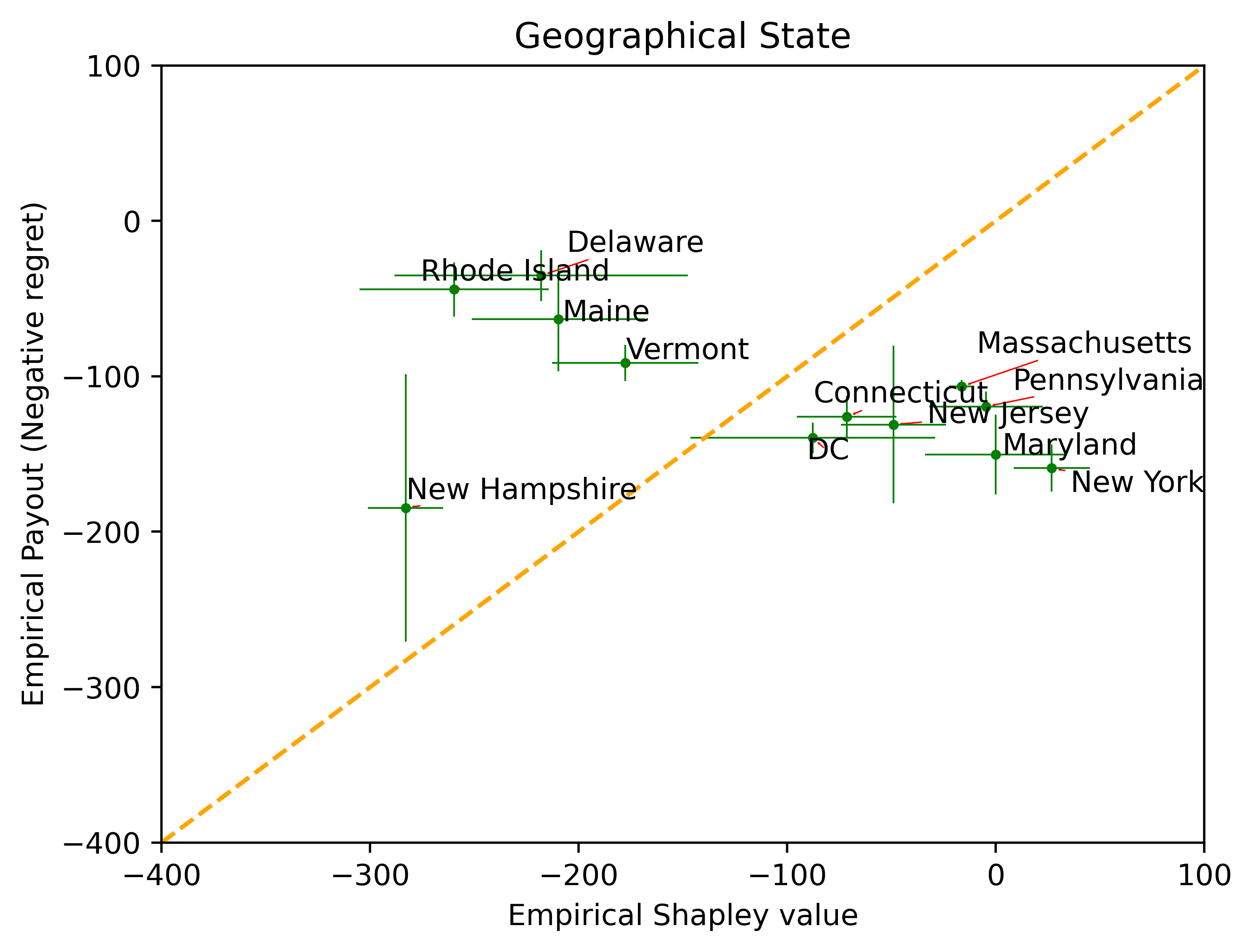}
        \caption{$(\iloc, \greedy)$}\label{fig:ml-states-greedy}
    \end{subfigure}
    \caption{\textbf{Movielens experiments}}
    \label{fig:movielesn-occu-state}
\end{figure*}

We showed in \cref{thm:shapley-axioms} that under some assumptions, 
the payout structure $p=(p_a = -R^\agents_a(\alg,I,T))_{a \in \agents}$ obeys Shapley's axioms of dummy-player, symmetry, and efficiency, but may not obey linearity.
% Rama: belongs to core as well. verify that from experiments?
We run numerical simulations to check how much the payout of an agent and his Shapley value align empirically on experiments using the MovieLens dataset below, and present some results on some synthetic instances in \cref{appn-subsec:synth-exp}.

\paragraph{Bandit environment setup.} 

We consider the MovieLens-100k dataset~\citep{HarperKo15}, motivated by a real-world movie recommendation platform. 
% where users are recommended movies, wherein the recommendation depends on movie ratings given in the past by users on the platform.
% Here, the ratings/data from some users (or groups of users) might contribute more (or less) than others' in improving the quality of recommendations.
% RAMA - COLLAB - The above line isn't conveying the crux relevance.
We create multiple bandit instances 
$\inst' = \{\igen, \iage, \iloc, \iocc \} $ 
by creating multiple sets of agents by dividing the set of users along the lines of attributes such as gender, age group, geographic location, and occupation. 
For each bandit instance, characterized by the attribute, an agent corresponds to the set of users who have a specific value for this attribute.
% RAMA: Clarify about grouping values into same agent, skipping some values altogether in State etc.

We consider two algorithms: (i) \mlu (Multi-agent LinUCB)---all agents run an independent copy of the LinUCB algorithm \citep{abbasi2011improved} with all data available so far in the coalition ; (ii) \greedy---all agents choose actions myopically so as to maximize their instantaneous expected reward w.r.t. the current parameter estimate (by Ordinary Least Squares) using all data available so far in the coalition.

We experimentally simulate the multi-agent bandit run for all four instances $\inst'$ with both the algorithms. Implementation details are given in \cref{appn-subsec:notes-on-implementation}.
We present the results for  $\iloc, \iocc$
% classification by occupation and geographic state 
in \cref{fig:movielesn-occu-state}, and defer the other plots to \cref{appn-subsec:ml-exp}.
% Introduce 'bandit scenario' here? The secion 'On Linearity axiom' does use it.

\subsection{Experiment Outcomes.}
In each bandit scenario, with say $M$ agents in the instance, 
we run the multi-agent bandit algorithm for all $2^M -1$ possible coalitions to get the regret of every agent in every coalition. 
And then, we explicitly compute the empirical Shapley value of the agents $\widehat{\phi}=(\widehat{\phi}_a)_{a \in \agents}$ from all the coalitional regrets (see \cref{eqn:shap-value-definition}).
And we scatter-plot for every agent $a \in \agents$ his empirical payout $\widehat{p}_a$ (in y-axis), which is the negative of the regret from the grand coalition as in \cref{eqn:gcr-payout}, against his empirical Shapley value $\widehat{\phi}_a$ (in x-axis). The results are discussed below.

\paragraph{Payouts vs Shapley value.}
We observe diverse
% contradicting
outcomes across the problem instances. 
In the $\iocc$ instance, the payouts and shapley value are reasonably close to each other (close to the orange identity line) for almost all agents, with both the algorithms (\cref{fig:ml-occu,fig:ml-occu-greedy}). 
This indicates that the regret incurred by an agent is `fair' (in a Shapley sense) and is a commensurate compensation for the value he brings to the other agents.
Further, the Shapley values and payouts show a positive correlation across the agents, more prominently so with the greedy algorithm.

For the $\iloc$ instance, for some agents, the payouts and shapley values are close to each other, but we observe a greater disparity in them for some agents (\cref{fig:ml-states,fig:ml-states-greedy}).
The agents far above (or to the left of) the orange identity line are seen to be have much higher payout compared to their Shapley value. It is interpreted that these agents benefit more from the other agents than they contribute to them.
Correspondingly, the agents far from the identity line on the bottom-right side contribute to the other agents much more than they benefit from other agents.
This mismatch highlights that the commonly used reward structure based on user engagement/satisfaction in recommender platforms does not equitably compensate some agents for their contributions to the platform, which are predominantly users from New York and Pennsylvania with $\mlu$ and New York and Maryland with $\greedy$, in this example.
Additionally, however, we recall that these payouts are \emph{not} the actual Shapley values (more discussion on this in \cref{appn-subsec:on-linearity}) and in general are not expected to equal the empirical Shapley values either.

\paragraph{Usefulness of Greedy over LinUCB.} 
Another intersting outcome we see is that the greedy algorithm performs better in minimizing regret than the much acclaimed LinUCB as seen in both $\iloc$ and $\iocc$ instances.
Conventionally, it is known that the delicate exploration-exploitation trade-off handled by LinUCB gives optimal regret guarantees, whereas greedy exploration in general is known to suffer worse regret.
However, the agent actions (even greedily chosen ones among the available options) can be diverse enough in their vector directions in $\Real^{d}$ due to the inherent heterogeneity of the agent action sets both across agents and across time.
This can result in non-deliberate/inadvertent exploration of different dimensions even when the agent is greedily trying to play actions in a specific direction that it currently thinks is optimal.
This result backs up a recent line of work \citep{kannan2018smoothed,raghavan2018externalities} where it is shown that with sufficient heterogeniety in action sets, the greedy algorithm surprisingly achieves non-trivial regret bounds.
Further, our experiments show that the greedy algorithm---in addition to helping reduce one's regret---also helps in reducing regrets of other agents better than LinUCB does, as seen from the higher Shapley values of agents following the greedy algorithm, in both the instances, $\iloc$ and $\iocc$.

\vspace{-2pt}
\section{Conclusion}
\vspace{-2pt}
In this paper, motivated by creator incentives in Recommender Systems, we considered the setting of multi-agent bandits with both fixed action set and heterogeneous action sets, and investigated the properties of the ensuing TU (transferable utility) coalition games.
We explored different bandit algorithm properties, under which we showed that the game with fixed action set is provably convex and thus has a non-empty core containing the Shapley value; and the game with heterogeneous action sets has a non-empty core but may not contain the Shapley value.
Further, we proposed a simple payout profile---where each agent's payout is his (negative) regret in the grand coalition---and we established this payout has desireable properties such as belonging to the core, and obeying Shapley axioms of efficiency, dummy-player, and symmetry.
A key contribution of our work is to show that these mild and natural assumptions are sufficient to guarantee such strong cooperative-game properties for such a simple payout.
An interesting research direction is to investigate whether more sophisticated payout schemes---particularly those that allow transfers of reward or utility at the end of bandit play, a setting well aligned with recommender systems—--can yield alternative notions of fairness.

\section*{Acknowledgements}
Arpit Agarwal was partially supported by an early career research grant awarded by ANRF.

\bibliographystyle{abbrvnat}
\bibliography{references/coll-learn,references/coop-references,references/unclass,references/math,references/coop-zotero-references}

@book{lattimore2020bandit,
  title={Bandit algorithms},
  author={Lattimore, Tor and Szepesv{\'a}ri, Csaba},
  year={2020},
  publisher={Cambridge University Press}
}

@inproceedings{wang2019distributed,
  title={Distributed Bandit Learning: Near-Optimal Regret with Efficient Communication},
  author={Wang, Yuanhao and Hu, Jiachen and Chen, Xiaoyu and Wang, Liwei},
  booktitle={International Conference on Learning Representations},
  year={2019},
  annot={This paper studies `Regret Minimization' in Multi-agent (collaborative) setup of MAB and Linear Bandits. Compared to Hillel '13 and Tao '19 who both look at BAI, this work differs as follows.
  \begin{enumerate}
    \item Communication is measured in terms of rounds in them. It is measured in terms of cost (i.e., number of integers/reals communicated between server and agents) of communication.
    \item ...
  \end{enumerate}
  The notations are $K$ arms and $M$ agents.
  The bench-mark performance is `Immediate sharing' (in other words, unbounded communication which here is $M^2. T$ cost for all peer-peer communication after every time-step) which gives a regret bound of $O(\sqrt{MKT \log T})$. 
  The paper's DEMAB algo achieves the same regret bound with a reduced $O(M \log (MK))$ communication cost. ( It's $\log(MK/\delta)$ with prob $1-\delta$.)
  As a lower bound, the paper shows that if the communication cost is reduced/constrained slightly more to $O(M)$, then the regret steeply worsens to $\Omega(M \sqrt{KT})$. 
         
  \paragraph{DEMAB algo sketch} Happens in two stages. In stage 1, every agent runs elimination algorithm \cite{} (say $X$) on \emph{all} arms for $D=T/MK$ time. This shall incur $O(M \sqrt{DK \log D}) = O(\sqrt{MT \log T})$ time. This stage is to quickly get rid of very bad arms. Say each agent $i$ is left with arm $A(i)$. Idea is good arms will be in many $A(i)$s.
  In stage 2, each agent trims his $A(i)$ to $B(i)$ such that they are disjoint while ensuring an arm that was in all $A(i)$s is retained in $B(i)$.
  In each phase $\ell$, agents run $X$ again on $B(i)_\ell$, and share empirical mean of best arm at its end. From the global knowledge, each agent drops arms that are $2^{-\ell}$-suboptimal. As gaps become smaller, the phases grow longer ($4^{\ell}$ order).
  To prevent skewed $B(i)_\ell$s (across agents), there's a load balancing step. This step has a high information overhead when done, so in expectation, its done very few times thanks to the `shared randomness' that the algo uses to create $B(i)$s initially. \\
  Similar results and algorithms are there for the Linear Bandits setup with both fixed arm-set and time-varying arm-set. 
  For the time-varying arm-set, the take Abbasi-Yadkori's algorithm as the bench-mark/basis.
 }
}

@article{zhang2021multi,
  title={Multi-agent reinforcement learning: A selective overview of theories and algorithms},
  author={Zhang, Kaiqing and Yang, Zhuoran and Ba{\c{s}}ar, Tamer},
  journal={Handbook of reinforcement learning and control},
  pages={321--384},
  year={2021},
  publisher={Springer}
}

@article{kairouz2021advances,
  title={Advances and open problems in federated learning},
  author={Kairouz, Peter and McMahan, H Brendan and Avent, Brendan and Bellet, Aur{\'e}lien and Bennis, Mehdi and Bhagoji, Arjun Nitin and Bonawitz, Kallista and Charles, Zachary and Cormode, Graham and Cummings, Rachel and others},
  journal={Foundations and trends{\textregistered} in machine learning},
  volume={14},
  number={1--2},
  pages={1--210},
  year={2021},
  publisher={Now Publishers, Inc.}
}

@article{qian2024digital,
  title={Digital content creation: An analysis of the impact of recommendation systems},
  author={Qian, Kun and Jain, Sanjay},
  journal={Management Science},
  volume={70},
  number={12},
  pages={8668--8684},
  year={2024},
  publisher={INFORMS}
}

@inproceedings{hronmodeling,
  title={Modeling content creator incentives on algorithm-curated platforms},
  author={Hron, Jiri and Krauth, Karl and Jordan, Michael and Kilbertus, Niki and Dean, Sarah},
  booktitle={The Eleventh International Conference on Learning Representations},
  year={2023}
}

@article{zhu2023online,
  title={Online learning in a creator economy},
  author={Zhu, Banghua and Karimireddy, Sai Praneeth and Jiao, Jiantao and Jordan, Michael I},
  journal={arXiv preprint arXiv:2305.11381},
  year={2023}
}

@article{ko2022survey,
  title={A survey of recommendation systems: recommendation models, techniques, and application fields},
  author={Ko, Hyeyoung and Lee, Suyeon and Park, Yoonseo and Choi, Anna},
  journal={Electronics},
  volume={11},
  number={1},
  pages={141},
  year={2022},
  publisher={MDPI}
}

@article{zhang2019deep,
  title={Deep learning based recommender system: A survey and new perspectives},
  author={Zhang, Shuai and Yao, Lina and Sun, Aixin and Tay, Yi},
  journal={ACM computing surveys (CSUR)},
  volume={52},
  number={1},
  pages={1--38},
  year={2019},
  publisher={ACM New York, NY, USA}
}

@article{shapley1953value,
  title={A value for n-person games},
  author={Shapley, Lloyd S and others},
  year={1953},
  publisher={Princeton University Press Princeton}
}

@article{bondareva1963some,
  title={Some applications of linear programming methods to the theory of cooperative games},
  author={Bondareva, Olga N},
  journal={Problemy Kibernet},
  volume={10},
  pages={119},
  year={1963}
}

@book{roth1988shapley,
  title={The Shapley value: essays in honor of Lloyd S. Shapley},
  author={Roth, Alvin E},
  year={1988},
  publisher={Cambridge University Press}
}

@article{shapley1967balanced,
  title={On balanced sets and cores},
  author={Shapley, Lloyd S},
  journal={Naval research logistics quarterly},
  volume={14},
  number={4},
  pages={453--460},
  year={1967},
  publisher={Wiley Online Library}
}

@article{shapley1971cores,
  title={Cores of convex games},
  author={Shapley, Lloyd S},
  journal={International journal of game theory},
  volume={1},
  pages={11--26},
  year={1971},
  publisher={Springer}
}

@book{osborne1994course,
  title={A course in game theory},
  author={Osborne, Martin J and Rubinstein, Ariel},
  year={1994},
  publisher={MIT press}
}

@article{howson2024quack,
  title={QuACK: A Multipurpose Queuing Algorithm for Cooperative $ k $-Armed Bandits},
  author={Howson, Benjamin and Filippi, Sarah and Pike-Burke, Ciara},
  journal={arXiv preprint arXiv:2410.23867},
  year={2024}
}

@article{baek2021fair,
  title={Fair exploration via axiomatic bargaining},
  author={Baek, Jackie and Farias, Vivek},
  journal={Advances in Neural Information Processing Systems},
  volume={34},
  pages={22034--22045},
  year={2021}
}

@inproceedings{yang2023cooperative,
  title={Cooperative Multi-agent Bandits: Distributed Algorithms with Optimal Individual Regret and Communication Costs},
  author={Yang, Lin and Wang, Xuchuang and Hajiesmaili, Mohammad and Zhang, Lijun and Lui, John CS and Towsley, Don},
  booktitle={Coordination and Cooperation for Multi-Agent Reinforcement Learning Methods Workshop},
  year={2023}
}

@inproceedings{wang2023achieving,
  title={Achieving near-optimal individual regret low communications in multi-agent bandits},
  author={Wang, Xuchuang and Yang, Lin},
  booktitle={The Eleventh International Conference on Learning Representations (ICLR)},
  year={2023}
}

@inproceedings{raghavan2018externalities,
  title={The externalities of exploration and how data diversity helps exploitation},
  author={Raghavan, Manish and Slivkins, Aleksandrs and Wortman, Jennifer Vaughan and Wu, Zhiwei Steven},
  booktitle={Conference on Learning Theory},
  pages={1724--1738},
  year={2018},
  organization={PMLR}
}

@article{kannan2018smoothed,
  title={A smoothed analysis of the greedy algorithm for the linear contextual bandit problem},
  author={Kannan, Sampath and Morgenstern, Jamie H and Roth, Aaron and Waggoner, Bo and Wu, Zhiwei Steven},
  journal={Advances in neural information processing systems},
  volume={31},
  year={2018}
}

@inproceedings{wang2023exploration,
  title={Exploration for free: how does reward heterogeneity improve regret in cooperative multi-agent bandits?},
  author={Wang, Xuchuang and Yang, Lin and Chen, Yu-Zhen Janice and Liu, Xutong and Hajiesmaili, Mohammad and Towsley, Don and Lui, John CS},
  booktitle={Uncertainty in Artificial Intelligence},
  pages={2192--2202},
  year={2023},
  organization={PMLR}
}

@inproceedings{ghorbani2019data,
  title={Data shapley: Equitable valuation of data for machine learning},
  author={Ghorbani, Amirata and Zou, James},
  booktitle={International conference on machine learning},
  pages={2242--2251},
  year={2019},
  organization={PMLR}
}

@inproceedings{lundberg2017unified,
  title={A unified approach to interpreting model predictions},
  author={Lundberg, Scott M and Lee, Su-In},
  booktitle={Advances in Neural Information Processing Systems},
  pages={4765--4774},
  year={2017}
}

@inproceedings{jia2019towards,
  title={Towards efficient data valuation based on the shapley value},
  author={Jia, Ruoxi and Dao, David and Wang, Boxin and Hubis, Frances Ann and Hynes, Nick and G{\"u}rel, Nezihe Merve and Li, Bo and Zhang, Ce and Song, Dawn and Spanos, Costas J},
  booktitle={The 22nd International Conference on Artificial Intelligence and Statistics},
  pages={1167--1176},
  year={2019},
  organization={PMLR}
}

@book{mann1960values,
  title={Values of large games, IV: Evaluating the electoral college by Montecarlo techniques},
  author={Mann, Irwin and Shapley, Lloyd S},
  year={1960},
  publisher={Rand Corporation},
}

@article{musco2024provably,
  title={Provably Accurate Shapley Value Estimation via Leverage Score Sampling},
  author={Musco, Christopher and Witter, R Teal},
  journal={arXiv preprint arXiv:2410.01917},
  year={2024}
}

@inproceedings{chessa2017cooperative,
  title={A cooperative game-theoretic approach to quantify the value of personal data in networks},
  author={Chessa, Michela and Loiseau, Patrick},
  booktitle={Proceedings of the 12th workshop on the Economics of Networks, Systems and Computation},
  pages={1--1},
  year={2017}
}

@inproceedings{kleinberg2001value,
  title={On the value of private information},
  author={Kleinberg, Jon and Papadimitriou, Christos H and Raghavan, Prabhakar},
  booktitle={Proceedings of the 8th Conference on Theoretical Aspects of Rationality and Knowledge},
  pages={249--257},
  year={2001}
}

@article{narayanam2010shapley,
  title={A shapley value-based approach to discover influential nodes in social networks},
  author={Narayanam, Ramasuri and Narahari, Yadati},
  journal={IEEE transactions on automation science and engineering},
  volume={8},
  number={1},
  pages={130--147},
  year={2010},
  publisher={IEEE}
}

@article{auer2002finite,
  title={Finite-time analysis of the multiarmed bandit problem},
  author={Auer, Peter and Cesa-Bianchi, Nicolo and Fischer, Paul},
  journal={Machine learning},
  volume={47},
  pages={235--256},
  year={2002},
  publisher={Springer}
}

@inproceedings{kaufmann2012thompson,
  title={Thompson sampling: An asymptotically optimal finite-time analysis},
  author={Kaufmann, Emilie and Korda, Nathaniel and Munos, R{\'e}mi},
  booktitle={International conference on algorithmic learning theory},
  pages={199--213},
  year={2012},
  organization={Springer}
}

@article{abbasi2011improved,
  title={Improved algorithms for linear stochastic bandits},
  author={Abbasi-Yadkori, Yasin and P{\'a}l, D{\'a}vid and Szepesv{\'a}ri, Csaba},
  journal={Advances in neural information processing systems},
  volume={24},
  year={2011}
}

@article{lai1985asymptotically,
  title={Asymptotically efficient adaptive allocation rules},
  author={Robbins, Herbert},
  journal={Advances in applied mathematics},
  volume={6},
  number={1},
  pages={4--22},
  year={1985},
  publisher={Academic Press}
}

@article{russo2018tutorial,
  title={A tutorial on thompson sampling},
  author={Russo, Daniel J and Van Roy, Benjamin and Kazerouni, Abbas and Osband, Ian and Wen, Zheng and others},
  journal={Foundations and Trends{\textregistered} in Machine Learning},
  volume={11},
  number={1},
  pages={1--96},
  year={2018},
  publisher={Now Publishers, Inc.}
}

@inproceedings{garivier2011kl,
  title={The KL-UCB algorithm for bounded stochastic bandits and beyond},
  author={Garivier, Aur{\'e}lien and Capp{\'e}, Olivier},
  booktitle={Proceedings of the 24th annual conference on learning theory},
  pages={359--376},
  year={2011},
  organization={JMLR Workshop and Conference Proceedings}
}

@article{chapelle2011empirical,
  title={An empirical evaluation of thompson sampling},
  author={Chapelle, Olivier and Li, Lihong},
  journal={Advances in neural information processing systems},
  volume={24},
  year={2011}
}

@inproceedings{ramakrishnan2024collaborative,
  title={Collaborative Learning under Strategic Behavior: Mechanisms for Eliciting Feedback in Principal-Agent Bandit Games},
  author={Ramakrishnan, K and Agarwal, Arpit and Subramanian, Lakshminarayanan and Nickel, Maximilian},
  booktitle={Agentic Markets Workshop at ICML 2024},
  year={2024}
}

@article{bastani2017exploiting,
  title={Exploiting the natural exploration in contextual bandits},
  author={Bastani, Hamsa and Bayati, Mohsen and Khosravi, Khashayar},
  journal={arXiv preprint arXiv:1704.09011},
  year={2017},
  publisher={CoRR}
}

@article{wu_agent_2019,
    title = {Agent, {Gatekeeper}, {Drug} {Dealer}: {How} {Content} {Creators} {Craft} {Algorithmic} {Personas}},
    volume = {3},
    issn = {2573-0142},
    shorttitle = {Agent, {Gatekeeper}, {Drug} {Dealer}},
    url = {https://dl.acm.org/doi/10.1145/3359321},
    doi = {10.1145/3359321},
    language = {en},
    number = {CSCW},
    urldate = {2025-11-28},
    journal = {Proceedings of the ACM on Human-Computer Interaction},
    author = {Wu, Eva Yiwei and Pedersen, Emily and Salehi, Niloufar},
    month = nov,
    year = {2019},
    pages = {1--27},
}

@inproceedings{yao_how_2023,
    title = {How {Bad} is {Top}-\${K}\$ {Recommendation} under {Competing} {Content} {Creators}?},
    url = {https://proceedings.mlr.press/v202/yao23b.html},
    language = {en},
    urldate = {2026-02-23},
    booktitle = {Proceedings of the 40th {International} {Conference} on {Machine} {Learning}},
    publisher = {PMLR},
    author = {Yao, Fan and Li, Chuanhao and Nekipelov, Denis and Wang, Hongning and Xu, Haifeng},
    month = jul,
    year = {2023},
    note = {ISSN: 2640-3498},
    pages = {39674--39701},
}

@inproceedings{yu_beyond_2025,
    title = {Beyond {Self}-{Interest}: {How} {Group} {Strategies} {Reshape} {Content} {Creation} in {Recommendation} {Platforms}?},
    shorttitle = {Beyond {Self}-{Interest}},
    url = {https://openreview.net/forum?id=q0JaH6Ukqb&noteId=1tb6U3J4pX},
    language = {en},
    urldate = {2026-02-19},
    author = {Yu, Yaolong and Yao, Fan and Pan, Sinno Jialin},
    month = jun,
    year = {2025},
}

@inproceedings{fedorova_altruistic_2025,
    title = {Altruistic {Collective} {Action} in {Recommender} {Systems}},
    url = {https://openreview.net/forum?id=wKSX4rqRSg},
    language = {en},
    urldate = {2026-02-19},
    author = {Fedorova, Ekaterina and Kitch, Madeline Celi and Podimata, Chara},
    month = nov,
    year = {2025},
}

@inproceedings{ben-porat_game-theoretic_2018,
    address = {Red Hook, NY, USA},
    series = {{NIPS}'18},
    title = {A game-theoretic approach to recommendation systems with strategic content providers},
    url = {https://dl.acm.org/doi/10.5555/3326943.3327046},
    urldate = {2026-02-22},
    booktitle = {Proceedings of the 32nd {International} {Conference} on {Neural} {Information} {Processing} {Systems}},
    publisher = {Curran Associates Inc.},
    author = {Ben-Porat, Omer and Tennenholtz, Moshe},
    month = dec,
    year = {2018},
    pages = {1118--1128},
}

@inproceedings{jagadeesan_supply-side_2023,
    address = {Red Hook, NY, USA},
    series = {{NIPS} '23},
    title = {Supply-side equilibria in recommender systems},
    urldate = {2026-02-22},
    booktitle = {Proceedings of the 37th {International} {Conference} on {Neural} {Information} {Processing} {Systems}},
    publisher = {Curran Associates Inc.},
    author = {Jagadeesan, Meena and Garg, Nikhil and Steinhardt, Jacob},
    month = dec,
    year = {2023},
    pages = {14597--14608},
}

@article{bolton_strategic_1999,
    title = {Strategic {Experimentation}},
    volume = {67},
    issn = {1468-0262},
    url = {https://onlinelibrary.wiley.com/doi/abs/10.1111/1468-0262.00022},
    doi = {10.1111/1468-0262.00022},
    language = {en},
    number = {2},
    urldate = {2026-02-19},
    journal = {Econometrica},
    author = {Bolton, Patrick and Harris, Christopher},
    year = {1999},
    note = {\_eprint: https://onlinelibrary.wiley.com/doi/pdf/10.1111/1468-0262.00022},
    pages = {349--374},
}

@misc{aridor_competing_2024,
    title = {Competing {Bandits}: {The} {Perils} of {Exploration} {Under} {Competition}},
    shorttitle = {Competing {Bandits}},
    url = {http://arxiv.org/abs/2007.10144},
    doi = {10.48550/arXiv.2007.10144},
    urldate = {2025-11-18},
    publisher = {arXiv},
    author = {Aridor, Guy and Mansour, Yishay and Slivkins, Aleksandrs and Wu, Zhiwei Steven},
    month = oct,
    year = {2024},
    note = {arXiv:2007.10144 [cs]},
}

@book{preparata2012computational,
  title={Computational geometry: an introduction},
  author={Preparata, Franco P and Shamos, Michael I},
  year={2012},
  publisher={Springer Science \& Business Media},
  
}

@article{pearl,
  author       = {Zheqing Zhu and
                  Rodrigo de Salvo Braz and
                  Jalaj Bhandari and
                  Daniel Jiang and
                  Yi Wan and
                  Yonathan Efroni and
                  Liyuan Wang and
                  Ruiyang Xu and
                  Hongbo Guo and
                  Alex Nikulkov and
                  Dmytro Korenkevych and
                  {\"{U}}r{\"{u}}n Dogan and
                  Frank Cheng and
                  Zheng Wu and
                  Wanqiao Xu},
  title        = {Pearl: {A} Production-ready Reinforcement Learning Agent},
  journal      = {CoRR},
  volume       = {abs/2312.03814},
  year         = {2023}
}

@article{HarperKo15,
author = {Harper, F. Maxwell and Konstan, Joseph A.},
title = {{The MovieLens Datasets: History and Context}},
year = {2015},
issue_date = {January 2016},
publisher = {Association for Computing Machinery},
address = {New York, NY, USA},
volume = {5},
number = {4},
issn = {2160-6455},
url = {https://doi.org/10.1145/2827872},
doi = {10.1145/2827872},
journal = {ACM Trans. Interact. Intell. Syst.},
month = {dec},
articleno = {19},
numpages = {19}
}

\newpage
\appendix
\onecolumn
\aistatstitle{
% Instructions for Paper Submissions to AISTATS 2026: \\
Supplementary Materials}

\inappendixtrue
\tableofcontents

\newpage

\appendix

\section{Fundamentals}
\label{sec:fundamentals}

\subsection{Game theoretic aspects}

In this section, we recap some fundamental definitions and results from co-operative game theory that are referred to in the main paper.

\noindent Consider a Transferable Utility (TU) game $(N,v)$, where $N$ is the finite set of players, and $v~:~2^N~\mapsto~\Real$ is the characteristic or value function with $v(\emptyset) = 0$.

\begin{definition}[Convexity of a TU game]
A transferable utility game $(N,v)$ is said to be \emph{convex} if marginal contributions of agents are non-decreasing with coalition growth.
That is, for all players $i \in N$, and for all coalitions $C,D$ such that $C \subseteq D \subseteq N \setminus \{i\}$, it holds that 
\begin{align}
    v(D \cup \{i\}) - v(D) \geq v(C \cup \{i\}) - v(C).
\end{align}
\end{definition}

\begin{definition}[Balanced game]
\label{defn:balanced-game}
For a set of players $N$, a mapping $w : 2^N \mapsto [0,1]$ is said to be \emph{`balancing'} if for every player $a \in N$, it holds that 
$$\sum_{S \subseteq N : a \in S} w(S) = 1.$$

\noindent A transferable utility game $(N,v)$ is said to be \emph{balanced} if for every balancing mapping $w$ it holds that
\begin{align*}
\sum_{S \subseteq N : S \neq \emptyset} w(S) v(S) \leq v(N).
\end{align*}
\end{definition}

\begin{definition}[Core of a TU game]
For a transferable utility game $(N,v)$ with $\cardinality{N}=n$, it's core is defined as the set of payout profiles that are feasible and can not be improved upon by any coalition. Precisely,
\begin{align}
    \text{Core}(N,v) := \left\{ p = (p_1, \dots, p_n) \in \Real^n : 
    \sum_{i=1}^n p_i = v(N)
    ; \forall C \subseteq N, \sum_{i \in C} p_i \geq v(C)  \right\}.
\end{align}
\end{definition}

In general, the core of a game is not guaranteed to be non-empty. A necesary and sufficient for it is given next:
\begin{theorem}[\citep{bondareva1963some, shapley1967balanced}]
\label{thm:bondareva-shapley}
A transferable utility game $(N,v)$ has a non-empty core if and only if it is balanced.
\end{theorem}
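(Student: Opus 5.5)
The plan is to recast both conditions as the two sides of a pair of dual linear programs and then invoke LP strong duality. Write $\mathcal{S} = \{S \subseteq N : S \neq \emptyset\}$. As the primal (P), I take
\begin{align*}
\text{(P)}\quad \min_{p \in \Real^n} \; \sum_{i \in N} p_i \quad \text{s.t.} \quad \sum_{i \in S} p_i \geq v(S) \;\; \forall S \in \mathcal{S}.
\end{align*}
The first step is to show that $\text{Core}(N,v) \neq \emptyset$ if and only if the optimal value of (P) is at most $v(N)$. Every feasible $p$ satisfies the constraint for $S = N$, so $\sum_i p_i \geq v(N)$. Hence the optimum of (P) is $\leq v(N)$ exactly when some feasible $p$ attains $\sum_i p_i = v(N)$, and such a $p$ is precisely a core element. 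The optimum is attained because (P) is feasible (take every $p_i$ large, e.g.\ $p_i = \max_S |v(S)|$) and bounded below by $v(N)$.

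The second step is to compute the dual. It has one variable $w(S) \geq 0$ for each $S \in \mathcal{S}$ and one equality constraint for each free variable $p_i$:
\begin{align*}
\text{(D)}\quad \max_{w \geq 0} \; \sum_{S \in \mathcal{S}} w(S) v(S) \quad \text{s.t.} \quad \sum_{S \in \mathcal{S} : i \in S} w(S) = 1 \;\; \forall i \in N .
\end{align*}
The feasible set of (D) is exactly the set of balancing mappings in the sense of \cref{defn:balanced-game}. The range restriction $w(S) \in [0,1]$ there is automatic, since every nonempty $S$ contains some $i$ whose constraint forces $w(S) \leq 1$. The value $w(\emptyset)$ plays no role because $v(\emptyset) = 0$. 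Consequently, the game is balanced if and only if the optimal value of (D) is at most $v(N)$.

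The third step connects the two programs. Both are feasible: (P) as noted above, and (D) via $w(\{i\}) = 1$ for each $i$ and $w(S) = 0$ otherwise. Strong LP duality then says both optima are finite and equal. Combining with the first two steps gives
\[
\text{core non-empty} \iff \mathrm{OPT}(P) \leq v(N) \iff \mathrm{OPT}(D) \leq v(N) \iff \text{balanced}.
\]
The only direction that genuinely needs duality is ``balanced $\Rightarrow$ core non-empty''; the converse follows from weak duality alone. Indeed, for a core element $p$ and any balancing $w$,
\[
\sum_{S} w(S)v(S) \leq \sum_S w(S)\sum_{i\in S}p_i = \sum_i p_i \sum_{S\ni i} w(S) = \sum_i p_i = v(N).
\]

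I expect the main obstacle to be purely bookkeeping: deriving the dual in the correct form. The points to get right are that free primal variables produce equality constraints, that the empty coalition is excluded, and that the $[0,1]$ range in the paper's definition of a balancing mapping matches the dual's nonnegativity constraint. Once the dual is identified with balancing mappings, the theorem is a direct consequence of strong duality, which I will cite rather than reprove.
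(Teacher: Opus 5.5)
Your LP-duality argument is correct, including the identification of the dual feasible set with the paper's balancing mappings and the remark that only ``balanced $\Rightarrow$ non-empty core'' needs strong duality. The paper gives no proof of its own; it simply cites Bondareva and Shapley, and your argument is the standard proof from those works (and the balanced $\Rightarrow$ non-empty core direction is the one the paper actually uses).
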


\subsubsection{Shapley Value}
\label{appn-subsec:shapley-value}
Shapley value is a point solution concept for distributing the value of a coalition among its members in a `fair' way \citep{shapley1953value,roth1988shapley}.

\begin{definition}[Shapley Value]
For a TU game $(N,v)$ with $n = \cardinality{N}$ players, the shapley value $\phi=(\phi_i)_{i \in [n]}$ is given by
\begin{align}
    \phi_i(v) = \frac{1}{n} \sum_{S \subseteq N \setminus \{i\}} {n-1 \choose \cardinality{S}}^{-1} \left( v(S \cup \{i\}) - v(S) \right). \label{eqn:shap-value-definition}
% \binom{\cardinality{S}}{n-1}
\end{align}
\end{definition}

\noindent The Shapley value is the unique solution that satisfies the following axioms.

\begin{axiom}[Symmetry, Anonymity]
\label{axio:shap-sym}
Let $\pi : N \mapsto N$ be any (bijection) permutation of the set of players. 
Overload notation to write $\pi(C) := \{ \pi(a) : a \in C\}$ for any coalition $C \subseteq N$.
Consider a modified value function $\pi v$ that is defined as $\pi v( \pi(C) ) = v(C)$ for all $C \subseteq N$.
Then, the axiom mandates that the Shapley values of the two games obey
\begin{align}
\phi_i(v) = \phi_{\pi(i)}(\pi v)
\end{align}
for all $i \in [N]$. 
\end{axiom}

\begin{axiom}[Carrier]
\label{axio:shap-carr}
A subset $C \subseteq N$ of agents is said to be a \emph{`carrier'} when $ v(D) = v(C \cap  D)$ for all $D \subseteq N$.
Then, the axiom mandates that the cumulative Shapley values of carriers equals the value of grand coalition, i.e.,
\begin{align}
 \sum_{a \in C} \phi_a(v) = v(N).   
\end{align}
\end{axiom}

\begin{axiom}[Linearity]
\label{axio:shap-linearity}
Consider two different TU games $(N,v_1)$ and $(N,v_2)$. Define the combined game $(N,v_1+v_2)$ such that its value function equals $[v_1+v_2](C) = v_1(C) + v_2(C)$ for all coalitions $C \subseteq N$.
Then, the axiom mandates the shapley values of these games shall obey
\begin{align}
    \phi_i(v_1+v_2) = \phi_i(v_1) + \phi_i(v_2),
\end{align}
for all players $i \in N$.
\end{axiom}

These \cref{axio:shap-sym,axio:shap-carr,axio:shap-linearity} were used in the original paper (see Chapter 2 of \cite{shapley1967balanced}) that introduced Shapley value.
However, there are some restatements of some of these axioms as given below.

\begin{axiom}[Symmetry, ``equal treatment of equals"]
\label{axio:shap-sym-eqtoeq}
For any two players $i,j \in N$ such that $v(S \cup \{i\}) = v(S \cup \{j\})$ for all $S \in N \setminus \{i,j\}$, 
their Shapley values $\phi_i(v) = \phi_j(v)$ are equal.
\end{axiom}

\begin{axiom}[Null player]
\label{axio:shap-dummy}
A player $i \in N$ is said to be a \emph{`dummy'} player (or a null player) if he adds no value to any coalition beyond his individual value, i.e., $v(S \cup \{i\}) = v(S) + v(\{i\})$ for all $S \subseteq N$.
The Shapley value of a dummy player is $\phi_i(v) = v(\{i\})$, his individual value.
\end{axiom}
\noindent The Null player axiom is also popularly stated by additionally assuming $v(\{i\})=0$ in the definition of a dummy player.

% RAMA : Some sources \thought{wikipedia.. and ...} use \cref{axio:shap-sym-eqtoeq,axio:shap-dummy,axio:shap-linearity}.

% \rama{How are the Symmetry \cref{axio:shap-sym,axio:shap-sym-eqtoeq} equivalent???}

\begin{theorem}[Theorem 7 of \cite{shapley1971cores}]
\label{lem:shap-val-in-core-cvx-game}
The Shapley value of a convex game belongs to the core.
\end{theorem}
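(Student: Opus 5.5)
The plan follows Shapley's classical argument via marginal-contribution vectors; we do not use Bondareva–Shapley. Fix an ordering $\sigma$ of the players $N$ and write $P^\sigma_i$ for the set of players preceding $i$ in $\sigma$. Define the marginal vector $m^\sigma$ by
\begin{align*}
m^\sigma_i := v(P^\sigma_i \cup \{i\}) - v(P^\sigma_i).
\end{align*}
The Shapley value in \cref{eqn:shap-value-definition} is the uniform average of $m^\sigma$ over all $n!$ orderings $\sigma$. This is a routine counting step: for each $S \subseteq N\setminus\{i\}$, the number of orderings with $P^\sigma_i = S$ is $|S|!\,(n-1-|S|)!$. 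The core is defined by linear equalities and inequalities, so it is convex. Hence it suffices to show that every marginal vector $m^\sigma$ lies in the core; the average $\phi$ then lies in the core as well.

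Efficiency of $m^\sigma$ is immediate. The sum $\sum_i m^\sigma_i$ telescopes along the ordering to $v(N) - v(\emptyset) = v(N)$.

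The main step is coalitional rationality, $\sum_{i \in C} m^\sigma_i \ge v(C)$ for every $C \subseteq N$, and this is where convexity enters. List the members of $C$ in $\sigma$-order as $i_1, \dots, i_k$, and set $C_j := \{i_1,\dots,i_{j-1}\}$. Since every member of $C$ that precedes $i_j$ also precedes it in $\sigma$, we have $C_j \subseteq P^\sigma_i$ with $i = i_j$, and $i_j \notin P^\sigma_{i_j}$. The convexity definition, applied with the smaller coalition $C_j$ and the larger coalition $P^\sigma_{i_j}$, gives
\begin{align*}
m^\sigma_{i_j} = v(P^\sigma_{i_j}\cup\{i_j\}) - v(P^\sigma_{i_j}) \ge v(C_j \cup\{i_j\}) - v(C_j).
\end{align*}
Summing over $j=1,\dots,k$, the right-hand side telescopes to $v(C) - v(\emptyset) = v(C)$.

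Therefore every $m^\sigma$ is in the core, and averaging over $\sigma$ places $\phi$ in the core. The only delicate points are checking the inclusion $C_j \subseteq P^\sigma_{i_j}$ and verifying the averaging identity for $\phi$; both are straightforward.
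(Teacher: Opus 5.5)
Your proof is correct: under the paper's supermodularity definition every marginal vector $m^\sigma$ is efficient and coalitionally rational. Since $\phi$ is their uniform average (your count gives $|S|!\,(n-1-|S|)!/n! = \frac{1}{n}\binom{n-1}{|S|}^{-1}$, matching \cref{eqn:shap-value-definition}) and the core is convex, $\phi$ lies in the core. The paper does not prove this statement and only cites \cite{shapley1971cores}, whose classical argument is the same marginal-vector one you give, so there is nothing further to compare.
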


The above Theorem also immediately implies that the core of a convex game is non-empty (and the game admits a stable grand coalition).

\subsection{Mathematical aspects}
\begin{definition}[Convex set]
\label{dfn:convex-set}
Let $V$ be a Euclidean space. A set $C \subseteq V$ is said to be \emph{convex} if for every $x,y \in C$ and $\lambda \in [0,1]$, the element $\lambda x + (1-\lambda) y \in C$.
\end{definition}

We now relax the notion of convexity to the follwing weaker notion:
\begin{definition}[Star-shaped set]
\label{dfn:star-shaped-set}
Let $V$ be a Euclidean space. A set $S \subseteq V$ is said to be \emph{star-shaped} if there exists some point/element $x \in S$ such that for every $y \in S$,
the line segment $\overline{xy}$ lies within $S$, i.e.,
$\forall \lambda \in [0,1] : \lambda x + (1-\lambda) y \in S$.

The set of all such $x$ is called the \emph{kernel} of set $S$.
\end{definition}

\noindent A convex set is also star-shaped, and it can be seen that the entire convex set is its own kernel.

\noindent These star-shaped sets in the two-dimensional Euclidean plan are known as star-shaped polygons and appear to be well studied in the computational geometry literature (see e.g., the 1985 textbook of \cite{preparata2012computational}).

\begin{claim}[Kernel-centred Gaussian distributions over star-shaped polytopes]
\label{clm:star-polytope-gaussian-probs}
Consider a star-shaped polytope $S$ with $\theta^*$ in its kernel. Let $A \sim \gaussian{\theta^*}{V_A^{-1}}$ and $B \sim \gaussian{\theta^*}{V_B^{-1}}$ be gaussian random variables with $V_B^{-1} \succeq V_A^{-1}$, i.e., $A$ is tightly centred around $\theta^*$ than $B$ is, in every direction.

Then, $\prob{A \in S} \geq \prob{B \in S}$.
\end{claim}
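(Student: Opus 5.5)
The plan is to translate so that $\theta^*=0$, reduce the claim to a monotonicity statement along a one-parameter family of covariances, and then exploit star-shapedness through a radial (polar) decomposition of $S$. Write $\Sigma_A:=V_A^{-1}$ and $\Sigma_B:=V_B^{-1}$, so the hypothesis reads $\Sigma_B\succeq\Sigma_A$. Since $0$ lies in the kernel of $S$, every ray $\{ru: r\geq 0\}$ with $\|u\|=1$ meets $S$ in a segment $[0,\rho(u)]$ for some $\rho(u)\in[0,\infty]$. For a polytope, $\rho$ is piecewise smooth on the sphere, and $S\supseteq \lambda S$ for all $\lambda\in[0,1]$.

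\textbf{Step 1 (isotropic scaling).} First treat the case $\Sigma_B=c\,\Sigma_A$ with $c\geq 1$. Then $B\overset{d}{=}\sqrt{c}\,A$, so
\[
\prob{B\in S}=\prob{A\in c^{-1/2}S}\leq \prob{A\in S},
\]
because $c^{-1/2}S\subseteq S$ by star-shapedness. This case already exhibits the mechanism: inflating the Gaussian pushes mass outward along rays, and star-shapedness guarantees that moving outward along a ray can only exit $S$, never re-enter it.

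\textbf{Step 2 (general Loewner order).} Set $D:=\Sigma_B-\Sigma_A\succeq 0$ and interpolate with $\Sigma(s):=\Sigma_A+sD$ for $s\in[0,1]$, letting $P(s):=\prob{\gaussian{0}{\Sigma(s)}\in S}$. By the heat-equation identity for Gaussian densities,
\[
P'(s)=\tfrac12\int_S \operatorname{tr}\!\big(D\,\nabla^2\varphi_{\Sigma(s)}(x)\big)\,dx .
\]
Decomposing $D=\sum_k v_kv_k^\top$, it suffices to show $\int_S \partial^2_{v}\varphi_{\Sigma}\,dx\leq 0$ for each direction $v$ and each $\Sigma$ on the path. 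Applying the divergence theorem turns this into a boundary flux $\int_{\partial S}(v\cdot n)\,\partial_v\varphi\,d\sigma$. In polar coordinates, the boundary of a kernel-centred star-shaped polytope is the graph $r=\rho(u)$ with outward normals satisfying $n\cdot u\geq 0$. I would then combine the radial decay of $\varphi$ (its density decreases along every ray from the centre) with this sign condition to try to show the flux is nonpositive. One alternative is to instead use the coupling $B\overset{d}{=}A+W$ with $W\sim\gaussian{0}{D}$ independent of $A$, and argue conditionally on the direction of $A$.

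\textbf{Main obstacle.} Step 2 is the crux. For a non-isotropic $D$, the perturbation is not purely radial, so mass can be transported tangentially. For non-convex $S$, tangential transport could in principle move mass from outside $S$ into $S$; Anderson's inequality avoids this only under convexity together with central symmetry. I would first attempt to bound the tangential contribution by showing it integrates to zero over each sphere $\{\|\Sigma^{-1/2}x\|=r\}$ after whitening. If that fails, I would weaken the claim to the cases the later argument actually needs: proportional covariances as in Step 1, or $S$ convex and symmetric about $\theta^*$, where the result follows directly from Anderson's theorem. I would also test small two-dimensional wedge examples to check whether the general statement can fail.
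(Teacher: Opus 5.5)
Your Step 1 is correct, and it is as far as star-shapedness can take you. Step 2 cannot be completed, and not for lack of technique: once the covariances are Loewner-ordered but not proportional, the statement is false, already for a convex cone.

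\textbf{Counterexample.} Take $d=2$, $\theta^*=0$ and $S=\{(x,y): x\ge 0,\ |y|\le x\}$. This set is convex, so $0$ lies in its kernel. Let $V_A^{-1}=I_2$ and $V_B^{-1}=\mathrm{diag}(k^2,1)$ with $k>1$, so that $V_B^{-1}\succeq V_A^{-1}$. Then $\prob{A\in S}=1/4$. Writing $B=(kX,Y)$ with $X,Y$ i.i.d.\ standard normal, $B\in S$ iff $|Y|\le kX$, so $\prob{B\in S}=\arctan(k)/\pi\approx 0.47$ for $k=10$. By continuity, the strict reversal survives truncating at $x\le L$ for large $L$ (giving a bounded polytope), or moving the apex slightly behind $0$ (so that $\theta^*$ is interior).

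This is exactly the tangential transport you feared, and your own flux formula exhibits it. On the boundary rays of a cone with apex at $\theta^*$ one has $n\cdot u=0$, so star-shapedness gives no sign information. For $v=e_1$ the integrand $(v\cdot n)\,\partial_v\varphi_I$ is strictly positive on both edges, giving $\int_S\partial_1^2\varphi_I\,dx=1/(2\pi)>0$. So your sufficient condition fails already at the start of the interpolation path.

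\textbf{The paper's proof has the same hole.} It writes both probabilities in polar coordinates about $\theta^*$. It then asserts that the radial integral over $[0,r_\phi]$ for $A$ dominates the one for $B$ in every direction ``due to $V_A\succeq V_B$''; it also drops the Jacobian $r^{d-1}$. That direction-wise inequality is justified only by the Loewner order, so it would have to hold with $r_\phi=\infty$. There the radial integral is the density of the direction of $A-\theta^*$ on the sphere, proportional to $\det(V_A)^{1/2}(u^\top V_A u)^{-d/2}$. Both such angular densities integrate to one, so direction-wise domination forces them to coincide, i.e.\ $V_A\propto V_B$. That is just your Step 1.

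\textbf{Your fallbacks do not cover the later use.} Both weakenings are true: proportional covariances for any kernel-centred star-shaped set, and convex sets symmetric about $\theta^*$ via Anderson. But neither is the case the paper actually uses. There the sets $\bigcup_{j\le i}C_{A,x_j,t}$ are cones with apex at the origin rather than at $\theta^*$, so they are not symmetric about $\theta^*$. Moreover, $V_S$ and $V_Q$ are in general not proportional. Hence neither weakening supports the downstream sub-optimal-action lemma as written.
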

\begin{proof}
Write $f_A$ (and $f_B$) to be the p.d.f of $\gaussian{\theta^*}{V_A^{-1}}$ (sim. $\gaussian{\theta^*}{V_A^{-1}}$) in $d$ dimensions.
We show the Claim by comparing the corresponding probability integrals in the polar system.
We start with the L.H.S. 
\begin{align*}
    \prob{A \in S} & = \int_{s \in S} f_A(s) .ds = \int_{s \in S}  (2 \pi)^{d/2} \det(V_A^{-1})^{-1/2} \exp{\frac{-1}{2} \norm{s - \theta^*}_{V_A} } .ds \\
    & \labelrel{=}{i:10-a} \int_{\phi} \int_{r = 0}^{r_\phi} (2 \pi)^{d/2} \det(V_A)^{1/2} \exp{\frac{-1}{2} \norm{r}_{V_A} } . dr . d \phi \\
    & \labelrel{\geq}{i:10-b} \int_{\phi} \int_{r = 0}^{r_\phi} (2 \pi)^{d/2} \det(V_B)^{1/2} \exp{\frac{-1}{2} \norm{r}_{V_B} } . dr . d \phi \\
    & \labelrel{=}{i:10-c} \int_{s \in S} f_A(s) .ds = \prob{B \in S},
\end{align*}
where \eqref{i:10-b} is due to $V_A \succeq V_B$.
% \thought{and because of Prékopa–Leindler inequality?}
Further, crucially, the conversion to and from the polar systems in \eqref{i:10-a} and \eqref{i:10-c} over a continuous $[0,r_\phi]$ is possible because $S$ is star-shaped with $\theta^*$ in its kernel, i.e., a ray originating from $\theta^*$ in any direction $\phi$ exits the polytope $S$ at most once (and never enters again), and we call $r_\phi$ to be the distance of this exit point from $\theta^*$ (which is $\infty$ if it doesn't exit). 
\end{proof}

\section{Bandit Algorithm Assumptions}
\label{app-sec:assumptions}

In our work, we made a string of assumptions about the nature and performance of the bandit algorithms to be used. First, we made \cref{assm:regret-strictly-concave} about the black-box algorithm that our  \cref{alg:ident-meta-alg} used for the Homogenous setting (\cref{sec:ident-agents}).
Seond, we made \cref{assm:samples-pooled,assm:large-coalition-less-regret} about the behaviour of the multi-agent algorithm and showed that such an algorithm enjoys nice theoretical properties (\cref{sec:non-ident-agents}).

In this section, we discuss those three assumptions and their practicality in \cref{appn-subsec:justify-strictly-concave-assm,appn-subsec:justify-samples-pooled-assm,appn-subsec:justify-lclr-assm}.
And finally, in \cref{appn-subsec:metc-for-assm} we give a multi-agent bandit algorithm for the heterogeneous setting that obeys the two assumptions assumed for the theoretical results. 

\subsection{On \cref{assm:regret-strictly-concave}}
\label{appn-subsec:justify-strictly-concave-assm}

\assmSingleAgentConcaveRegret*

% THE BELOW WAS THE INITIAL FORUMULATION OF THE ASSUMPTION.
% \begin{enumerate}
%     \item The algorithm $\singalg$ is a `zero-regret' strategy. For all problem instances, $R_a(\singalg, I, T) = o(T)$.
%     \item  With algorithm $\singalg$, the (expected) regret $R_a(\singalg, I, T)$ is strictly concave. \thought{above a certain time-step $T>..$ . Initially, it can be linear or worse.}
% \end{enumerate}

The Assumption constraints the nature of regret the single agent bandit algorithm $\singalg$ to be used shall incur.
Namely, the change of instantaneous regret with time (or the second derivative of the cumulative regret) is upper bounded and lower bounded.

\paragraph{Upper Bound.}
\cref{eqn:assm-sing-upper} implies that the total regret $R_.(\singalg,I,T)$ is strictly concave as a function of time horizon $T$, i.e., the instantaneous regret (or the first derivative of the cumulative regret) shall decrease with time.
In other words, the algorithm $\singalg$ `improves with time'.

Basic bandit algorithms such as Explore-Then-Commit (also known as Explore-First) and $\varepsilon$-Greedy approaches are known to achieve a regret of $O(T^{2/3})$. More sophisticated algorithms, such as Successive Elimination, UCB1 \citep{auer2002finite}, and Thompson Sampling \citep{kaufmann2012thompson}, all achieve $O(T^{1/2})$ regret rates in finite time.
All these functional forms are strictly concave, albeit these bounds are worst-case in nature.

Further, there is ample empirical evidence to suggest that the assumption of concavity is natural. 
It can be seen that empirical cumulative regret (typically averaged over a few repetitions/runs) follows a strictly concave growth with time.
For visual plots, we refer the reader to \cite{russo2018tutorial,garivier2011kl,chapelle2011empirical} that demonstrate the aforementioned nature of shape of regret curve. 
% and we do not perform any experiments to justify this.

\paragraph{Lower Bound.} It is also known that any bandit algorithm can not perform better than a certain level, i.e., the algorithm will have to incur some minimum rate of (expected) regret. 
In other words, the cumulative regret curve `can not flatten too soon'
% or instantaneous regret curve `can not vanish too soon'.
The seminal work of \cite{lai1985asymptotically} establishes an asymptotic lower bound of $\Omega(\log t)$ to the minimizable regret.
And the second derivative of the members of the family of logarithmic curves is of the form $-ct^{-2}$ for some constant $c$ which is used in \cref{eqn:assm-sing-lower} with a small $\varepsilon$ margin. 

% of the multi-agent learning algorithm does not degrade with addition of more agents in \cref{assm:large-coalition-less-regret}.
% We try to justify that this "The more the merrier"-style assumption is mild and very natural.

% \cite{wang2019distributed} consider the multi-agent linear bandit setting that we consider, where the agents are assumed to be not strategic and the focus is on minimizing communication among the agents. They design $\textsc{DisLinUCB}$, a distributed algorithm based on the OFUL (Optimism in the Face of Uncertainty) principle.

\subsection{On \cref{assm:samples-pooled}}
\label{appn-subsec:justify-samples-pooled-assm}

We recap the space of histories of agent arm play and observations.
$H_{a,t}:=(x_{a,s},y_{a,s})_{s=1}^{t}$ 
denotes the single-agent history, i.e., the sequence of actions played $x_{a,s} \in X_{a,s}$ and rewards observed $y_{a,s} \in \Real$, by agent $a$ upto time $t$.
Further, the coalition history $H^C_{t-1}:=\{H_{b,t-1} : b \in C\}$ comprises of single-agent histories corresponding to all agents in the coalition $C$ upto the time-step $t-1$.

Let the space of all histories be defined as follows: $\hist_{a,t} = \bigtimes_{s=1}^t \left( X_{a,s} \times \Real \right) \subseteq \left( \Real^d \times \Real \right)^t$ be the family of single-agent histories of length $t$. And, let 
$$\hist^C_{t} \in \bigtimes_{a \in C} \bigtimes_{s=1}^t \left( X_{a,s} \times \Real \right) \subseteq \left( \Real^d \times \Real \right)^{\cardinality{C} \times t}$$ be the family of all coalition histories of length $t$.
And our problem setting permits that a multi-agent algorithm $\alg$ can take as input this coalition history to come up with actions to play for all the agents in the coalition.
Precisely, for all $t \in [T]$,
$\alg^t : \hist^C_{t-1} \mapsto \left( \cross_{a \in C} X_{a,t}  \right)$.
\footnote{If the algorithm is non-deterministic, the co-domain becomes a distribution over the joint action space, $\Delta \left( \cross_{a \in C} X_{a,t} \right)$, but this distinction is not important to the point we make.}

However, we constrain the multi-agent algorithm to use a different quantity as follows:
\assmSamplesPooled*
This Assumption states that the multi-agent algorithm shall not use coalition history $H^C_t \in \hist^C_{t}$ directly, but shall instead use a further processed quantity $P^C_{t}$, which is the union of samples observed so far by all agents. 
It is easy to observe that this pool of samples $P^C_{t}$ is a deterministic function of the coalition history $H^C_t$.
By data processing inequality, this Assumption doesn't make the algorithm use improved information in making the choice of actions to play. 
On the contrary, information is lost. Specifically, the information about the identity of the agent and the time of play is lost in the union operation.

However, it can be argued that this loss of information doesn't affect the performance of the algorithm. 
This is squarely due to the i.i.d. nature of rewards of the multi-agent linear bandit setting, when conditioned on the action.
When agent $a \in \agents$ at time $t \in [T]$ plays action $x_{a,t} = x \in X_{a,t}$, he observes a reward $y_{a,t} = \inprod{\theta^*}{x} + \eta_{a,t}$. Here, the sequence of additive noises $(\eta_{a,t})_{a \in \agents, t \in [T]}$ are independent and identically distributed across agents $a \in \agents$ and time $t \in [T]$. 
In other words, the reward depends only on the action played and does not depend on the agent who plays it or the time at which it is played.
Thus, it is reasonable to lose this information about the identity of agent and time from the samples in the coalition history without impacting the strength of an algorithm.
\paragraph{Examples from literature.}
In fact, this is a common approach that several multi-agent algorithms use, satisfying this Assumption.
For example, the \emph{DisLinUCB} algorithm \citep{wang2019distributed} makes all agents run an Upper Confidence Bound (UCB) based algorithm, wherein the parameter estimate is computed by using the aggregate `design' matrix $V_{a,t} = \sum_{a \in \agents} \sum_{s=1}^t x_{a,s} x_{a,s}^\top$, the sum of outer product of all actions played,
and the `result' matrix $B_{a,t} = \sum_{a \in \agents} \sum_{s=1}^t x_{a,s} y_{a,s}$, the sum of product of action and reward pairs.
Both these quantities do not make use of the identity of agents and time, that information is lost in the summation, and these quantities can be seen to be functions of the pool/union of samples $P^{\agents}_t$.
% RAMA. \thought{look for more algos to cite. Multi-agent Thompson Sampling?}

\subsection{On \cref{assm:large-coalition-less-regret}}
\label{appn-subsec:justify-lclr-assm}
\assmLargeCoalitionLessRegret*

% First, we show that this assumption can be satisfied in a minimax sense, on a sub-class of the problem instances.
First, we give examples from the literature that argue that given `enough heterogeneity', collaboration does lower regret of agents.
Second, we point to some numerical simulations to verify that the assumption is satisfied in most cases in our MovieLens problem instance experiments.

\paragraph{Related Work - Heterogeneity helps implicitly explore.} 
There is a line of work \citep{bastani2017exploiting,kannan2018smoothed,wang2023achieving} which studies how heterogeneity in action sets inherently helps exploration and thus helps minimize regret.
Specifically, \cite{kannan2018smoothed} consider the stochastic linear bandit setting as ours, where the action sets can be set by an adversary, but is then perturbed component-wise by i.i.d gaussian noise before being presented to the agent.
That is, at time $t$, for any given arbitraty action set $X'_{t} = \{ x'_{1,t}, \dots, x'_{k,t} \}$ with actions in $\Real^d$, they are perturbed with i.i.d. noise vectors $\eta_i \sim \gaussian{0}{\sigma^2 I_d}$ for $i \in [k]$ as follows:
$ X_t = \{ x_{i,t} = x'_{i,t} + \eta_i \}_{i \in [k]}.$
Thus, at every time $t$, the action set is made `diverse' by perturbing it in a random direction in $\Real^d$.
More importantly, as this perturbation is independent over time, the action sets are also diverse across time.
Under such a condition, the authors consider the greedy algorithm, where an action is played from $\argmax_{x \in X_t} \thetahat_t^\top x$ to myopically maximize current expected reward based on current parameter estimate $\thetahat_t$. 
They surprisingly show that with high probability, this greedy algorithm attains a regret upper bound of $\bigO{\sqrt{dT}/\sigma^2}$ where $\sigma^2$ is the variance of the component-wise perturbation added to the actions.

Comparing this to our setting, if an agent $a$ has action sets that have good representation in the ambient space $\Real^d$, and the other agents' action sets are not correlated with that of agent $a$, then, agent $a$ benefits from more data as more agents are in the coalition. 

% RAMA- sampath claims some instance condition that's suficient (that this perturbation also achieves) for less regret.

\paragraph{Empirical validation.} From our MovieLens experiments, we exhaustively check if the \cref{assm:large-coalition-less-regret} holds.
We observe that it holds for most agent-coalition pairs. 
We describe these in \cref{appn-subsec:on-satisfying-more-the-merrier}.

\subsection{Algorithm that satisfies \cref{assm:samples-pooled,assm:large-coalition-less-regret}}
\label{appn-subsec:metc-for-assm}

In \cref{thm:non-ident-stable-game,thm:shapley-axioms}, we showed that any multi-agent bandit algorith that satisfies \cref{assm:samples-pooled,assm:large-coalition-less-regret} enjoys desirable properties such as the grand coalition being stable, and the payout obeying all but one of the Shapley value axioms.
In this section, we give an algorithm (\cref{alg:multi-agent-etc}) that provably satisfies these assumptions (\cref{clm:metc-sats-assm-samples-pooled,thm:mul-etc-obeys-lclr}).

Consider the algorithm $\muletc$ (\cref{alg:multi-agent-etc}) based on the Explore-Then-Commit (or Explore-First) paradigm extended to accommodate multiple agents interacting with the bandit environment. 

$\muletc$ runs in two stages---an exploration phase, followed by a commit phase.
First, the exploration phase happens for a set duration of $T'$ time steps. In each time-step in it, the algorithm uses an exploration routine to come up with the determinant-maximizing action $x_{a,t}$ for each agent $a$ to play,
\begin{align}
x_{a,t} \assign \argmax_{x \in X_{a,t}} \det \left( I + V_{a,t-1} + x x^\top \right), \label{eqn:metc-explore}
\end{align}
where $V_{a,t-1} := \sum_{s<t} x_{a,s} x_{a,s}^\top$. All ties are broken in some deterministic manner agnostic of the agent $a$.
Notably, the exploration action shall depend on the actions recommended to (and played by) the agent so far, but shall be independent of the rewards observed by the agent $a$, or the actions and rewards of any other agent.

After $T'$ time-steps, with the actions and rewards of all agents in $A$, the algorithm computes an estimate of the linear parameter using Ordinary Least Squares (OLS), 
\begin{align}
\thetahat_A \assign V^{-1}_A B_A, \text{ where } V_A = \sum_{a \in A} \sum_{t \leq T'} x_{a,t} x^\top_{a,t} \text{, and } 
B_A = \sum_{a \in A} \sum_{t \leq T'} x_{a,t} y_{a,t}. \label{eqn:metc-ols}
\end{align}
Second, in the commit phase, each agent plays the action that maximizes his expected reward as if estimate $\thetahat_A$ is the true parameter value.
In other words, the agents `commit' to estimate $\thetahat_A$ for the rest of play.

\begin{algorithm}[h]  %[h]
\small
\caption{ $\muletc$, an algorithm for multi-agent linear bandits.}
\label{alg:multi-agent-etc}
{\bf Input : } A set of collaborating agents $A$, time horizon $T$, action sets $X_{a,t}$ for all agents $a \in A$ and time-steps $t \in [T]$, an exploration threshold $T' < T$.
\begin{algorithmic}[1]
\For{time-step $t = 1,2,\dots,T'$}
    \Comment{Exploration phase.}
    \StatePar{Each agent $a \in A$ plays action to maximize his exploration `volume' \\ as in \cref{eqn:metc-explore}
    and observes rewards $y_{a,t}$.}
 \EndFor
\StatePar{Compute parameter estimate $\thetahat_A$ using all agents' statistics until $T'$ as in \cref{eqn:metc-ols}.}
\label{aline:metc-ols}
\For{time-step $t = T'+1,\dots,T$}
    \Comment{Commit phase.}
    \StatePar{Each agent $a \in A$ plays action 
    $ x_{a,t} \assign \argmax_{x \in X_{a,t}} x^\top \thetahat_A.$}
\EndFor
\end{algorithmic}
\end{algorithm}

\begin{claim}
\label{clm:metc-sats-assm-samples-pooled}
$\muletc$ obeys \cref{assm:samples-pooled}.  
\end{claim}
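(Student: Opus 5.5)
The plan is a direct verification: for each agent $a\in A$ and each time $t$, exhibit the action $x_{a,t}$ chosen by $\muletc$ as a function of only the three quantities allowed by \cref{assm:samples-pooled}, namely $P^a_{t-1}$, $P^{-a}_{t-1}$, and $X_{a,t}$. I will treat the exploration phase and the commit phase separately, since the algorithm uses different rules in each.

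\textbf{Exploration phase ($t\le T'$).} Here $x_{a,t}$ is given by \cref{eqn:metc-explore}. The matrix $V_{a,t-1}=\sum_{s<t}x_{a,s}x_{a,s}^\top$ is computed from the action coordinates of the agent's own history $P^a_{t-1}$. The argmax then ranges over $X_{a,t}$, and ties are broken deterministically and agnostically of $a$. So $x_{a,t}$ is a fixed function of $(P^a_{t-1},X_{a,t})$ that ignores $P^{-a}_{t-1}$, and this function is the same for every agent. The one point to state carefully is that tie-breaking uses no agent label; this is exactly why the algorithm specifies agent-agnostic tie-breaking.

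\textbf{Commit phase ($t>T'$).} The action is $\argmax_{x\in X_{a,t}}x^\top\thetahat_A$, again with agent-agnostic tie-breaking, so the task reduces to showing that $\thetahat_A$ is a function of $P^a_{t-1}$ and $P^{-a}_{t-1}$. By \cref{eqn:metc-ols},
\[
V_A=\sum_{s\le T'}\Bigl(x_{a,s}x_{a,s}^\top+\sum_{(x,y)\in\bigcup_{b\neq a}\{(x_{b,s},y_{b,s})\}}xx^\top\Bigr),
\]
and $B_A$ has the same form with $xy$ in place of $xx^\top$. Each inner sum is over the multiset of other agents' pairs at time $s$, which is precisely the $s$-th entry of $P^{-a}_{t-1}$. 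These sums are permutation-invariant, so discarding agent identities loses nothing needed to compute them. The self term comes from $P^a_{t-1}$. Since $T'<t$, only entries with index at most $T'\le t-1$ are used, so $V_A$, $B_A$, and hence $\thetahat_A=V_A^{-1}B_A$ are determined.

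\textbf{Main obstacle.} The only delicate point is the commit phase. It requires the pooled object in \cref{assm:samples-pooled} to be a multiset, so that repeated identical action–reward pairs from different agents are counted with multiplicity. Summation over a multiset is well defined and is what the argument needs. If the union were read as a plain set, duplicates would collapse and $V_A$ could not be recovered; I would therefore state explicitly that the assumption's wording (``multisets'') is being used.

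Finally, the per-agent rule is the same map for all $a$, with no dependence on labels beyond the three inputs. This gives the claim.
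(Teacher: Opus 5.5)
Your proof is correct and follows the paper's argument. Like the paper, you split into phases. In the exploration phase, the determinant-maximizing action depends only on $X_{a,t}$ and the agent's own past actions in $P^a_{t-1}$. In the commit phase, $V_A$ and $B_A$, and hence $\thetahat_A$, decompose into a self term from $P^a_{T'}$ and a multiset sum over $P^{-a}_{T'}$, both prefixes of the time-$(t-1)$ data. Your remarks on counting multiplicities in the pooled multiset and on agent-agnostic tie-breaking (which the algorithm specifies only for exploration) are points the paper leaves implicit, but they do not change the route.
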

\begin{proof}
We prove the claim by showing that at each time $t \in [T]$, the action to be played $x_{a,t}$ depends on the action-reward sequences from self-play or from other agents' play, $P^a_{t-1}$ and $P^{-a}_{t-1}$.

First, in the exploration phase, we have that the action played 
\begin{align*}
x_{a,t} =& \argmax_{x \in X_{a,t}} \det \left(  I + V_a^{t-1} + x x^\top \right) 
= \argmax_{x \in X_{a,t}} \det \left(  I + x x^\top + \sum_{s=1}^{t-1} x_{a,s} x_{a,s}^\top \right) \\
= & \argmax_{x \in X_{a,t}} \det \left(  I + x x^\top + \sum_{(x',y') \in P^a_{t-1}} x' x'^\top \right). \numberthis \label{eqn:explore-from-pool}
\end{align*}
Second, in the commit phase, the action played at any time $t > T'$ depends on the estimate $\thetahat_A$ which in turn is 
\begin{align*}
\thetahat_A =& V_A^{-1} B_A = \left( \sum_{b \in A} \sum_{s \in [T']} x_{b,s} x_{b,s}^\top \right)^{-1} \left( \sum_{b \in A} \sum_{s \in [T']} x_{b,s} y_{b,s} \right) \\
=& \left( \sum_{(x,y) \in P^a_{T'}} x x^\top + \sum_{(x,y) \in P^{-a}_{T'}} \right)^{-1} \left( \sum_{(x,y) \in P^A_{T'}} x y \right), \numberthis \label{eqn:commit-from-pool}
\end{align*}
where $P^a_{T'}$ (sim. $P^{-a}_{T'}$) is a prefix (a function) of $P^a_{t-1}$ (sim. $P^{-a}_{t-1}$) as $t>T'$.

From \cref{eqn:explore-from-pool,eqn:commit-from-pool}, 
it is seen that for any agent $a \in A$, time $t \in [T]$,
the action played $x_{a,t}$ is a function of the $X_{a,t}$, $P^a_{t-1}$, and $P^{-a}_{t-1}$, satisfying the Assumption.
\end{proof}

\begin{theorem}
\label{thm:mul-etc-obeys-lclr}
$\muletc$ obeys \cref{assm:large-coalition-less-regret}. 
That is, for any problem instance $I$, two coalitions $S \subseteq Q \subseteq \agents$, and any agent $a \in S$, it holds that $R_a^Q(\muletc, I, T) \leq R_a^S(\muletc, I, T)$.
\end{theorem}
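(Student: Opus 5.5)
We compare agent $a$'s regret in coalitions $S$ and $Q$ phase by phase, using the structure of $\muletc$. In the exploration phase ($t \le T'$), by \cref{eqn:metc-explore} the action $x_{a,t}$ is a deterministic function only of $a$'s own action sets and past actions. It does not depend on rewards or on any other agent. Hence agent $a$ plays the same exploration sequence $x_{a,1},\dots,x_{a,T'}$ in both $S$ and $Q$, and the exploration-phase regret contributions coincide. The same holds for every $b \in S$: the exploration actions of each agent are deterministic and coalition-independent. So the design matrices satisfy $V_Q = V_S + \sum_{b \in Q\setminus S}\sum_{t\le T'} x_{b,t}x_{b,t}^\top \succeq V_S$, deterministically.

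It therefore suffices to compare commit-phase regret, which depends only on $\thetahat_S$ versus $\thetahat_Q$. Assuming Gaussian noise (as implicitly used by \cref{clm:star-polytope-gaussian-probs}), OLS gives $\thetahat_C \sim \gaussian{\theta^*}{\sigma^2 V_C^{-1}}$. Since $V_Q \succeq V_S$, we have $V_Q^{-1} \preceq V_S^{-1}$, so $\thetahat_Q$ is more tightly concentrated around $\theta^*$ in every direction. For each commit time $t > T'$, I will write the expected instantaneous regret as
\[
\sum_{x \in X_{a,t}} \bigl(\inprod{\theta^*}{x^*_{a,t}} - \inprod{\theta^*}{x}\bigr)\, \prob{\thetahat \in \Theta_x},
\]
where $\Theta_x = \{\theta : x \in \argmax_{x' \in X_{a,t}} \inprod{\theta}{x'}\}$ is the polyhedral cone where $x$ is selected (ties broken deterministically). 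The goal is to show that $\prob{\thetahat \in \Theta_{x^*_{a,t}}}$ is larger under $Q$. Better still, reorder the actions by gap and show that the sublevel set $U_k = \{\theta : \text{selected action has gap} \le \Delta_k\}$ has larger probability under $\thetahat_Q$ for every $k$. The regret equals $\sum_k (\Delta_{k+1}-\Delta_k)\,\prob{\thetahat \notin U_k}$, so this monotonicity yields the pointwise inequality, and summing over $t$ finishes the proof.

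The main obstacle is proving $\prob{\thetahat_Q \in U_k} \ge \prob{\thetahat_S \in U_k}$. For this I plan to invoke \cref{clm:star-polytope-gaussian-probs}, which requires each $U_k$ to be star-shaped with $\theta^*$ in its kernel. The optimal cone $\Theta_{x^*}$ is convex and contains $\theta^*$, so the case $k=1$ is immediate.

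For general $k$, take $\theta \in U_k$ and $\lambda \in [0,1]$, and set $\theta_\lambda = \lambda\theta^* + (1-\lambda)\theta$. I would argue that the action chosen at $\theta_\lambda$ has gap at most that chosen at $\theta$. Let $x_\lambda$ be the argmax at $\theta_\lambda$ and $x_0$ the argmax at $\theta$. Then $\inprod{\theta_\lambda}{x_\lambda} \ge \inprod{\theta_\lambda}{x_0}$ and $\inprod{\theta}{x_\lambda} \le \inprod{\theta}{x_0}$. Combining these gives $\lambda\inprod{\theta^*}{x_\lambda - x_0} \ge (1-\lambda)\inprod{\theta}{x_0 - x_\lambda} \ge 0$. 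So the gap does not increase along the ray toward $\theta^*$, and $U_k$ is star-shaped about $\theta^*$ (indeed about any point of the optimal cone). Tie-breaking on boundaries is measure zero under the Gaussian and can be ignored. With this, the claim applies with $A = \thetahat_Q$ and $B = \thetahat_S$, completing the argument.
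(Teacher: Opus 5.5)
\paragraph{Verdict.} Your proposal follows the paper's own proof, and it inherits that proof's fatal step; in fact the theorem as stated is false.

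\paragraph{Where the argument breaks.} The steps match the paper's: equal exploration-phase regret, the deterministic ordering $V_Q\succeq V_S$, and star-shapedness about $\theta^*$ of the sets of estimates that select an action of gap at most $\Delta_k$. Your direct interpolation argument for star-shapedness is correct and cleaner than the paper's contradiction argument. Then come \cref{clm:star-polytope-gaussian-probs} and an Abel-summation form of the induction in \cref{clm:insta-regret-inequality}. The gap, which the paper shares, is the step that invokes \cref{clm:star-polytope-gaussian-probs}. That claim is false whenever the two covariances are Loewner-ordered but not proportional. Star-shapedness about $\theta^*$ only gives monotonicity under a scalar contraction $A-\theta^*\overset{d}{=}c\,(B-\theta^*)$ with $c\le 1$. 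When $V$ changes shape, the Gaussian's angular mass around $\theta^*$ shifts between directions: its density in direction $u$ is proportional to $\det(V)^{1/2}(u^\top V u)^{-d/2}$ and integrates to one over the sphere. So being ``tighter in every direction'' does not give more mass to a set that is not centrally symmetric about $\theta^*$; Anderson's inequality needs symmetric convex sets. The failure already occurs for $k=1$, where the set is a convex cone containing $\theta^*$.

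\paragraph{Counterexample to the theorem.} Take $d=2$, unit-variance Gaussian noise, $\theta^*=(0,\mu)$ with $\mu>0$, $S=\{a\}$, $Q=\{a,b\}$, and $T'$ even.
\begin{itemize}
\item During exploration let $X_{a,t}=\{e_1,e_2\}$. The determinant rule then alternates, so $V_S=\tfrac{T'}{2}I$.
\item Let $X_{b,t}=\{e_2\}$. Then $V_Q=\mathrm{diag}(\tfrac{T'}{2},\tfrac{3T'}{2})\succeq V_S$.
\item In the commit phase let $X_{a,t}=\{e_2,e_1,-e_1\}$. Agent $a$ plays the optimal $e_2$ iff $\thetahat$ lies in the wedge $W=\{\theta:\theta_2>|\theta_1|\}$. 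Both other actions have gap $\mu$, so each commit step costs $\mu\,(1-\prob{\thetahat\in W})$.
\item As $\mu\sqrt{T'}\to 0$, $\prob{\thetahat_S\in W}\to\tfrac14$, while $\prob{\thetahat_Q\in W}\to\tfrac1\pi\arctan(1/\sqrt3)=\tfrac16$.
\end{itemize}
Exploration regrets coincide, so for small $\mu$ you get $R^Q_a>R^S_a$.

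\paragraph{What survives.} The step cannot be repaired without extra hypotheses. Your argument (and the paper's) does go through when $V_Q$ is a scalar multiple of $V_S$, for example when all agents have identical exploration designs. In that case $\thetahat_Q-\theta^*\overset{d}{=}c\,(\thetahat_S-\theta^*)$ with $c\le 1$.
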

\begin{proof}
Write the regret of agent $a \in A$ (for any generic coalition $A \subseteq \agents$) to be the sum of regret in the exploration phase and the regret in the commit phase.
\begin{align}
R^A_a(I,\muletc,T) = R^A_a(I, \muletc, T') + R^A_a(I, \muletc, [T'+1,T]). \label{eqn:muletc-regret-in-two-phase}
\end{align}

\paragraph{Regret in Exploration phase.} For every agent $a \in \agents$, the exploration routine in \cref{eqn:metc-explore} chooses actions $x_{a,t}$ independent of the presence of the other agents.
Thus, the actions played and thus the regret incurred by agent $a$ as a part of any coalition is identical.
That is, for coalitions $S \subseteq Q$, and any agent $a \in S$, 
$R^Q_a(I, \muletc, T') = R^S_a(I, \muletc, T')$.

\paragraph{Regret in Commit phase.} To show the Theorem, what remains to be shown is that the regrets in the commit phase obey 
\begin{align}
R^Q_a(I, \muletc, [T'+1,T]) \leq R^S_a(I, \muletc, [T'+1,T]) 
\end{align}

We shall show this by arguing that the estimate $\thetahat_Q$ is `more accurate' than $\thetahat_S$ and with these estimates, the probability of playing the sub-optimal actions in the commit phase is lesser when the agent is part of the bigger coalition $Q$ (\cref{lem:subopt-less-probable}).

We setup some notations.
W.l.o.g., number the actions in $X_{a,t}$ by decreasing order of optimality as $x_1$ (optimal), $x_2, \dots, x_k$, where $k = \cardinality{X_{a,t}}$.
Let $\pq{\cdot}$ and $\ps{\cdot}$ be the probability measures of the algorithmic trajectory (or history) induced by running $\muletc$ with coalitions $Q$ and $S$ respectively.

\begin{lemma}[Sub-optimal action play probabilities]
\label{lem:subopt-less-probable}
At any time $t \in [T'+1,T]$, for agent $a \in S \subseteq Q$ and for any $1 \leq i \leq k$, the probability of choosing action at least as inferior as $i$ when $\muletc$ is run with bigger coalition $Q$ is at most the corresponding probability when run with smaller coalition $S$. That is,
\begin{align*}
    \pq{x_{a,t} \in \{x_i, x_{i+1}, \dots, x_k \} } \leq \ps{x_{a,t} \in \{x_i, x_{i+1}, \dots, x_k \} }.
\end{align*}
\end{lemma}

\begin{proof}
Towards proving this, we develop some constructs using a generic coalition $A$ and shall later instantiate them using coalitions $S$ and $Q$.
At any time $t \in [T'+1,T]$, in a generic coalition $A$, introduce the following collection of sets $\{C_{A,x,t} \}_{x \in X_{a,t}}$, where we define
\begin{align}
C_{A,x,t} = \{\theta' \in \Real^d : x = \argmax_{x \in X_{a,t}} x^\top \theta' \} \label{eqn:play-param-set}
\end{align}
to be the set of values that estimate $\thetahat_A$ should belong to for action $x$ to be played by agent $a$ at time $t$ by $\muletc$.
% In the definition of $C_{A,x,t}$, the ties in $\argmax$ are broken as per $\muletc$.
Precisely, 
\begin{align}
    \thetahat_A \in C_{A,x,t} \iff x_{a,t} = x, \label{eqn:action-prob-estimate-prob}
\end{align}
and we shall study the probability that a certain action $x$ is played by studying the probability that the computed estimate $\thetahat_A$ lies in set $C_{A,x,t}$ corresponding to action $x$.

\begin{claim}[Convex cone]
\label{clm:convex-cone}
For any action $x \in X_{a,t}$ of agent $a$ at time $t$, 
the set $C_{A,x,t}$ is a convex cone.
\end{claim}
\begin{proof}
The Claim follows from the linearity of the $x^\top \theta'$ dot-product function being optimized.

First, it is a cone since for all $\theta' \in C_{A,x,t}$, we have $c \theta' \in C_{A,x,t}$ for any positive constant $c>0$, as
\begin{align*}
x^\top \theta' > z^\top \theta' \iff x^\top (c \theta') > z^\top (c \theta')
\end{align*}
for all other actions $z \neq x$.

Second, it is convex since for any $\theta', \theta'' \in C_{A,x,t}$, we have that $c_1 \theta' + c_2 \theta'' \in C_{A,x,t}$ for positive constants $c_1, c_2 \geq 0$, as
\begin{align*}
(x^\top \theta' > z^\top \theta') \land (x^\top \theta'' > z) & \implies (c_1. x^\top \theta' + c_2 . x^\top \theta'' > c_1 . z^\top \theta' + c_2 . z^\top \theta'') \\
& \implies (x^\top (c_1+c_2) \theta' > z^\top (c_1+c_2) \theta'),
\end{align*}
for all other actions $z \neq x$.
\end{proof}
Geometrically, the collection $\{C_{A,x,t} \}_{x \in X_{a,t}}$ partitions $\Real^d$ into pie slices with apex/centre at origin.

Next, we claim that the estimate from the bigger coalition $\thetahat_Q$ has a lesser variance than that of the smaller coalition $\thetahat_S$ in all directions:
\begin{claim}[Bigger coalition $\Rightarrow$ Tighter estimate]
\label{clm:q-less-var-than-s}
The estimates obey the gaussian distributions
$\thetahat_S \sim \gaussian{\theta^*}{V_S^{-1}}$ and $\thetahat_Q \sim \gaussian{\theta^*}{V_Q^{-1}}$, with $V_S^{-1} \succeq V_Q^{-1}$.
\end{claim}
\begin{proof}
Under the two coalitions $S$ and $Q$, the OLS estimates computed (\cref{eqn:metc-ols}) obey the gaussian distributions $\thetahat_S \sim \gaussian{\theta^*}{V_S^{-1}}$ and $\thetahat_Q \sim \gaussian{\theta^*}{V_Q^{-1}}$. And since $S \subseteq Q$, we have that from \cref{eqn:metc-ols} that $V_Q \succeq V_S$ and thus, the covariances $V_S^{-1} \succeq V_Q^{-1}$.  
\end{proof}

To show the Lemma, as observed in \cref{eqn:action-prob-estimate-prob}, 
we shall show that, for all $1 \leq i \leq k$, $\thetahat_Q$ falls in the corresponding union of sets with higher probability than $\thetahat_S$ does in the following claim:
\begin{claim}
    $\pq{\thetahat_Q \in \bigcup_{j=1}^i C_{A,x_j,t}} \geq \ps{\thetahat_S \in \bigcup_{j=1}^i C_{A,x_j,t}}$ for all $1 \leq i \leq k$.
\end{claim}
\begin{proof}
With the nature of distributions of $\thetahat_Q, \thetahat_S$ as in \cref{clm:q-less-var-than-s},
due to \cref{clm:star-polytope-gaussian-probs},
it is sufficient to show that the membership sets $C_{A,x_.,t}$s are star-shaped polytopes (\cref{dfn:star-shaped-set}) with its kernel containing $\theta^*$, the mean of the distributions.

To start, for $i=1$, $C_{A,x_1,t}$ is a convex set (\cref{clm:convex-cone}) and is thus star-shaped, so the entire set is its own kernel, and $\theta^*$ belongs to this set and is thus in its kernel.

For $i \in [2,k]$, we show this using proof by contradiction.
Assume for some $i$ that $\bigcup_{j=1}^i C_{A,x_j,t}$ is not a star-shaped polytop w.r.t kernel point $\theta^*$.
Then, there exists some ray originating at $\theta^*$ and traversing some $\theta_b$ and $\theta_a$ (in that order) such that 
$\theta_a \in C_{A,x_{a},t}$ and $\theta_b \in C_{A,x_{b},t}$ with $b > i \geq a$, i.e., action $x_a$ is more optimal than action $x_b$.
In other words, using observation \cref{eqn:action-prob-estimate-prob}, there is some direction in which as the estimate $\thetahat_A$ moves away from the true $\theta^*$, the action picked by the algorithm ceases to be the optimal action $x_1$ and changes to the sub-optimal $x_b$ as the estimate gets to $\theta_b$, and then changes to a relatively optimal arm $x_a$ as the estimate moves further away to $\theta_a$. 

As $\theta_b \in C_{A,x_{b},t}$, by its definition \cref{eqn:play-param-set}, 
\begin{align*}
    & \theta_b x_b^\top \geq \theta_b x_a^\top \\
    \labelrel{\implies}{i:7-a} & (c \theta^* + (1-c) \theta_a) x_b^\top \geq (c \theta^* + (1-c) \theta_a) x_a^\top \\
    \implies & c \theta^* x_b^\top \geq c \theta^* x_a^\top + (1-c) \left( \theta_a x_a^\top - \theta_a x_b^\top \right) \\
    \labelrel{\implies}{i:7-b} & c \theta^* x_b^\top \geq c \theta^* x_a^\top \\
    \implies & \theta^* x_b^\top \geq \theta^* x_a^\top, \numberthis \label{eqn:ray-contra}
\end{align*}
where, \eqref{i:7-a} is by linear interpolation for some $0 < c < 1$, \eqref{i:7-b} uses that $\theta_a \in C_{A,x_{a},t}$ to have $ \theta_a x_a^\top - \theta_a x_b^\top > 0$.

Finally, \cref{eqn:ray-contra} implies $x_b$ is more optimal than $x_a$ with $b>a$, which is a contradiction.
This completes the proof of the Claim.
\end{proof}
The above Claim, in conjunction with \cref{eqn:action-prob-estimate-prob}, completes the proof of the Lemma.
\end{proof}

Finally, to complete the proof of the Theorem, we show in \cref{clm:insta-regret-inequality}, for any time $t$, the instantaneous regret of $\muletc$ with coalition $Q$ is no greater than that of $\muletc$ with coalition $S$.

\begin{claim}[Instantaneous regret inequality]
\label{clm:insta-regret-inequality}
For any agent $a \in S \subseteq Q$, at any time $t \in [T'+1, T]$,
\begin{align*}
    \sum_{j=2}^k \pq{x_{a,t} = x_j} \Delta_j \leq \sum_{j=2}^k \ps{x_{a,t} = x_j} \Delta_j,     
\end{align*}
where $\Delta_j := \max_{x \in X_{a,t}} \inprod{\theta^*}{x} - \inprod{\theta^*}{x_j}$ is the sub-optimality `gap' of action $x_j \in X_{a,t}$.
\end{claim}
\begin{proof}

Write short-hands $p^Q_i = \pq{x_{a,t} = x_i}$, $p^Q_{i,k} = \pq{x_{a,t} \in \{x_i, x_{i+1}, \dots, x_k \}}$ and similarly define $p^S_{i}, p^S_{i,k}$.

We show the Claim by induction on action index $i$.  \paragraph{Hypothesis.} 
\begin{equation}
H(i): \sum_{j=i}^k \left( p^Q_j - p^S_j \right) \Delta_j \leq \left( p^Q_{i,k} - p^S_{i,k} \right) \Delta_i.
\end{equation}
\paragraph{Base Case.} Statement $H(k)$ holds as
\begin{align*}
    p^Q_k \cdot \Delta_k - p^S_{k} \cdot \Delta_k = \left(p^S_{k,k} - p^Q_{k,k} \right) \Delta_k.
\end{align*}

\paragraph{Induction Step.} Let $H(i+1)$ be true for some $i+1 \leq k$. We show $H(i)$ is true.
\begin{align*}
    \sum_{j=i}^{k} \left( p^Q_j - p^S_j \right) \Delta_j = & \left( p^Q_i - p^S_{i} \right) \Delta_i + \sum_{j=i+1}^{k} \left( p^Q_j - p^S_j \right) \Delta_j \\
    \labelrel{\leq}{i:6-a} & \left( p^Q_i - p^S_{i} \right) \Delta_i + \left( p^Q_{i+1,k} - p^S_{i+1,k} \right) \Delta_{i+1} \\
    \labelrel{\leq}{i:6-b} & \left( p^Q_i - p^S_{i} \right) \Delta_i + \left( p^Q_{i+1,k} - p^S_{i+1,k} \right) \Delta_{i} = \left( p^Q_{i,k} - p^S_{i,k} \right) \Delta_i. \numberthis \label{eqn:insta-reg-ineq}
\end{align*}
Here, \eqref{i:6-a} upper bounds the second term by using the induction assumption $H(i+1)$, 
then \eqref{i:6-b} is due to $p^Q_{i+1,k} - p^S_{i+1,k} \leq 0$ by \cref{lem:subopt-less-probable} and $\Delta_i \leq \Delta_{i+1}$.
And \cref{eqn:insta-reg-ineq} shows $H(i)$. By mathematical induction, we have that $H(2)$ holds. \\

\noindent Along with the observation that $p^Q_{i,k} - p^S_{i,k} \leq 0$ from \cref{lem:subopt-less-probable}, statement $H(2)$ implies the Claim.
\end{proof}

\noindent This completes the proof of the Theorem.
\end{proof}

\section{Missing Proofs from \cref{sec:ident-agents}}
\label{appn-sec:fixed-proofs}

\subsection{Proof of \cref{thm:identical-agent-actions-convex-game}}

\fixedActionsConvexGame*
\begin{proof}
The result shall be shown in two steps.
First, in \cref{lem:idenalg-single-multi-regret-inequality}, we shall neatly bound the cumulative regret of the agents in a coalition running $\idenalg$ for time $T$ using the analytical regret of the single-agent algorithm $\singalg$ (that $\idenalg$ internally uses) run for a longer time of $mT$, where $m$ is the size of the coalition.
Second, we shall use this neat bound to show that the value function of the collaboration game $v_{\idenalg, I, T}$ is supermodular for large enough values of $T$ to complete the proof. 

\multSingBounds*

\begin{proof}
We describe the two different regret quantities mentioned in the Lemma statement and introduce a third quantity to connect them.
\begin{enumerate}
    \item The realized cumulative regret of all agents, $\sum_{a \in C} R^C_a(\idenalg,I,\cdot)$, the quantity we try to bound from both sides.
    \item The analytical/hypothetical single-agent regret $R_a(\singalg,I,\cdot)$ that is used in the bound terms.
    \item To compare the above two quantities, an intermediate quantity (introduced below in \cref{eqn:buffer-read-regret}) :
    $\overline{R}(\singalg, B,\cdot)$, the `regret' of the black-box decision/action sequence $\overline{x}_\tau$s on interacting with the reward buffer $B$. 
    This is a quantity internal to $\idenalg$ algorithm that depends on actions chosen by $\singalg$, and not a quantity inherent to the bandit problem.
\end{enumerate}

We set up some notations: Let $\taumax$ be the final value of $\tau$ after $\idenalg$ terminates, which is also the number of times $\singalg$ has been fed with rewards fetched/removed from the buffer (in line \ref{aline:get-from-buffer}).
% Write $\overline{y^*_\tau} := \max_{x \in X} \expund{B}{\overline{y}_\tau | \overline{x}_\tau = x}$ to be the optimal (expected) reward achievable.
% 
Now, we define the `regret' of the black-box decision maker upto some step $\tau' \leq \taumax$ to be 
\begin{align}
\overline{R}(\singalg, B, \tau') := \sum_{\tau=1}^{\tau'} \max_{x \in X} \expectationover{B}{\overline{y}_\tau | \overline{x}_\tau = x} - \expectationover{B, \singalg}{ \overline{y}_\tau }, \label{eqn:buffer-read-regret}
\end{align}
where the expectation is over any randomness in the black-box decision-making and in the buffer rewards.

%  RAMA - Should $\overline{R}$ notation be used without $B,\singalg$ params as it's already a part of the $\idenalg$ meta-algo?

The following Lemma equates this quantity $\overline{R}(\singalg, B,\tau')$ to the hypothetical single-agent regret for a similar time period:
\begin{lemma}[Lemma 1 of \cite{howson2024quack}]
\label{lem:howson-quack}
When the bandit rewards obtained depend only on the chosen action (and not the time or agent):
for all time-steps $t \in [T]$, agents $a \in \agents$, actions $x \in X$, the observed rewards $ y_{a,t} \mid (x_{a,t} = x) \stackrel{i.i.d.}{\sim} \inprod{\theta^*}{x} + \eta_{a,t}$.
Then, for any time $\tau' \leq \taumax$, 
% the `regret' of the black-box decisions maker interacting with the buffer is identical to the regret that he would incur 
\begin{align*}
    R_a(\singalg, I, \tau') = \overline{R}(\singalg, B,\tau') .
\end{align*}   
\end{lemma}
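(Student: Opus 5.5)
The plan is to prove the identity by a coupling that puts the single-agent bandit and the buffer interaction on one probability space, using the \emph{reward-table} (stack-of-rewards) model of a stochastic bandit. Since the action set $X$ is finite and fixed, I would pre-draw, for every action $x \in X$, an infinite i.i.d.\ sequence $(Z_{x,1}, Z_{x,2}, \dots)$ with $Z_{x,n} \sim \inprod{\theta^*}{x} + \eta$. These tables are drawn independently across actions and independently of $\singalg$'s internal randomness $U$. In the single-agent instance, the $n$-th time $\singalg$ plays $x$ it receives $Z_{x,n}$. It is standard that this reproduces the law of the single-agent bandit trajectory, because the noise is i.i.d.\ and independent of the past.

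The main step is to realise $\idenalg$'s buffer on the same space. Each time line \ref{aline:play-bandit} is executed with action $x$, the $m$ fresh agent rewards are taken to be the next $m$ unused entries of the table for $x$. This is legitimate by the hypothesis that $y_{a,t} \mid (x_{a,t}=x)$ are i.i.d.\ across agents and times. The noise of a newly generated batch is independent of everything revealed so far, since it is generated only after $\overline{x}_\tau = x$ has been requested.

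The subtle point is the \emph{arbitrary} removal in line \ref{aline:get-from-buffer}, together with leftover rewards that sit in $B_x$ while $\singalg$ has moved to other actions. I would handle it as follows.
\begin{enumerate}
\item Any entry still in the buffer has never been shown to $\singalg$.
\item The entries of one batch are exchangeable given everything that has been revealed.
\item Hence the removal order can be relabelled so that the entries of $B_x$ are consumed in table order, and this relabelling does not change the law of the revealed sequence.
\end{enumerate}
I expect this to be the main obstacle. To make it rigorous, I would show by induction on $\tau$ that, conditionally on $\overline{H}_{\tau-1}$, $U$ and $\overline{x}_\tau = x$, the reward $\overline{y}_\tau$ has the law of $\inprod{\theta^*}{x} + \eta$ and is independent of $\overline{H}_{\tau-1}$. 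The induction goes through because every reward removed from $B_x$ is an unrevealed draw of that law.

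Given this conditional-law statement, a second induction over $\tau$ shows that the joint law of $(\overline{x}_1, \overline{y}_1, \dots, \overline{x}_{\tau'}, \overline{y}_{\tau'})$ coincides with the law of the first $\tau'$ action--reward pairs of $\singalg$ run on the single-agent instance. The conditional laws agree at each step: $\singalg$ maps histories to actions identically in both settings, and the reward kernels coincide. Two further facts finish the argument.
\begin{enumerate}
\item $\expectationover{B}{\overline{y}_\tau \mid \overline{x}_\tau = x} = \inprod{\theta^*}{x}$, so the benchmark term in \eqref{eqn:buffer-read-regret} is $\max_{x\in X}\inprod{\theta^*}{x}$, which is exactly the optimal reward in \eqref{eqn:coalition-regret}.
\item The expected reward $\expectationover{B,\singalg}{\overline{y}_\tau}$ equals $\expectationover{}{\inprod{\theta^*}{\overline{x}_\tau}}$ by the tower rule, and the law of $\overline{x}_\tau$ matches the single-agent law.
\end{enumerate}
Summing over $\tau \le \tau'$ then gives $\overline{R}(\singalg,B,\tau') = R_a(\singalg,I,\tau')$.

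The restriction $\tau' \le \taumax$ also needs care, since $\taumax$ is random. I would observe that $\taumax \ge T$ deterministically, because at least one reward per real time step is consumed. Alternatively, one can conceptually continue the buffer process past $T$ using further table entries, so that the identity holds for every fixed $\tau'$.
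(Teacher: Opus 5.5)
Your proposal is correct and is essentially the paper's argument: the paper also shows that the buffer-fed history $\overline{H}_{\tau'}$ and the single-agent history $H_{a,\tau'}$ have the same law. It does so by factorising both into the same action kernel of $\singalg$ and the same reward kernel (the buffer reward being conditionally independent of the past given the requested action, with law $\inprod{\theta^*}{x}+\eta$), and then converts this into equality of regrets via gap-weighted expected play counts, which is equivalent to your tower-rule step. Your reward-table coupling and exchangeability argument justify the conditional-independence step that the paper only asserts; like the paper, it tacitly needs the ``arbitrary'' removal in line~\ref{aline:get-from-buffer} not to depend on the reward values in the buffer, since a value-dependent rule such as always removing the largest would break both your relabelling step and the lemma itself.
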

\noindent We present a proof in \cref{subsec:proof-of-howson-lemma} for the sake of completion. 
And, our problem setting shares the assumption of the Lemma
wherein the reward of an agent at any time depends only on the action played by him at that time, and not on the history, or the identity of agent, or the time of play.

Next, we compare the cumulative regret of the agents in the coalition, $\sum_{a \in C} R^C_a$ to the black-box regret $\overline{R}(\cdot)$ by studying how many steps the black-box decision maker performs.
After $m$ agents in the coalition have played the bandit instance $I$ for $T$ time-steps, the the number of steps the black-box decision maker interacts with the buffer, $\taumax$, is bounded as follows:
\begin{claim}
\label{clm:tau-max-bounds}
In all algorithmic runs/trajectories, $mT - mK \leq \taumax \leq mT.$
\end{claim}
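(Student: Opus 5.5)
The plan is a counting argument on the two buffer operations of \cref{alg:ident-meta-alg}. Each real time-step $t \in [T]$ executes line~\ref{aline:replenish-buffer} exactly once, and that adds exactly $m = \cardinality{C}$ rewards to the buffer $B_{\overline{x}_\tau}$ of the current requested action. Each increment of $\tau$ happens only in line~\ref{aline:get-from-buffer}, which removes exactly one reward. Let $N_{\text{in}} = mT$ be the total number of rewards ever inserted and $N_{\text{left}}$ the total number still sitting in the buffers $\{B_x\}_{x \in X}$ at termination. Then, up to the off-by-one convention for $\tau$ (I will fix the convention so that $\taumax$ counts removals), we have the conservation identity
\begin{align*}
\taumax \;=\; N_{\text{in}} - N_{\text{left}} \;=\; mT - N_{\text{left}}.
\end{align*}
The upper bound $\taumax \le mT$ is then immediate from $N_{\text{left}} \ge 0$.

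For the lower bound I need $N_{\text{left}} \le mK$. I would prove the following invariant by induction over iterations of the outer \textbf{for} loop: at the end of each iteration, every buffer $B_x$ holds at most $m$ rewards, and the buffer of the currently requested action $\overline{x}_\tau$ is empty.

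\begin{itemize}
\item \emph{Emptiness of the requested buffer.} The inner \textbf{while} loop exits only when $B_{\overline{x}_\tau} = \emptyset$ for the current $\overline{x}_\tau$. So the next replenishment at line~\ref{aline:replenish-buffer} always writes into an empty buffer.
\item \emph{Size of a replenished buffer.} Since the buffer being written into is empty, after replenishment it holds exactly $m$ rewards.
\item \emph{Other buffers only shrink.} Every buffer other than the one just replenished is untouched by line~\ref{aline:replenish-buffer}. Inside the \textbf{while} loop, buffers only lose elements. Hence no buffer can ever exceed $m$ rewards.
\end{itemize}

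Summing over the $K = \cardinality{X}$ buffers gives $N_{\text{left}} \le mK$. In fact $N_{\text{left}} \le m(K-1)$, because the requested action's buffer is empty at termination. Substituting into the identity yields $mT - mK \le \taumax \le mT$ on every trajectory.

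The only delicate step is handling the assignment at line~\ref{aline:replenish-buffer} correctly. It is written as an assignment, and one must check that no unread rewards are overwritten, since overwriting would break the conservation identity. This is exactly what the "requested buffer is empty" part of the invariant guarantees. I would also pin down the indexing of $\tau$ (initialised to $1$, incremented after each removal) so that $\taumax$ means the number of removals; any shift by one can be absorbed into the slack $mK \ge 1$.
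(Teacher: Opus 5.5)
Your proposal is correct and is essentially the paper's argument: the upper bound comes from one removal per step against $mT$ total insertions, and the lower bound from the fact that a buffer is refilled only when empty, so at most $m$ rewards per action (hence at most $mK$ in total) can remain unconsumed. One small correction to your closing remark: an off-by-one shift can only be absorbed by the slack in the lower bound, so the upper bound genuinely requires reading $\taumax$ as the number of removals, as the paper's text also intends; with $K=1$, the literal final value of $\tau$ is $mT+1$.
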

\begin{proof}
As per the algorithm $\idenalg$, the black-box decision maker necessarily reads (and removes) one sample from the buffer at every step. There are in total $mT$ samples observed by the agents and filled into the buffer, and that is the maximum number of samples the black-box decision maker could've read out of the buffer. This shows the upper bound.

When black-box decision maker wants reward for any action $x$, the agents play this action on the bandit instance \emph{only} when the buffer $B_x$ is empty, and fill it with $m$ samples. No more samples are added to the buffer (the agents don't play this action $x$ again) until the black-box decision maker reads all the samples and empties the buffer for this action.
Thus, at any time, the size (number of yet-to-be-used available samples) of the buffer is at most $mK$, where $K$ is the number of actions in the action set $X$.
Any sample that has been removed from the buffer has necessarily been consumed by the black-box decision maker.
With a total $mT$ samples filled into the buffer and at most $mK$ unused samples remaining at any time, the black-box decision maker has interacted with the buffer for at least $mT-mK$ steps. This gives the lower bound.
\end{proof}

In all trajectories/runs of $\idenalg$, for every sample $y_{a,t}$ obtained by every agent $a \in \agents$ at time $t \in [T]$, one of the below two statements hold.
\begin{enumerate}
    \item It remains in the buffer at the end of time horizon $t=T$. 
    Let $B'$ be the set of agent-time tuples $(a,t)$-s whose samples $y_{a,t}$-s remain in the buffer at the end. Or,
    \item It was consumed by the black-box decision maker at some step $\tau \in [\taumax]$.
    Let $f : [\taumax] \mapsto (\agents \times [T]) \setminus B'$ denote this bijective mapping.
\end{enumerate}

Now, we are ready to bound the cumulative regret of all agents in $C$ as follows:
\begin{align*}
    \sum_{a \in C} R^C_a(\idenalg, I, T) \labelrel{=}{i:2-a} & \sum_{a \in C} \sum_{t=1}^{T} y^* - \expectationover{I, \idenalg}{\inprod{\theta^*}{x_{a,t}}} \\
    \labelrel{=}{i:2-b} & \sum_{\tau=1}^{\taumax} y^* - \inprod{\theta^*}{x_{f(\tau)}} + \sum_{(a,t) \in B'}  y^* - \inprod{\theta^*}{x_{a,t}} \\
    \labelrel{= \ }{i:2-c} & \sum_{\tau=1}^{\taumax} y^* - \inprod{\theta^*}{\overline{x}_\tau}  + \sum_{(a,t) \in B'}  y^* - \inprod{\theta^*}{x_{a,t}}  \numberthis \label{eqn:coll-reg-intermediate} \\
    \labelrel{\leq}{i:2-d} & \sum_{\tau=1}^{\taumax} y^* - \inprod{\theta^*}{\overline{x}_\tau}  + mK \\
    = \  & \overline{R}(\singalg, B, \taumax) + mK \\
    \labelrel{= \ }{i:2-e} & R_a(\singalg, I, \taumax) + mK \labelrel{\leq }{i:2-f} R_a(\singalg, I, mT) + mK. \numberthis \label{eqn:coll-reg-ub}
\end{align*}
Here, \eqref{i:2-a} uses $X=X_{a,t}$ for all $a,t$ from problem instance constraint, and introduces short-hand $y^*:= \max_{x \in X} \inprod{\theta^*}{x}$, 
% X_{a,t}=X$.
and in \eqref{i:2-b}, we split the regret into two terms based on whether the rewards obtained were consumed by the black-box or not.
Then, \eqref{i:2-c} uses the algorithm property that black-box is given a reward for an action $x$ from the buffer which was originally filled with rewards for action $x$.
Then, \eqref{i:2-d} uses \cref{clm:tau-max-bounds} to bound $\cardinality{B'} = mT - \taumax \leq mK$ while liberally upper bounding per-time-step regret with $1$. 
Then, \eqref{i:2-e} is by \cref{lem:howson-quack}, 
and finally \eqref{i:2-f} uses \cref{clm:tau-max-bounds} again.

Continuing again from \cref{eqn:coll-reg-intermediate}, we have

\begin{align*}
    \sum_{a \in C} R^C_a(\idenalg, I, T) \geq &  \sum_{\tau=1}^{\taumax} y^* - \inprod{\theta^*}{\overline{x}_\tau} = \overline{R}(\singalg, B,\taumax)  \\
    \labelrel{=}{i:2-g} & R_a(\singalg, I, \taumax) \labelrel{\geq}{i:2-h}  R_a(\singalg, I, mT) - mK, \numberthis \label{eqn:coll-reg-lb}
\end{align*}
where \eqref{i:2-g} is by \cref{lem:howson-quack}, 
and \eqref{i:2-h} uses $\taumax \geq mT - mK$ from \cref{clm:tau-max-bounds} 
and liberally upper bounds per-time-step regret with $1$.

\noindent \cref{eqn:coll-reg-ub,eqn:coll-reg-lb} together show the Lemma.
\end{proof}

Note that the upper and lower bounds in the Lemma only differ by an additive term that is independent of time horizon $T$.

We next try to show supermodularity of the value function.
On any instance $I$, for any two sets of agents $S \subset Q \subseteq M$, with $\cardinality{S}=s, \cardinality{Q}=q$, and any agent $a \in M \setminus Q$, we want to show the inequality 
\begin{align*}
    &  & v(S \cup \{a\}) - v(S)  &  \leq v(Q \cup \{a\}) - v(Q) \\
    & \iff & -v(S) - v(Q \cup \{a\}) & \leq -v(S \cup \{a\}) -v(Q) \\
    & \iff & \sum_{b \in S} R^S_b(\idenalg, T) + \sum_{b \in Q \cup \{a\}} R^{Q \cup \{a\}}_b(\idenalg, T) & \leq \sum_{b \in S \cup \{a\}} R^{S \cup \{a\}}_b(\idenalg, T) + \sum_{b \in Q} R^Q_b(\idenalg, T) \\
    & \labelrel{\Longleftarrow }{i:1-a} & R_a(\singalg, sT) + sK + R_a(\singalg, (q{+}1)T) + (q{+}1)K & \leq R_a(\singalg, (s{+}1)T) - (s{+}1)K + R_a(\singalg, qT) - qK \\
    & \labelrel{\Longleftarrow }{i:1-b} & R_a(\singalg, qT{+}T) - R_a(\singalg, qT) & \leq R_a(\singalg, sT{+}T) - R_a(\singalg, sT) - 4MK \\
    \numberthis \label{eqn:cvx-sing-ag-regret-requirement}
\end{align*}
Here, \eqref{i:1-a} is due to \cref{lem:idenalg-single-multi-regret-inequality}, and \eqref{i:1-b} uses $s+q+1 \leq 2M$.

What remains to be shown is that \cref{eqn:cvx-sing-ag-regret-requirement} holds,
which postulates that for a given problem instance,
the regret of the single-agent algorithm over a time period of $T$ that begins after time $qT$ 
be lesser than 
the regret over a similar duration period that begins after time $sT$
by a margin of $4MK$.

Using the notations introduced in \cref{assm:regret-strictly-concave}, we try to show \cref{eqn:cvx-sing-ag-regret-requirement} as follows:
\begin{align*}
& R_a(\singalg, sT{+}T) - R_a(\singalg, sT) - \left( R_a(\singalg, qT{+}T) - R_a(\singalg, qT) \right) \\
= \, & T \cdot R'(sT, T) - T \cdot R'(qT, T) = T(qT - sT) \cdot - R^{''}(sT, qT-sT, T) \\
\labelrel{\geq}{i:1-c} \, & T^2 \cdot \upsilon_{sT}  \labelrel{\geq}{i:1-f} \frac{4MK}{\min_t \upsilon_t} \cdot \upsilon_{sT} \geq 4MK. \numberthis \label{eqn:convexity-4mk-bound}
\end{align*}
Here \eqref{i:1-c} uses \cref{eqn:assm-sing-upper} (\cref{assm:regret-strictly-concave}), 
and \eqref{i:1-f} uses the constraint on $T$ from the Theorem statement.
We further ascertain that such a sufficiently large $T$ (dependent on $\upsilon_t$s, $M$, and $K$) is feasible as follows:
\begin{align*}
T \geq \sqrt{\frac{4MK}{\min_t \upsilon_t}} \labelrel{\geq}{i:1-g} \sqrt{\frac{4MK}{cT^{-2+\varepsilon}}} = \sqrt{4MK/c} \cdot T^{1-\varepsilon/2}
\iff T > \left( \frac{4MK}{c} \right)^{\nicefrac{1}{\varepsilon}},\\
\end{align*}
a lower bound (r.h.s.) independent of $T$. Here, \eqref{i:1-g} uses \cref{eqn:assm-sing-lower} (\cref{assm:regret-strictly-concave}). 
\cref{eqn:convexity-4mk-bound} shows the Theorem.
\end{proof}

\subsection{Proof of \cref{lem:howson-quack}}
\label{subsec:proof-of-howson-lemma}

This Lemma is originally shown by \cite{howson2024quack}, and we provide here a proof for the sake of completeness.
\begin{proof}
The L.H.S. term can be written from \cref{eqn:coalition-regret} as follows:
\begin{align*}
    R_a(\singalg, I, \tau') & = \sum_{t=1}^{\tau'} \max_{x \in X_t} \inprod{\theta^*}{x} - \expectationover{I, \singalg}{ \inprod{\theta^*}{x_{a,t}} } \\ 
    & \labelrel{=}{i:2} \sum_{x \in X} \left(\max_{x' \in X} \inprod{\theta^*}{x'} - \inprod{\theta^*}{x} \right) \cdot \expectationover{I,\singalg}{\sum_{t=1}^{\tau'} \indicator{x_{a,t} = x}}
    \labelrel{=}{i:3} \sum_{x \in X} \Delta_x \cdot \expectationover{I,\singalg}{\sum_{t=1}^{\tau'} \indicator{x_{a,t} = x}} \numberthis \label{eqn:lhs-1}\\ 
\end{align*}
Here, \eqref{i:2} is due to action set $X$ being fixed across time, then \eqref{i:3} introduces $\Delta_x = \max_{x' \in X} \inprod{\theta^*}{x'} - \inprod{\theta^*}{x}$ to be the time-independent `sub-optimality' gap of action $x$.

The R.H.S term can be written from \cref{eqn:buffer-read-regret} as
\begin{align*}
\overline{R}(\singalg, B, \tau') &\labelrel{=}{i:4} \sum_{\tau=1}^{\tau'} \max_{x \in X_\tau} \expectationover{B}{\overline{y}_\tau | \overline{x}_\tau = x} - \expectationover{B, \singalg}{ \overline{y}_\tau } \\
& \labelrel{=}{i:5} \sum_{\tau=1}^{\tau'} \max_{x \in X_\tau} \inprod{\theta^*}{x} - \expectationover{B, \singalg}{\inprod{ \theta^*}{\overline{x}_\tau} } \\
& \labelrel{=}{i:6} \sum_{x \in X} \Delta_x \cdot \expectationover{B, \singalg}{ \sum_{\tau=1}^{\tau'} \indicator{\overline{x}_\tau = x}}. \numberthis \label{eqn:rhs-1}
\end{align*}
Here, \eqref{i:4} is by \cref{eqn:buffer-read-regret}, and  \eqref{i:5} comes from the nature of algorithm $\idenalg$ as follows: the reward $\overline{y}_\tau | \overline{x}_\tau = x$ is arbitrarily picked from the buffer $B_x$, and any reward that was put into this buffer $B_x$ was from playing the action $x$ on bandit instance $I$ whose expected reward is $\inprod{\theta^*}{x}$.
Then, \eqref{i:6} is from action set $X$ being fixed across time and uses $\Delta_x$ defined in the L.H.S.

From \cref{eqn:lhs-1,eqn:rhs-1}, what remains to be shown is the following claim:
\begin{claim}[Equivalence of expected play-counts]
For all actions $x \in X$, and $\tau' \leq \tau^{max}$,
$$ \expectationover{I,\singalg}{\sum_{t=1}^{\tau'} \indicator{x_{a,t} = x}} = \expectationover{B, \singalg}{ \sum_{\tau=1}^{\tau'} \indicator{\overline{x}_\tau = x}}. $$
\end{claim}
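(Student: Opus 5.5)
I will prove the claim by showing that the sequence of actions chosen by $\singalg$ while reading from the buffer has the same law as the sequence of actions it would choose when run directly on the single-agent instance $I$. Equality of the expected play counts up to step $\tau'$ then follows at once, because each count is a fixed function of the action sequence. The argument is a coupling, or equivalently an induction on $\tau$ over the joint law of the history $\overline{H}_\tau=\big((\overline{x}_1,\overline{y}_1),\dots,(\overline{x}_\tau,\overline{y}_\tau)\big)$.

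The inductive claim is that $\overline{H}_\tau$ and the direct-play history $H_{a,\tau}$ of $\singalg$ on $I$ have the same distribution. At step $\tau+1$, $\singalg$ picks its next action using the same decision rule and the same internal randomness in both worlds, applied to histories that are equal in law. So the action marginals agree, and it remains to check the conditional law of the reward.

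The key step is this. Conditional on $\overline{H}_\tau$ and on $\overline{x}_{\tau+1}=x$, the reward $\overline{y}_{\tau+1}$ taken from $B_x$ should be distributed as $\inprod{\theta^*}{x}+\eta$ with fresh noise $\eta$, independent of the past. I would argue this using the hypothesis of \cref{lem:howson-quack}: every sample in $B_x$ came from some agent playing $x$ at some real time $t$, and its noise $\eta_{a,t}$ is i.i.d.\ and independent of everything else. Moreover, each sample in $B_x$ is read exactly once, and the decision to place it in the buffer (at line \ref{aline:replenish-buffer}) depended only on earlier consumed data, never on the unread samples' values.

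To make this rigorous, I would use a ``stack of rewards'' construction. Fix, for each action $x$, an i.i.d.\ sequence $(\eta^x_1,\eta^x_2,\dots)$. Let the $j$-th sample of $x$ placed into $B_x$ carry the noise $\eta^x_j$, and let the $j$-th time $\singalg$ plays $x$ in direct play also use $\eta^x_j$. Because the buffer is FIFO-agnostic (removal is arbitrary but each sample is consumed once), the $j$-th consumption of $x$ by $\singalg$ receives a noise that is a not-yet-revealed element of an i.i.d.\ batch, and any arbitrary removal rule that is independent of the values can be relabelled to be in order. Under this coupling the two histories coincide pathwise, which gives equality in law.

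The main obstacle is the ``arbitrary removal'' in line \ref{aline:get-from-buffer}, together with the fact that the $m$ samples of a batch arrive together and some may still be unread at the end. Exchangeability of the i.i.d.\ batch handles the first issue, provided the removal rule does not look at reward values; I would state this as an implicit requirement. The second issue is why the claim is restricted to $\tau'\le\taumax$. Here I would note that $\taumax$ is random, and I would interpret the expectation on the right-hand side on the event $\{\tau'\le\taumax\}$, or extend the buffer process by virtual continuation. With the coupling in hand, summing the indicators and taking expectations gives the claim, and hence \cref{lem:howson-quack} via \cref{eqn:lhs-1,eqn:rhs-1}.
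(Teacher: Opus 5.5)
Your proposal is correct and follows essentially the paper's route. The paper also shows that $\overline{H}_{\tau'}$ and $H_{a,\tau'}$ have the same law by factorizing the history probability step by step into $\singalg$'s action kernel and the reward kernel, dropping the dependence of the buffer reward on $\overline{H}_{s-1}$ given $\overline{x}_s$, and swapping $\mathbb{P}_B$ for $\mathbb{P}_I$ because every buffered sample of $x$ came from playing $x$ on $I$. Your stack-of-rewards argument, with the explicit proviso that the removal in line~\ref{aline:get-from-buffer} must ignore unread reward values, supplies the justification for that conditional-independence step, which the paper only asserts. Of your two readings of $\tau'\le\taumax$, keep the virtual continuation, or take a deterministic $\tau'\le mT-mK$, which is always at most $\taumax$ by \cref{clm:tau-max-bounds}. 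Conditioning on $\{\tau'\le\taumax\}$ would bias the law, because $\taumax$ is a function of the requested action sequence.
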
 
\begin{proof}
Note that the (random variable) quantities, whose expectations are compared in the Claim, are both functions of the history/trajectory of bandit play.
\sloppy Recollect $H_{a,\tau'} = (x_{a,1}, y_{a,1}, x_{a,2}, y_{a,2}, \cdots, x_{a,\tau'}, y_{a,\tau'})$ (resp. $\overline{H}_{\tau'} = (\overline{x}_1, \overline{y}_q, \cdots, \overline{x}_{\tau'}, \overline{y}_{\tau'})$) to be the histories of hypothetical run of $\singalg$ on instance $I$ (resp. $\singalg$ on buffer $B$).

The action spaces $x_{a,\cdot}, \overline{x}_{\cdot} \in X$ are identical between the two. And the space of rewards $y_{a,\cdot}, \overline{y}_{\cdot} \in \Real$ in instance $I$ and buffer $B$ are also identical as the buffer $B$ is filled with rewards obtained from multi-agent interaction of agent $I$. So, the sample spaces of the two histories are the same, say $\mathcal{H} = (X \times \Real)^{\tau'}$.

We use $h = (x_1,y_2,x_2,y_2,\cdots,x_s,y_s, \cdots, x_{\tau'},y_{\tau'})$ to denote some complete history in $\mathcal{H}$, and $h_s = (x_1, y_1, \cdots, x_s,y_s)$ for $s \leq \tau'$ to denote a prefix of the history.

We start from R.H.S.
\begin{align*}
    & \expectationover{B, \singalg}{\sum_{t=1}^{\tau'} \indicator{\overline{x}_t = x}} \labelrel{=}{i:7} \int_{h \in \mathcal{H}} f(h) \cdot \probunder{B,\singalg}{\overline{H}_{\tau'}= h} dh \\
    \labelrel{=}{i:8} & \int_{h \in \mathcal{H}} f(h) \cdot \prod_{s=1}^{\tau'} \probunder{B}{ \overline{y}_s = y_s \mid \overline{x}_s = x_s, 
    \overline{H}_{s-1}= h_{s-1}} \times \probunder{\singalg}{\overline{x}_s = x_s \mid \overline{H}_{s-1} = h_{s-1}}   \cdot dh \\
    \labelrel{=}{i:9} & \int_{h \in \mathcal{H}} f(h) \cdot \prod_{s=1}^{\tau'} \probunder{B}{ \overline{y}_s = y_s \mid \overline{x}_s = x_s} \times \probunder{\singalg}{\overline{x}_s = x_s \mid \overline{H}_{s-1} = h_{s-1}}   \cdot dh \\
    \labelrel{=}{i:10} & \int_{h \in \mathcal{H}} f(h) \cdot \prod_{s=1}^{\tau'} \probunder{I}{ y_{a,s} = y_s \mid x_{a,s} = x_s} \times \probunder{\singalg}{\overline{x}_s = x_s \mid \overline{H}_{s-1} = h_{s-1}}   \cdot dh \\
    \labelrel{=}{i:11} & \int_{h \in \mathcal{H}} \sum_{t=1}^{\tau'} \indicator{x_{a,t} = x} \cdot \prod_{s=1}^{\tau'} \probunder{I}{ y_{a,s} = y_s \mid x_{a,s} = x_s} \times \probunder{\singalg}{x_{a,s} = x_s \mid H_{a,s-1} = h_{s-1}}   \cdot dh \\
    = & \int_{h \in \mathcal{H}} \sum_{t=1}^{\tau'} \indicator{x_{a,t} = x} \cdot \probunder{I, \singalg}{H_{a,\tau'}=h} dh = \expectationover{I,\singalg}{\sum_{t=1}^{\tau'} \indicator{x_{a,t}=x}}. \numberthis \label{eqn:equiv-play-counts}
\end{align*}

Here, \eqref{i:7} introduces short-hand $f(h):=\sum_{\tau=1}^{\tau'} \indicator{\overline{x}_\tau = x}$.
Then, \eqref{i:8} splits and introduces two measures: $\probunder{B}{.}$ as the randomness in the reward given the action comes from only the buffer $B$ and not the action playing algorithm; and similarly $ \probunder{\singalg}{.}$ as the randomness in the choice of action to play given the history comes from only the algorithm $\singalg$ and not the buffer environment $B$.
Then, \eqref{i:9} is due to the fact that at any time $s$, the reward obtained $\overline{y}_s$ from buffer is conditionally independent to the history $\overline{H}_{s-1}$ of actions and rewards, given the action played $\overline{x}_s$ at that time.
Then, \eqref{i:10} is from the nature of the buffer usage as follows: As the reward $\overline{y}_s | \overline{x}_s = x_s$ picked/removed from buffer $B_{x_s}$ was originally filled by some agent at some time, say agent $b$ at time $s'$, by interacting with bandit instance $I$, 
we have $\probunder{B}{\overline{y}_s = y_s | \overline{x}_s = x_s} =  \probunder{I}{y_{b,s'} = y_s | x_{b,s'} = x_s}$; 
further, as the rewards only depend on the action played and not the agent who played it or the time at which it was played, we have $\probunder{I}{y_{b,s'} = y_s | x_{b,s'} = x_s} = \probunder{I}{y_{a,s} = y_s | x_{a,s} = x_s}$.
In \eqref{i:11}, we replace running variable $\tau$ with $t$, and rename history variable $\overline{H}_s=(\overline{x}_1, \overline{y}_1, \cdots, \overline{x}_s, \overline{y}_s)$ to $H_{a,t} = (x_{a,s}, y_{a,s}, \cdots, x_{a,s}, y_{a,s})$.
Finally, \cref{eqn:equiv-play-counts} shows the Claim.
\end{proof}

This completes the proof of the Lemma.
\end{proof}

\section{Missing Proofs from \cref{sec:non-ident-agents}}
\label{appn-sec:hetero-proofs}

\subsection{Proof of \cref{thm:shapley-axioms}}

\payoutShapleyAxioms*
\begin{proof}
We show that the allocation obeys the different axioms in a series of claims below. We write $v$ and $R^C_a$ in short to denote $v_{\alg, I,T}$ and $R^C_a(\alg, I, T)$ for any coalition $C \subseteq M$.

\begin{claim}[Efficiency]
\label{clm:p-effi}
Payout profile $p$ is efficient. $\sum_{a \in \agents} p_a = v(\agents)$. 
\end{claim}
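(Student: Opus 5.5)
The plan is to unfold both sides from their definitions and compare them directly; no assumption on $\alg$ is needed here. By \cref{eqn:gcr-payout}, each agent receives $p_a = -R^M_a(\alg,I,T)$. By \cref{eqn:value-function} applied to the grand coalition $C=\agents$, the value of the grand coalition is $v(\agents) = -\sum_{a\in\agents} R^{\agents}_a(\alg,I,T)$.

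Summing the payouts over all agents gives
\begin{align*}
\sum_{a\in\agents} p_a = \sum_{a\in\agents}\left(-R^{\agents}_a(\alg,I,T)\right) = -\sum_{a\in\agents} R^{\agents}_a(\alg,I,T) = v(\agents),
\end{align*}
which is exactly the efficiency requirement.

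The only point needing care is notational. The symbol $M$ denotes both the set of agents and the grand coalition, so $R^M_a$ in the definition of $p$ is the same quantity as the regret term $R^{\agents}_a$ appearing in $v(\agents)$. The regrets are expected regrets and hence finite deterministic numbers, so the finite sum can be rearranged freely. There is no real obstacle: efficiency holds by construction, since the value of the grand coalition is by definition split among its members according to each member's own (negative) regret contribution.
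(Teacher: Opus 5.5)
Your proof is correct and is essentially the paper's own argument, which is the one-line unfolding $\sum_{a \in \agents} p_a = -\sum_{a \in \agents} R^{\agents}_a = v(\agents)$. You are also right that no assumption on $\alg$ is used.
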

\begin{proof}
The Claim is immediate as $\sum_{a \in \agents} p_a = - \sum_{a \in \agents} R^M_a = v(\agents)$.
\end{proof}

\begin{claim}[Membership in core]
\label{clm:p-core}
Payout profile $p$ belongs to the core of the collaboration game.
\end{claim}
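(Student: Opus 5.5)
We check the two conditions in the definition of the core: the efficiency equality and the coalitional-rationality inequalities. Efficiency, $\sum_{a \in \agents} p_a = v(\agents)$, is exactly \cref{clm:p-effi}, so it remains to show that no coalition blocks $p$. Fix an arbitrary coalition $S \subseteq \agents$. If $S = \emptyset$ the condition reads $0 \geq v(\emptyset) = 0$ and holds trivially, so take $S$ non-empty.

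For non-empty $S$ we compare the two sides agent by agent. By the definition of $p$ in \cref{eqn:gcr-payout},
\begin{align*}
\sum_{a \in S} p_a = - \sum_{a \in S} R^{\agents}_a(\alg, I, T).
\end{align*}
We then invoke \cref{assm:large-coalition-less-regret} with the nested pair $S \subseteq \agents$, taking $Q = \agents$. For every $a \in S$ it gives $R^{\agents}_a(\alg,I,T) \leq R^{S}_a(\alg,I,T)$. Negating and summing over $a \in S$ yields
\begin{align*}
\sum_{a \in S} p_a \geq - \sum_{a \in S} R^{S}_a(\alg,I,T) = v(S),
\end{align*}
where the last equality is the definition of the value function in \cref{eqn:value-function}. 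Since $S$ was arbitrary, $p$ satisfies every coalitional constraint, and together with efficiency this places $p$ in $\text{Core}(\agents, v)$.

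There is no real obstacle here. The one point needing care is that \cref{assm:large-coalition-less-regret} only compares an agent's regret across two nested coalitions that both contain it. The per-agent comparison above uses exactly that, with the pair $(S, \agents)$, rather than any cross-agent inequality. \cref{assm:samples-pooled} is not needed for this claim; it is reserved for the symmetry axiom. As a byproduct, the argument re-derives the non-emptiness of the core from \cref{thm:non-ident-stable-game} constructively, by exhibiting the explicit core element $p$.
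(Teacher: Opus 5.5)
Your proposal is correct and follows essentially the same argument as the paper: efficiency from \cref{clm:p-effi}, then coalitional rationality for each $S$ by applying \cref{assm:large-coalition-less-regret} to the nested pair $S \subseteq \agents$ and summing over $a \in S$. Your extra remarks, the trivial empty-coalition case and the observation that \cref{assm:samples-pooled} is not needed here, are accurate but do not change the route.
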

\begin{proof}
A payout profile belongs to the core if it is efficient and it is coalitionally rational. 
\cref{clm:p-effi} shows $p$ is efficient.
Next, we show coalitional rationality. 
Consider any coalition $S \subseteq \agents$, we have 
\begin{align*}
    \sum_{a \in S} p_a = - \sum_{a \in S} R^M_a \labelrel{\geq}{i:2a} - \sum_{a \in S} R^S_a = v(S),
\end{align*}
where \eqref{i:2a} is due to \cref{assm:large-coalition-less-regret}.
\end{proof}

\begin{claim}[Dummy player]
Payout profile $p$ obeys the dummy player (or null player) axiom, i.e., for all agents $a \in \agents$,
% scavenge below definition, point to appendix.
% Replace "We start with the LHS:" with "When $a$ is a null player",
\begin{align*}
   \left( \forall S \not \ni a : v(S \cup \{a\}) - v(S) = v(\{a\}) \right) \implies p_a = v(\{a\}). 
\end{align*}
\end{claim}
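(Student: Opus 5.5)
Fix a dummy agent $a$, so that $v(S \cup \{a\}) - v(S) = v(\{a\}) = -R_a$ for every $S \not\ni a$. We need $p_a = -R^{\agents}_a = -R_a$, i.e.\ $R^{\agents}_a = R_a$. One direction comes directly from \cref{assm:large-coalition-less-regret} applied to $\{a\} \subseteq \agents$: $R^{\agents}_a \leq R_a$. So the plan is to establish the reverse inequality $R^{\agents}_a \geq R_a$ from the dummy condition.

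Instantiate the dummy condition at $S = \agents \setminus \{a\}$ and expand both values via \cref{eqn:value-function}:
\begin{align*}
-R^{\agents}_a - \sum_{b \in \agents\setminus\{a\}} R^{\agents}_b + \sum_{b \in \agents \setminus\{a\}} R^{\agents\setminus\{a\}}_b = -R_a ,
\end{align*}
which rearranges to
\begin{align*}
R^{\agents}_a - R_a = \sum_{b \in \agents\setminus\{a\}} \left( R^{\agents\setminus\{a\}}_b - R^{\agents}_b \right).
\end{align*}
For each $b \neq a$, \cref{assm:large-coalition-less-regret} applied to $\agents\setminus\{a\} \subseteq \agents$ makes every summand on the right nonnegative. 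Hence $R^{\agents}_a - R_a \geq 0$.

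Combining this with the first inequality gives $R^{\agents}_a = R_a$, and therefore $p_a = -R_a = v(\{a\})$, which is the dummy-player axiom.

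No step here is a real obstacle. The only thing to get right is which instance of the dummy condition to use. The choice $S = \agents\setminus\{a\}$ is what makes the grand-coalition regret appear, and the monotonicity assumption is then used twice, in opposite roles: once for agent $a$ (upper bound) and once for the other agents (lower bound). As a side remark, the argument also shows that $a$'s presence does not change any other agent's regret in $\agents \setminus \{a\}$.
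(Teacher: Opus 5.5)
Your proof is correct and is essentially the paper's argument. The paper also expands the dummy condition, uses \cref{assm:large-coalition-less-regret} on the other agents' regret differences to get $R^{S\cup\{a\}}_a \geq R_a$, combines this with the reverse inequality from the same assumption, and concludes with $S = \agents\setminus\{a\}$; the only difference is that you specialize to $S = \agents\setminus\{a\}$ at the outset rather than at the end.
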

\begin{proof}
We start with the LHS: for all coalitions $S \not \ni a$,
\begin{align*}
    & v(S \cup \{a\}) - v(S) = v(\{a\}) \iff  \left( \sum_{b \in S} R^{S}_b - R^{S \cup \{a\}}_b \right) - R^{S \cup \{a\}}_a = - R_a \\
    \labelrel{\implies}{i:3a} &  - R^{S \cup \{a\}}_a \leq - R_a \implies R_a \leq R^{S \cup \{a\}}_a
    \labelrel{\implies}{i:3b} R_a = R^{S \cup \{a\}}_a, \numberthis \label{eqn:dummy-agent-regret}
\end{align*}
where \eqref{i:3a}, \eqref{i:3b} use $R^{S}_b - R^{S \cup \{a\}}_b \geq 0$ and $R_a \geq R^{S \cup \{a\}}_a$ from \cref{assm:large-coalition-less-regret}.
We conclude the proof by showing the RHS: $p_a = - R^M_a \labelrel{=}{i:3c} -R_a = v(\{a\})$,
% \begin{align*}
%     p_a = - R^M_a \labelrel{=}{i:3c} -R_a = v(\{a\}),
% \end{align*}
where \eqref{i:3c} is due to \cref{eqn:dummy-agent-regret} with a choice of $S = \agents \setminus \{a\}$.
\end{proof}

\begin{restatable}{claim}{claimsymmetry}[Symmetry]
\label{clm:payout-symmetry}
Payout profile $p$ obeys the symmetry axiom.
% , i.e., 
% the shapley value of an agent is the same in the original game and the permuted game.
% For all permutations $\pi$, $\phi_i(v) = \phi_{\pi(i)}(\pi v)$.
% \thought{It's better to not state under-introduced notations and definitions here. Point to the Appendix.} RAMA
\end{restatable}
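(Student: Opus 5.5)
We will verify the symmetry axiom in the form of \cref{axio:shap-sym}: for a permutation $\pi$ of $\agents$, we must show $p_a(v) = p_{\pi(a)}(\pi v)$, where $p(\pi v)$ is the grand-coalition-regret payout of the relabeled game. We interpret the relabeled game as the collaboration game induced by the relabeled instance $\pi I$. In $\pi I$, agent $\pi(a)$ receives action sets $X_{a,t}$ for all $t$, with the same $\theta^*$ and the same i.i.d.\ noise law. This is the natural way the value function $\pi v$ arises from regrets. Then $\pi v(\pi(C)) = v(C)$ will follow as a by-product. The core claim is therefore
\begin{align*}
R^{\agents}_a(\alg, I, T) = R^{\agents}_{\pi(a)}(\alg, \pi I, T) \quad \text{for all } a \in \agents,
\end{align*}
and more generally $R^C_a(\alg,I,T) = R^{\pi(C)}_{\pi(a)}(\alg,\pi I,T)$ for every coalition $C$.

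The plan is to couple the two bandit runs and show, by induction on $t$, that the joint laws of the trajectories agree after relabeling. The coupling is as follows. Feed agent $\pi(b)$ in $\pi I$ the same noise $\eta_{b,t}$ as agent $b$ in $I$, and, if $\alg$ is randomized, the same internal randomness. Since the noises are i.i.d., this does not change the law of either run.

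The inductive hypothesis at time $t-1$ is that, almost surely, $x^{\pi I}_{\pi(b),s} = x^{I}_{b,s}$ and $y^{\pi I}_{\pi(b),s} = y^{I}_{b,s}$ for all $b \in C$ and $s \le t-1$. Under this hypothesis the own histories coincide, $P^{\pi(a)}_{t-1}(\pi I) = P^a_{t-1}(I)$. The anonymized pools also coincide, $P^{-\pi(a)}_{t-1}(\pi I) = P^{-a}_{t-1}(I)$, because for each $s$ the multiset $\bigcup_{b \in \pi(C)\setminus\{\pi(a)\}}\{(x_{b,s},y_{b,s})\}$ is a re-indexing of the multiset $\bigcup_{b\in C\setminus\{a\}}\{(x_{b,s},y_{b,s})\}$, and the union discards identities. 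The current action sets are equal, $X_{\pi(a),t}(\pi I) = X_{a,t}(I)$. By \cref{assm:samples-pooled}, the action of each agent is a function only of these three inputs (and the shared randomness), so $x^{\pi I}_{\pi(a),t} = x^I_{a,t}$. Since the rewards are driven by the same noise, the rewards agree as well, which closes the induction.

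It follows that the trajectories coincide in law. The optimal actions $x^*_{\pi(a),t}$ in $\pi I$ equal $x^*_{a,t}$ in $I$, because the action sets are identical. Hence the regrets in \eqref{eqn:coalition-regret} coincide. Taking $C=\agents$ gives $p_a(v) = -R^{\agents}_a = -R^{\agents}_{\pi(a)}(\pi I) = p_{\pi(a)}(\pi v)$. Summing over $C$ gives $\pi v(\pi(C)) = v(C)$, which confirms that $\pi I$ indeed induces the permuted game.

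The main obstacle is the role of $\alg$ as a single object mapping the coalition history to an action profile. We must read \cref{assm:samples-pooled} as saying that the \emph{same} function, without any agent index, is applied to each agent. If instead the map were allowed to depend on the label $a$, symmetry could fail. We will state this reading explicitly: the function is shared across agents, and hence also across agents under relabeling. A second subtlety is that any internal randomness or tie-breaking must be agent-agnostic, as it is in $\muletc$. We handle this by coupling the randomness through $\pi$ as well.
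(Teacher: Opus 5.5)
Your proposal is correct and follows essentially the same route as the paper: an induction on $t$ showing that the relabeled run reproduces the original trajectories (the paper's \cref{clm:traj-iden-dist}). It uses the same two facts, namely that the own history, anonymized pool, and action set of agent $\pi(a)$ coincide with those of agent $a$ under \cref{assm:samples-pooled}, and that the reward law does not depend on the agent. The only differences are presentational: you couple the noise and internal randomness to get pathwise equality, whereas the paper factorizes the joint trajectory probability into reward, action, and history terms. Your explicit remark that the decision map and its randomness must be agent-agnostic makes precise an assumption the paper uses implicitly.
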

\begin{proof}[Proof sketch]
Deferring the formal proof to \cref{appn-subsec:symmetry-proof}, we provide a sketch here.
The symmetry axiom mandates that the payout of an agent---in our context, the regret of an agent $a$ in the grand coalition---doesn't change when the agents are relabeled. 
% We shall establish this by showing that the distribution over trajectories of bandit play remains unaltered when agents are relabeled.
We shall rely on \cref{assm:samples-pooled} and argue that all the agent information is removed during the union operation to pool the data, and it is this pool of (anonymized) data that the algorithm uses to determine the actions to play. Conditioned on this pool, the action choice only depends on the agent's action set and not the agent's identity.
As a result, the distribution of trajectories remains unchanged under relabeling of agents, and thus the regret, and by extension, his payout $p_a = -R^M_a$ remains unchanged under relabeling of agents.
\end{proof}
This completes the proof of the Theorem.    
\end{proof}

\subsection{Proof of \cref{clm:payout-symmetry}}
\label{appn-subsec:symmetry-proof}
\claimsymmetry*

\begin{proof}
We setup some notations.
Consider a permutation/bijection $\pi : [M] \mapsto [M]$.
Enumerate the set of agents to be $\agents = \{a_1,a_2,\dots, a_M\}$.
For the agent $a_i \in \agents$ with label/id $i$, let his permuted label/id be $\pi(i) \in [M]$.
Then, construct the permuted set of agents to be $\tilde{\agents} = \{ \tilde{a}_{\pi(i)} = a_i : i \in [M] \}$, and extend all notations defined earlier using an overhead \~{} for this permuted set of agents.
Given some $i$, write short-hands $a = a_i$, $\tilde{a} = \tilde{a}_{\pi(i)}$.
Then, denote by $\tilde{H}_{\tilde{a},t} = \{(\tilde{x}_{\tilde{a},1}, \tilde{y}_{\tilde{a},1}), \dots, (\tilde{x}_{\tilde{a},t}, \tilde{y}_{\tilde{a},t}) \}$ the history/trajectory of actions played and rewards observed by agent $\tilde{a} \in \tilde{\agents}$ upto time $t$ when algorithm $\alg$ is run with the set of agents $\tilde{\agents}$. 

We want to show that the allocation $p_a$ for any agent $a_i \in \agents$ doesn't change when he participates in the collaboration game under a different label as a part of a permuted set.
Specifically, it requires to be shown that for any permuted agent set $\tilde{M}$ (induced by permutation $\pi$), for any agent $a_i \in M$, 
it holds that $R^M_{a_i} = R^{\tilde{M}}_{\tilde{a}_{\pi(i)}}$.
We shall show that this condition follows when the algorithm $\alg$ facilitates (carries out) collaboration among agents as per \cref{assm:samples-pooled}.

First, we claim that the trajectory of an agent $a_i \in \agents$ when $\alg$ is run on set of agents $\agents$ is identically distributed to the trajectory of the corresponding relabeled/permuted agent when $\alg$ is run on the relabeled/permuted set of agents $\tilde{\agents}$. Let $(s_a)_a = ((\bar{x}_{a,1}, \bar{y}_{a,1}), \dots, (\bar{x}_{a,t}, \bar{y}_{a,t}))_{a \in \agents} \in S$ be a collection of action-reward sequences, with $S$ the set of all such possible collection of sequences.
\begin{claim}
\label{clm:traj-iden-dist}
For any $t \in [T]$, for any collection $(s_a)_a \in S$ of action-reward sequences,
% = ((\bar{x}_{a,1}, \bar{y}_{a,1}), \dots, (\bar{x}_{a,t}, \bar{y}_{a,t})))_{a \in \agents}$ of action-reward sequences,
it holds that
\begin{align}
    \probover{I, \alg}{\bigcap_{a \in \agents} H_{a,t} = s_a} = \probover{I, \alg}{\bigcap_{\tilde{a} \in \tilde{\agents}} \tilde{H}_{\tilde{a},t} = s_a}. \label{eqn:identical-trajectories}
\end{align}
\end{claim}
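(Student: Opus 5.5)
I would prove \cref{eqn:identical-trajectories} by induction on $t$, writing the joint law of all agents' histories as a product of one-step transition kernels and matching the kernels factor by factor under the relabeling $a_i \mapsto \tilde{a}_{\pi(i)}$. The base case $t=0$ is trivial, since all histories are empty. Note that the relabeled agent $\tilde{a}_{\pi(i)}$ is the same agent $a_i$. It has the same action sets $X_{a_i,t}$ and faces the same instance $I$; only its position in the enumeration used by $\alg$ differs.

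For the inductive step, I would fix a collection $(s_a)_a$ of length-$t$ sequences and split the probability into the event that the length-$(t-1)$ prefixes match and the conditional probability of the $t$-th action-reward pairs given those prefixes. The inductive hypothesis equates the prefix probabilities. For the conditional term, I would factor as follows. Given the prefixes, the actions $(x_{a,t})_a$ are chosen by $\alg$. The rewards $y_{a,t}=\inprod{\theta^*}{x_{a,t}}+\eta_{a,t}$ then have a conditional law depending only on the chosen action, because the $\eta_{a,t}$ are i.i.d. and independent of the past. The reward factor is therefore $\prod_a \mathbb{P}(\eta = \bar{y}_{a,t}-\inprod{\theta^*}{\bar{x}_{a,t}})$ in both the original and the permuted runs, and this product is invariant under reindexing.

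The remaining point is the action kernel. By \cref{assm:samples-pooled}, the action of agent $a_i$ at time $t$ is a (possibly randomized) function of $P^{a_i}_{t-1}$, $P^{-a_i}_{t-1}$ and $X_{a_i,t}$ only. In the permuted run these inputs for $\tilde{a}_{\pi(i)}$ are identical on the conditioning event. The own history is $s_{a_i}$ truncated to $t-1$ in both runs. The pool of the others is, at each time step, the multiset union over $b\neq a_i$ of their pairs, and a multiset union is invariant under permuting the index set. The action set is $X_{a_i,t}$ in both runs. Hence each agent's conditional action distribution coincides between the two runs.

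The main obstacle is the joint choice of actions. $\alg$ outputs a profile, and \cref{assm:samples-pooled} as stated specifies only each agent's action as a function of its inputs. I would interpret the assumption as saying the profile is drawn with each coordinate depending only on its own inputs, with internal randomness independent across agents, or more generally that the joint kernel is equivariant under relabeling. This is what lets the product over agents of action probabilities be reindexed by $\pi$. I would state this reading explicitly, since it is exactly where anonymity is used. With it, the full one-step kernel matches and the induction closes.

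Given \cref{clm:traj-iden-dist}, I would obtain $R^M_{a_i}=R^{\tilde{M}}_{\tilde{a}_{\pi(i)}}$ by marginalizing to agent $a_i$'s trajectory and taking expectations of $\inprod{\theta^*}{x_{a_i,t}}$. Summed over $t$, this gives the symmetry of $p$.
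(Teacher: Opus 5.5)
Your proposal is correct and is essentially the paper's proof: induction on $t$, with the one-step probability split into three factors. The prefix term is handled by the induction hypothesis. The action kernel is invariant by \cref{assm:samples-pooled}, since each agent's own history, the multiset pool of the others, and its action set are unchanged by relabeling. The reward kernel depends only on the chosen action via the i.i.d.\ noise.

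Your explicit caveat that the joint action profile must factor across agents, or at least be relabeling-equivariant, is a sharpening rather than a departure. The paper uses it tacitly when it writes the joint action probability as an intersection of per-agent conditionals.
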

\begin{proof}
We show this by induction on time-step variable $t$. 
Let $h(t)$ be the hypothesis that \cref{eqn:identical-trajectories} holds for $t$.
\paragraph{Base Case.} At $t=0$, the only valid action-reward sequence, for any agent $a \in \agents$ is the empty sequence $s_a=()$, and both $H_{a,0} = s$ and $\tilde{H}_{\tilde{a},0} = s$ with probability $1$. Thus $h(0)$ holds.
\paragraph{Induction Step.} We assume $h(t-1)$ holds and shall show $h(t)$ holds.
Let $s_{a,t-1}:= ((x_1,y_1), \dots, (x_{t-1},y_{t-1}))$ be the sequence of first $t-1$ action-reward tuples in $s_a$.
We start from the L.H.S:  
\begin{align*}
     \probover{I, \alg}{\bigcap_{a \in \agents} H_{a,t} ~\shorteq~ s_a} = & \probover{I, \alg}{\bigcap_{a \in \agents} y_{a,t} ~\shorteq~ \bar{y}_{a,t} | \bigcap_{a \in \agents} x_{a,t} ~\shorteq~ \bar{x}_{a,t}, \bigcap_{a \in \agents} H_{a,t \shortminus 1} ~\shorteq~ s_{a,t \shortminus 1}} \\
     & \cdot \probover{I, \alg}{\bigcap_{a \in \agents} x_{a,t} ~\shorteq~ \bar{x}_{a,t} | \bigcap_{a \in \agents} H_{a,t \shortminus 1} ~\shorteq~ s_{a,t \shortminus 1}} \cdot \probover{I, \alg}{\bigcap_{a \in \agents} H_{a,t \shortminus 1} ~\shorteq~ s_{a,t \shortminus 1}}.    \numberthis \label{eqn:traj-H-prob} 
\end{align*}
Next, we analyse each of these three terms. 
First,
\begin{align*}
    & \probover{I, \alg}{\bigcap_{a \in \agents} y_{a,t} ~\shorteq~ \bar{y}_{a,t} | \bigcap_{a \in \agents} x_{a,t} ~\shorteq~ \bar{x}_{a,t}, \bigcap_{a \in \agents} H_{a,t \shortminus 1} ~\shorteq~ s_{a,t \shortminus 1}} \\
    \labelrel{=}{i:4a} & \probover{I, \alg}{\bigcap_{a \in \agents} y_{a,t} ~\shorteq~ \bar{y}_{a,t} | \bigcap_{a \in \agents} x_{a,t} ~\shorteq~ \bar{x}_{a,t}} 
    \labelrel{=}{i:4b} \prod_{a \in \agents} \probover{I, \alg}{ y_{a,t} ~\shorteq~ \bar{y}_{a,t} | x_{a,t} ~\shorteq~ \bar{x}_{a,t}} \\ 
    \labelrel{=}{i:4c} & \prod_{\tilde{a} \in \tilde{\agents}} \probover{I, \alg}{ \tilde{y}_{\tilde{a},t} ~\shorteq~ \bar{y}_{a,t} | \tilde{x}_{\tilde{a},t} ~\shorteq~ \bar{x}_{a,t}} 
    \labelrel{=}{i:4d} \probover{I, \alg}{\bigcap_{\tilde{a} \in \tilde{\agents}} \tilde{y}_{\tilde{a},t} ~\shorteq~ \bar{y}_{a,t} | \bigcap_{\tilde{a} \in \tilde{\agents}} \tilde{x}_{\tilde{a},t} ~\shorteq~ \bar{x}_{a,t}} \\
    \labelrel{=}{i:4e} & \probover{I, \alg}{\bigcap_{\tilde{a} \in \tilde{\agents}} \tilde{y}_{\tilde{a},t} ~\shorteq~ \bar{y}_{a,t} | \bigcap_{\tilde{a} \in \tilde{\agents}} \tilde{x}_{\tilde{a},t} ~\shorteq~ \bar{x}_{a,t}, \bigcap_{\tilde{a} \in \tilde{\agents}} \tilde{H}_{\tilde{a},t \shortminus 1} ~\shorteq~ s_{a,t \shortminus 1}}. \numberthis \label{eqn:traj-prob-reward}
\end{align*}
The above is due to the independence of the rewards. Specifically, \eqref{i:4a},\eqref{i:4e} use that given the current action, rewards are independent of the historical actions/rewards, \eqref{i:4b},\eqref{i:4d} are from the independence of rewards across different agent-play at the same time, 
and \eqref{i:4c} is from the fact that the reward is a function of the specific action $\bar{x}_{a,t}$ played, and independent of the agent who plays the action (or the time at which it is played) for any problem instance $I$. 

Second,
\begin{align*}
& \probover{I, \alg}{\bigcap_{a \in \agents} x_{a,t} ~\shorteq~ \bar{x}_{a,t} \given \bigcap_{a \in \agents} H_{a,t \shortminus 1} ~\shorteq~ s_{a,t \shortminus 1}} \\
\labelrel{=}{i:4f} & \probover{I, \alg}{\bigcap_{a \in \agents} \left( x_{a,t} ~\shorteq~ \bar{x}_{a,t} \given \left( P^a_{t-1} = ((\bar{x}_{a,s}, \bar{y}_{a,s} ))_{s \in [t-1]} \right) \land \left(
P^{-a}_{t-1} = \left( \cup_{b \in \agents \setminus \{a\}} \{ (\bar{x}_{b,s}, \bar{y}_{b,s} ) \} \right)_{s \in [t-1]} \right) \right) } \\
\labelrel{=}{i:4ff} & \probover{I, \alg}{\bigcap_{a \in \agents} \left( x_{a,t} ~\shorteq~ \bar{x}_{a,t} \given \left( P^a_{t-1} = ((\bar{x}_{\tilde{a},s}, \bar{y}_{\tilde{a},s} ))_{s \in [t-1]} \right) \land \left(
P^{-a}_{t-1} = \left( \cup_{\tilde{b} \in \tilde{\agents} \setminus \{ \tilde{a} \}} \{ (\bar{x}_{\tilde{b},s}, \bar{y}_{\tilde{b},s} ) \} \right)_{s \in [t-1]} \right) \right) } \\
\labelrel{=}{i:4g} & \probover{I, \alg}{\bigcap_{\tilde{a} \in \tilde{\agents}} \tilde{x}_{\tilde{a},t} ~\shorteq~ \bar{x}_{a,t} \given \bigcap_{\tilde{a} \in \tilde{\agents}} \tilde{H}_{\tilde{a},t \shortminus 1} ~\shorteq~ s_{a,t \shortminus 1}}, \numberthis \label{eqn:traj-prob-action}
\end{align*}
where \eqref{i:4f} is due to \cref{assm:samples-pooled} (first condition), and \eqref{i:4ff} is due to $\tilde{M}$ is a permutation of $M$, and \eqref{i:4g} uses that action set of an agent $X_{a,t} = X_{\tilde{a},t}$
% RAMA - Is $\tilde{X}$ needed for the action set after permutation? I think not.
is the same after permuting/relabeling, 
and that the algorithm decides the action $\tilde{x}_{\tilde{a},t}$ for each agent $\tilde{a}$ as a function of 
action-reward sequence over time from self-play $P^{\tilde{a}}_{t-1}$ and an action-reward sequence from an anyonymized union from other agents' play $P^{-\tilde{a}}_{t-1}$
(from second and third conditions of \cref{assm:samples-pooled}).
Third,
\begin{align}
\probover{I, \alg}{\bigcap_{a \in \agents} H_{a,t \shortminus 1} ~\shorteq~ s_{a,t \shortminus 1}} = \probover{I, \alg}{\bigcap_{\tilde{a} \in \tilde{\agents}} \tilde{H}_{\tilde{a},t \shortminus 1} ~\shorteq~ s_{a,t \shortminus 1}} \label{eqn:traj-prob-history}
\end{align}
by the induction assumption. Finally, substituting \cref{eqn:traj-prob-action,eqn:traj-prob-history,eqn:traj-prob-reward} back into \cref{eqn:traj-H-prob} shows us that $h(t)$ holds.
\end{proof}

Now, we are ready to show that symmetry holds. 
The regret of agent $a$ when $\alg$ is run with agents $M$ is given by
\begin{align*}
 R^M_a(\alg,I,T) = & \sum_{t=1}^T \inprod{\theta^*}{x^*_{a,t}} - \expectationover{I, \alg}{ \inprod{\theta^*}{x_{a,t}}} \\
= & \sum_{t=1}^T \inprod{\theta^*}{x^*_{a,t}} - \sum_{(s_a)_a \in S} \left( \sum_{t=1}^T \inprod{\theta^*}{\bar{x}_{a,t}} \right) \probover{I, \alg}{\bigcap_{a \in \agents} H_{a,t} = s_a } \\
\labelrel{=}{i:5b} & \sum_{t=1}^T \inprod{\theta^*}{\tilde{x}^*_{\tilde{a},t}} - \sum_{(s_a)_a \in S} \left( \sum_{t=1}^T \inprod{\theta^*}{\bar{x}_{a,t}} \right) \probover{I, \alg}{\bigcap_{\tilde{a} \in \agents} \tilde{H}_{\tilde{a},t} = s_a } \\
= & R^{\tilde{M}}_{\tilde{a}}(\alg,I,T), \numberthis \label{eqn:regret-symmetry-equivalence}
\end{align*}
where \eqref{i:5b} uses \cref{clm:traj-iden-dist}. \cref{eqn:regret-symmetry-equivalence} completes the proof of the Claim.
\end{proof}

\subsection{On the Linearity axiom.}
\label{appn-subsec:on-linearity}
In \cref{thm:shapley-axioms}, we showed that the payout profile of $p=(p_a = -R^{\agents}_a(\alg, I, T)$ obeys the Shapley's axioms of efficiency, null-player, and symmetry. However, we mentioned that the linearity axiom can not be satisfied. We discuss this further in this sub-section.
% (sometimes called Additivity)

If the linearity axiom is satisfied by our payout $p$, then in conjunction with the other satisfied axioms, $p$ indeed \textbf{\emph{is}} the Shapley value. 
However, given that the Shapley value, by definition, captures the marginal utility of an agent to all possible $2^{M-1}$ different coalitions, it is highly unlikely that a simple payout structure like ours, $p_a = -R^M_a$, which only captures the dynamism associated with the grand coalition, could actually be the Shapley value. 
Thus, we believe the Linearity axiom will not be satisfied by $p$. \\

\paragraph{What would Linearity look like?} 
Consider two bandit scenarios $B_1 = (\alg, I, T_1)$ and $B_2 = (\alg, I, T_2)$.
Consider the corresponding value functions $v_1(C):=v_{\alg,I,T_1}(C) = - \sum_{a \in C} R^C_a(\alg,I,T_1)$ and $v_2:=v_{\alg,I,T_2}(C) = - \sum_{a \in C} R^C_a(\alg,I,T_2)$.

The sum of two value functions is traditionally obtained by the `superposition' of the value functions, i.e., the arithmetic sum of the two individual value function (as if the two games are happening in parallel and do not influence each other) is used to traditionally define the sum of two games, say $w=v_1+v_2$.
Namely, the game $(M,v_1+v_2)$ has the value/characteristic function
\begin{align}
    w(C) = v_1(C) + v_2(C) = - \sum_{a \in C} R^C_a(\alg,I,T_1) + R^C_a(\alg,I,T_2) \label{eqn:sum-of-games}
\end{align}
for all $C \subseteq M$.
Now, how does our definition of payouts $p$ for game $(M,v_1+v_2)$ relate to the underlying bandit scenario of $B_1 + B_2$? What does `+' mean in this context? 
There appears to be (at least) two formulations.
\begin{enumerate}
    \item $p$ is the regret going through bandit scenario $B_1$ (the specific instance $I$, with the specific algorithm $\alg$, for the specific time-steps) and \emph{then} going through $B_2$ with outcome of $B_1$ in memory. 
    In that case, 
    \begin{align}
        p_a(v_1+v_2) = - R^C_a(\alg,I,T_1 + T_2) \label{eqn:lin-p-1}
    \end{align}
    is the regret through total time $T_1 + T_2$.
    Note that this preserves the commutativity of $+$ operation (at least when $\alg$ remains the same).
    \item $p$ is the regret of going through bandit scenario $B_1$ summed up with the regret of going through scenario $B_2$ \emph{independent} of the execution of $B_1$. 
    In that case, 
    \begin{align}
        p_a(v_1+v_2) = - R^C_a(\alg,I,T_1) - R^C_a(\alg,I,T_2) \label{eqn:lin-p-2}
    \end{align}
    Note that this preserves the commutativity of $+$ operation always, also appears more aligned to the notion `superposition'.
    However, here, the payout definition assumes it is possible to decompose $w$ back into $v_1$ and $v_2$ which may not be permissible.
\end{enumerate}
Between $v_1$ and $v_2$, if the instances therein differ with model parameter $\theta^*_1 \neq \theta^*_2$, then the first case above matches the second case.

Next, we ask if these two definitions of $p$ satisfy the Linearity axiom. First,
\begin{align*}
    \cref{eqn:lin-p-1} \implies p_a(v_1+v_2) =& - R^C_a(\alg,I,T_1 + T_2) \\
    \neq & - R^C_a(\alg,I,T_1) - R^C_a(\alg,I,T_2) = p_a(v_1)+p_a(v_2),
\end{align*}
in general. Thus Linearity axiom does not hold in this case.
Second,
\begin{align*}
    \cref{eqn:lin-p-2} \implies p_a(v_1+v_2) = - R^C_a(\alg,I,T_1) - R^C_a(\alg,I,T_2) = p_a(v_1)+p_a(v_2),
\end{align*}
which shows $p$ (the version in \cref{eqn:lin-p-2}) satisfies the Linearity axiom. 

% Which of these definitions, if either, is correct/appropriate? 
But, as discussed above, it is not possible to implement (or realize) such a payout $p$ that requires the decomposition of the sum of two games $v_1+v_2$ back into the original constituent games $v_1, v_2$. 
Thus, we feel that our payout, in its current form, can not satisfy the Linearity axiom.

\section{Additional Numeric Simulations}
\label{appn-sec:experiments}

\begin{figure}[ht]
    \centering
    \begin{subfigure}{.51\linewidth}
        \centering
        \includegraphics[width=\linewidth]{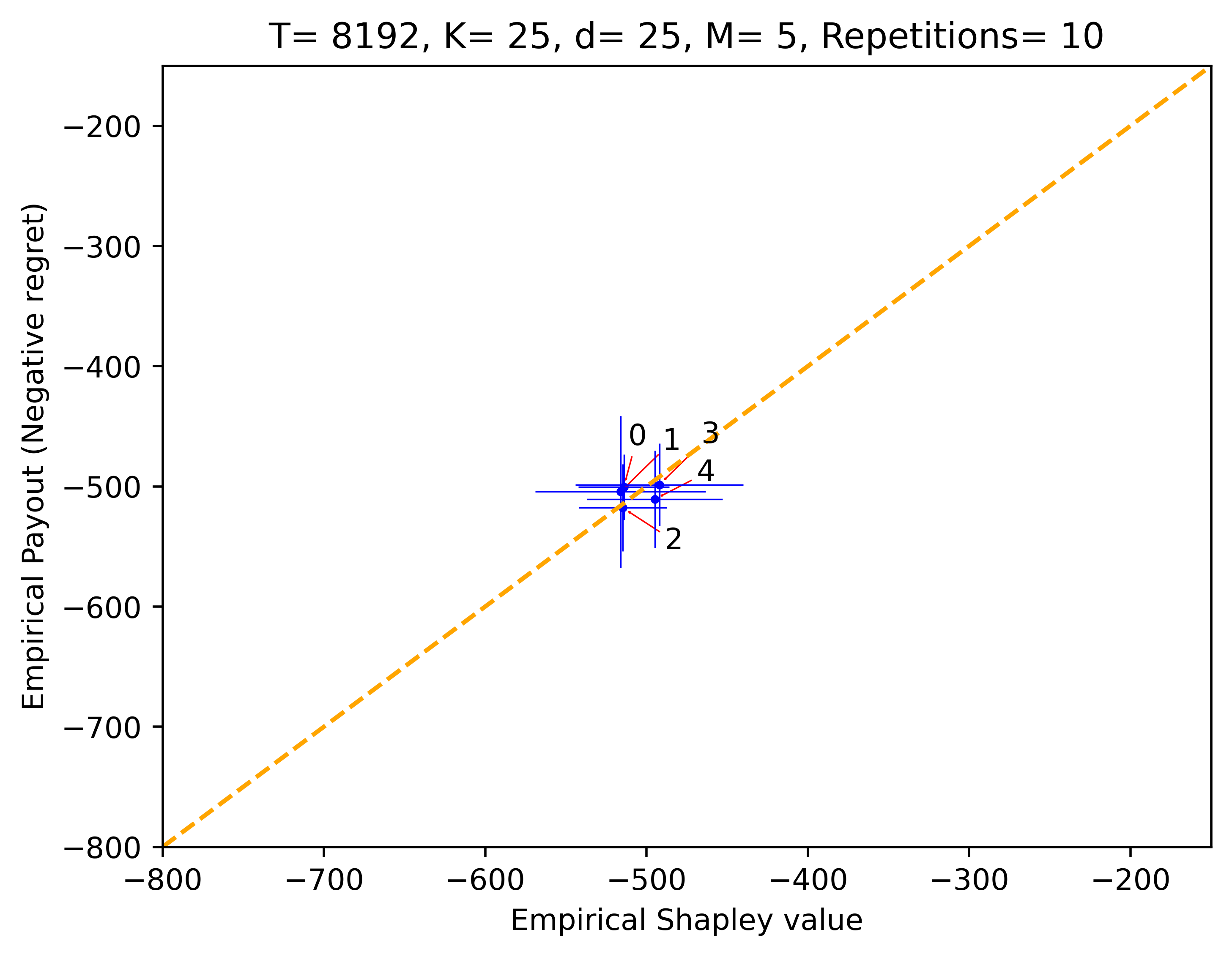}
        \caption{}\label{fig:synth-iden}
    \end{subfigure}%
    \begin{subfigure}{.51\linewidth}
        \centering
        \includegraphics[width=\linewidth]{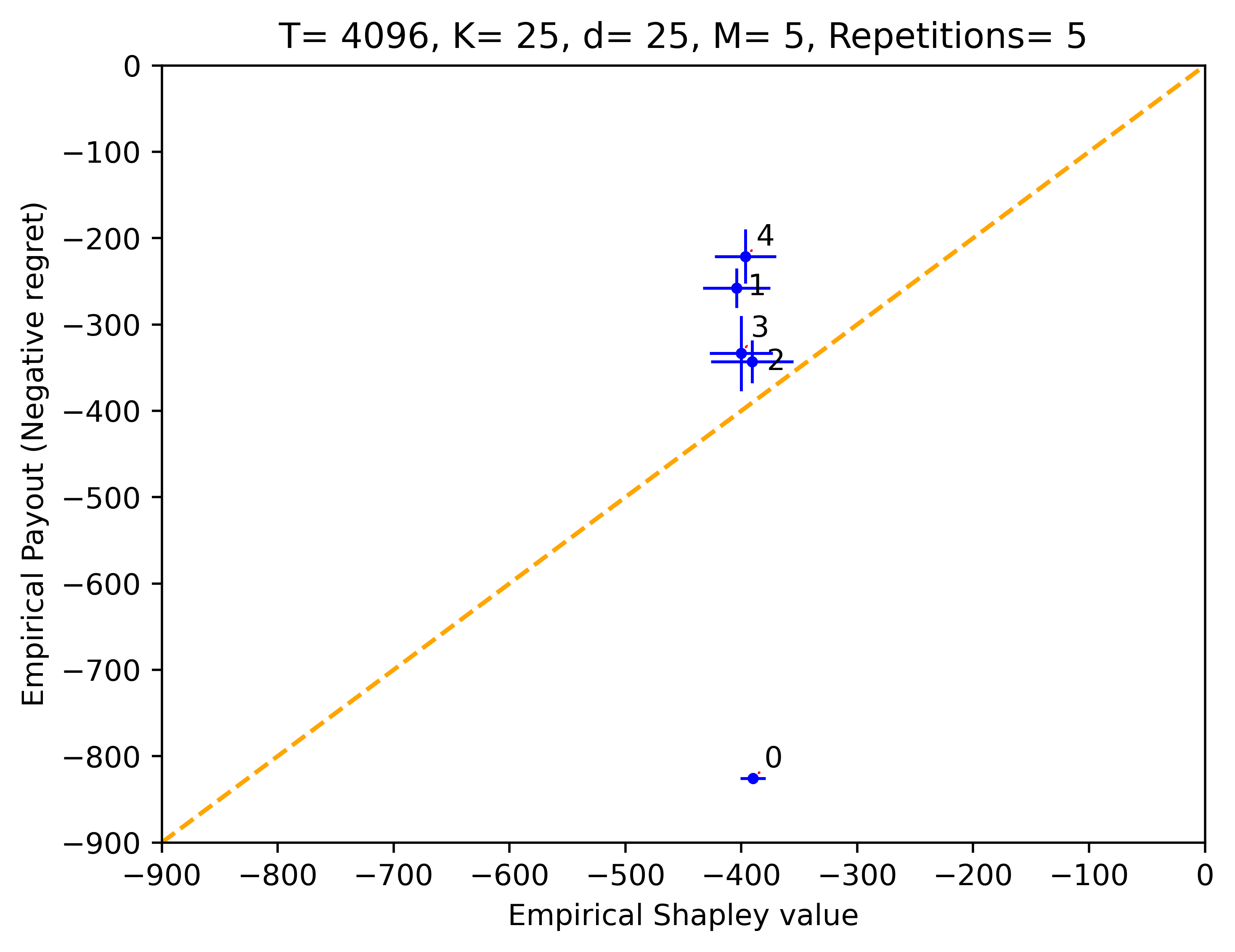}
        \caption{}\label{fig:synth-explo}
    \end{subfigure}
    
    \begin{subfigure}{.49\linewidth}
        \centering
        \includegraphics[width=\linewidth]{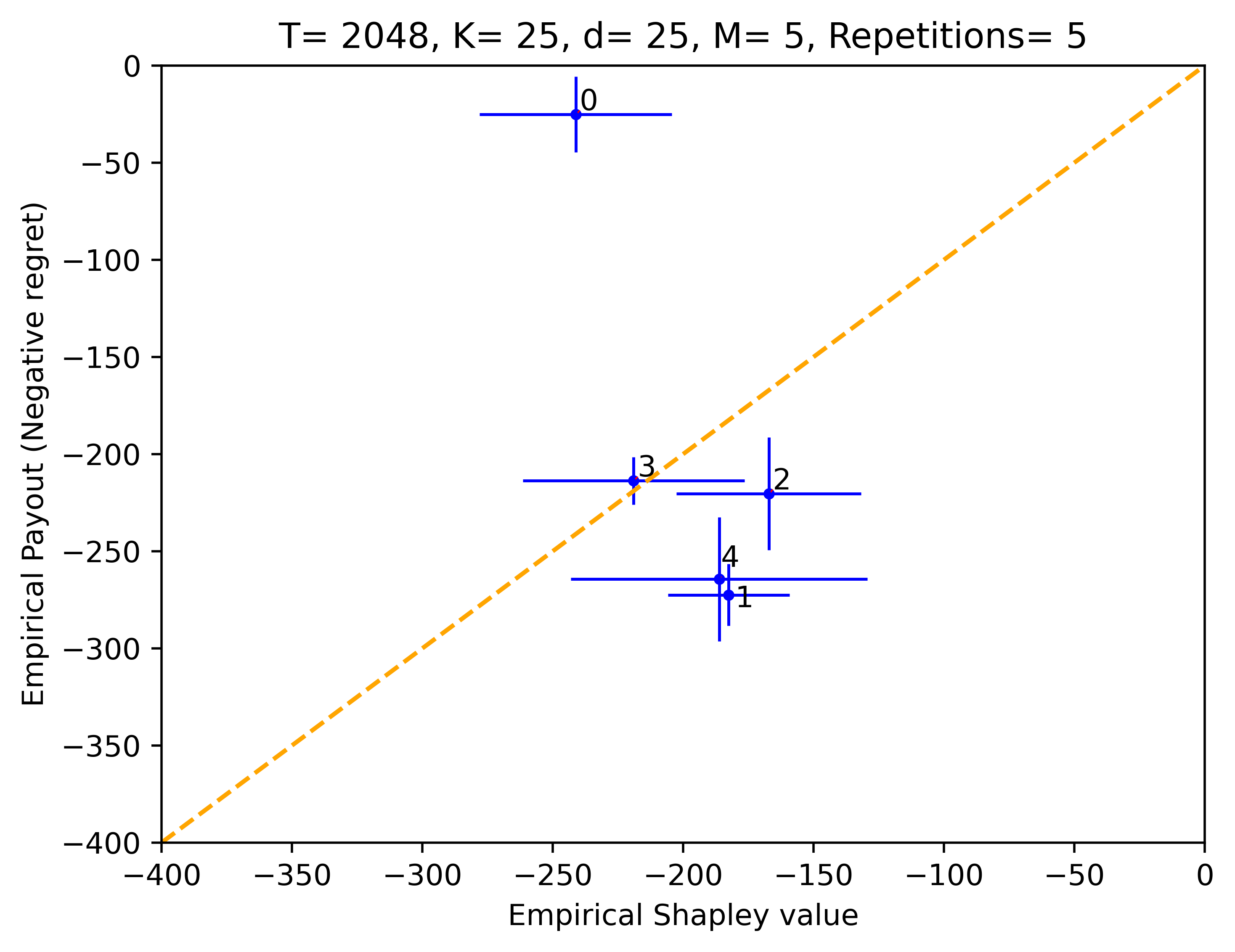}
        \caption{}\label{fig:synth-free}
    \end{subfigure}
    \begin{subfigure}{.49\linewidth}
        \centering
        \includegraphics[width=\linewidth]{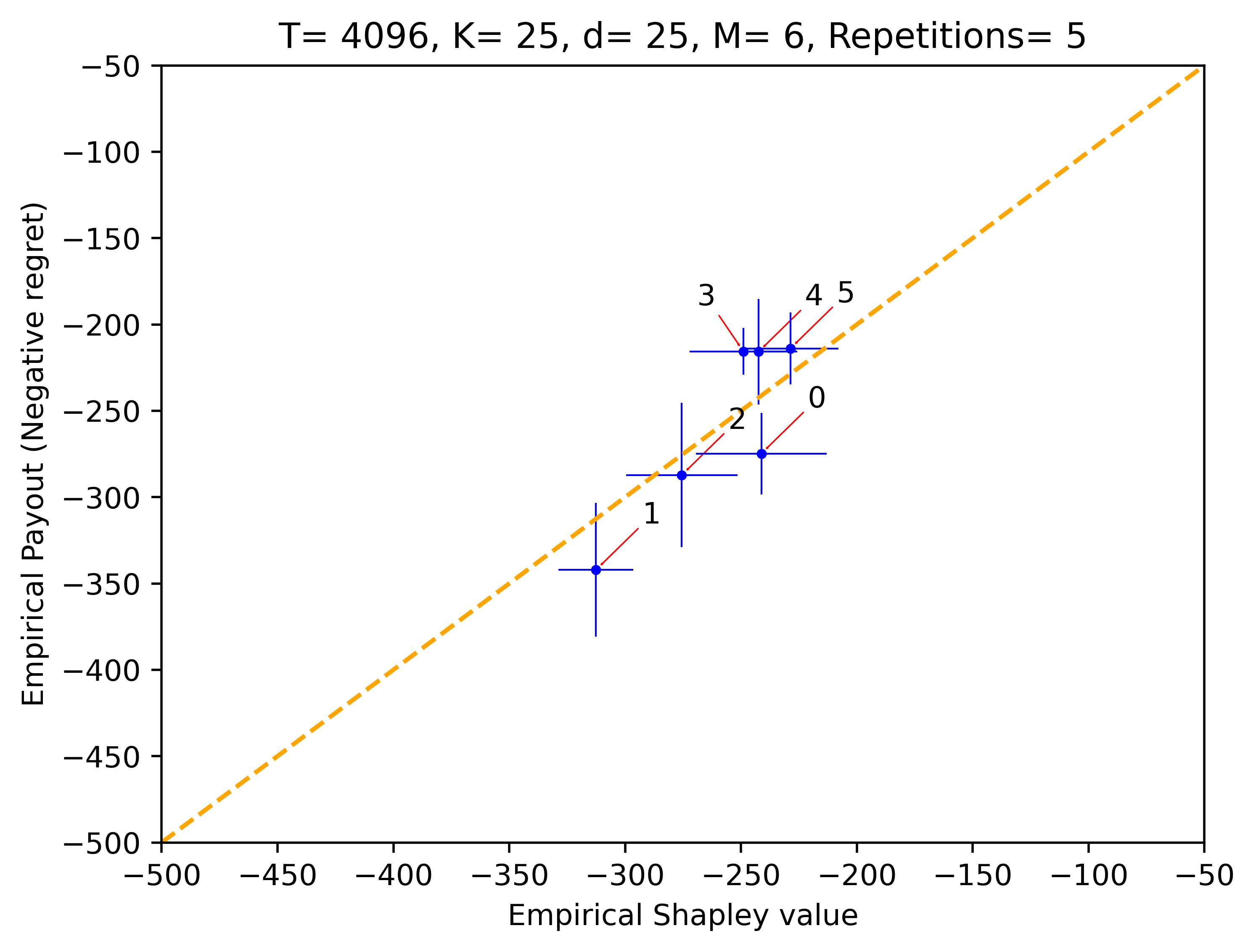}
        \caption{}\label{fig:synth-diag}
    \end{subfigure}
    \caption{\textbf{Synthetic experiments}}
    \label{fig:synth-experiments}
\end{figure}

\subsection{Notes on Implementation}
\label{appn-subsec:notes-on-implementation}
In each bandit instance $I_{attr} \in \inst$ (for some attribute), the action sets $X_{a,1}, X_{a,2}, \dots, X_{a,T}$ for an agent $a$ are generated by jointly embedding users and movies in a $d=100$ dimensional space such that each $X_t$ corresponds to the set of movies that the platform can recommend to a user with attribute value corresponding to that of agent $a$. 
We have $\cardinality{X_{a,t}} = 1682$ for all $t$ for all agents $a$ over all instances $I$. 
The parameter $\theta^* \in \Real^{100}$ for the linear bandits setup in all instances is the minimizer of the least square error between the predicted ratings and the true ratings. 
In all experiments, the reward for playing any arm $x \in X_{a,t}$ is $ \inprod{x}{\theta^*} + \eta$ where $\eta \sim \mathcal{N}(0,1)$ independently drawn every time.
All experiments are run for a time horizon of $T=4096$ for $5$ repetitions, with mean and error bars (cross-hairs) plotted. 
For the \mlu, we build upon the LinUCB implementation provided in the Pearl library \citep{pearl} and extend it to our multi-agent setting.

\subsection{Synthetic Experiments}
\label{appn-subsec:synth-exp}
We perform some synthetic experiments to better elucidate our assumptions and results.

\paragraph{Instance Setup}
We consider the following `symmetric' (not to be confused with the Shapley axiom) bandit instance in terms of the action sets of the agents, whereby all agents are identical.
With ambient dimension $d=25$, there are a total of $K=25$ actions, say $\{1,2,\dots, 25\}$, where each action is a unit vector pointing in a unique dimension. This can be equivalently understood as the vanilla multi-armed bandit with $d$ arms.
The $\theta^* $ parameter  is given by
\begin{align*}
    \theta^*_i = 
    \begin{cases}
        \nicefrac{7}{10} & \text{ if } i \equiv 1 \pmod{5}, \\
        \nicefrac{6}{10} & \text{ if } i \equiv 2 \pmod{5}, \\
        \nicefrac{5}{10} & \text{ if } i \equiv 3 \pmod{5}, \\
        \nicefrac{4}{10} & \text{ if } i \equiv 4 \pmod{5}, \\
        \nicefrac{3}{10} & \text{ if } i \equiv 0 \pmod{5} \numberthis \label{eqn:synth-instance}
    \end{cases}
\end{align*}
for all $i \in [d]$.
Each $\theta^*_i$ can be thought of as the reward means of the action $i$ in the vanilla MAB formulation.
From the expression, one can see that these action subsets $A_1 = \{1,2,\dots, 5\}, A_2 = \{6, \dots, 10 \}, \dots, A_5 = \{21, \dots, 25\}$ all are mutually identical.
We then have each agent posses (or be allocated) $2$ subsets of actions at all times : for all $t$, $X_{0,t} = A_1 \cup A_2, X_{1,t} = A_2 \cup A_3, \dots, X_{4,t} = A_5 \cup A_1$.
Individually, each agent posseses an identical set of $10$ actions, and viewing the agents arranged in a circle, each agent shares a half of the actions with the neighbour on the left, and shares the other half with the neighbour on the right.
All agents are identical w.r.t. their action sets and sharing of action sets with agents.

\paragraph{Outcomes.}
We run three experiments in this set-up: (i) all agents run LinUCB algorithm; (ii) Agent 0 deviates and plays greedily at all times (maximizing expected reward w.r.t. current parameter estimate without exploration term) while other agents continue to run LinUCB; and (iii) Agent 0 deviates and plays actions uniformly at random while other agents continue to run LinUCB. 
The comparison of payouts and Shapley value are plotted in \cref{fig:synth-experiments}.

With all identical agents (\cref{fig:synth-iden}), every agent's payout is remarkably close to his Shapley value. 
Further, the payouts (and Shapley values) are also very similar across all agents.
In \cref{fig:synth-free}, when there is one agent who plays greedily (that agent is called a `free-rider' in some contexts), 
it is seen that the greedy player 0 enjoys incredibly less regret, and has a lower Shapley value than other agents, which implies that he contributes lesser to minimizing the group's regret than other agents do.
% This can be attributed to the 
Further, 1 and 4 are the agents who share actions with free-rider 0, and they have a they have higher regret compared to 2 and 3 who don't share actions with 0.
From \cref{fig:synth-explo}, the explorer suffers very high regret (very low payout). Agents 1 and 4 benefit by having common actions with this explorer, they enjoy lower regret (higher payout) compared to agents 2 and 3 who don't share actions with the explorer. It is also seen that the Shapley values of all agents appear close to each other, the explorer's shapley value is statistically indistinguishable from that of a normal agent.

Finally, in \cref{fig:synth-diag}, we consider a slightly different asymmetric instance. 
Each agent $a \in \{1,2,3,4,5\}$ has an identical set of actions $X_{a,t} = A_a$, with no action overlap among each other.
And we introduce an asymmetric agent, labeled $0$, who shares exactly one action with each of the other five agents. 
Specifically, $X_{0,t} = \{1, 7, 13, 19, 25\}$.
It can be observed from \cref{eqn:synth-instance} that agent 0 shares with agent 1 their respective optimal arms, with agent 2 their respective second optimal arm, and so forth, and finally with agent 5 their respective least optimal arm.

It is seen from \cref{fig:synth-diag} that the empirical payouts and shapley values are well correlated for the agents. 
The asymmetry in agents 1 to 5 offers an interesting insight. Agent 1 has the least shapley and most regret.
This indicates that sharing and receiving information about the optimal action is not of much use to himself or the receiving agent (which in this case is 0). The reason is that each agent could have very well explored this optimal arm by himself by incurring no regret instead of receiving from other agents.
This phenomenon gradually lightens as we move through agents 2,3,4, and 5, who share lesser and lesser optimal arms, and have larger and larger Shapley values and payouts (negative regrets).

\subsection{MovieLens Experiments}
\label{appn-subsec:ml-exp}

\begin{figure}[ht]
    \centering
    \begin{subfigure}{.51\linewidth}
        \centering
        \includegraphics[width=\linewidth]{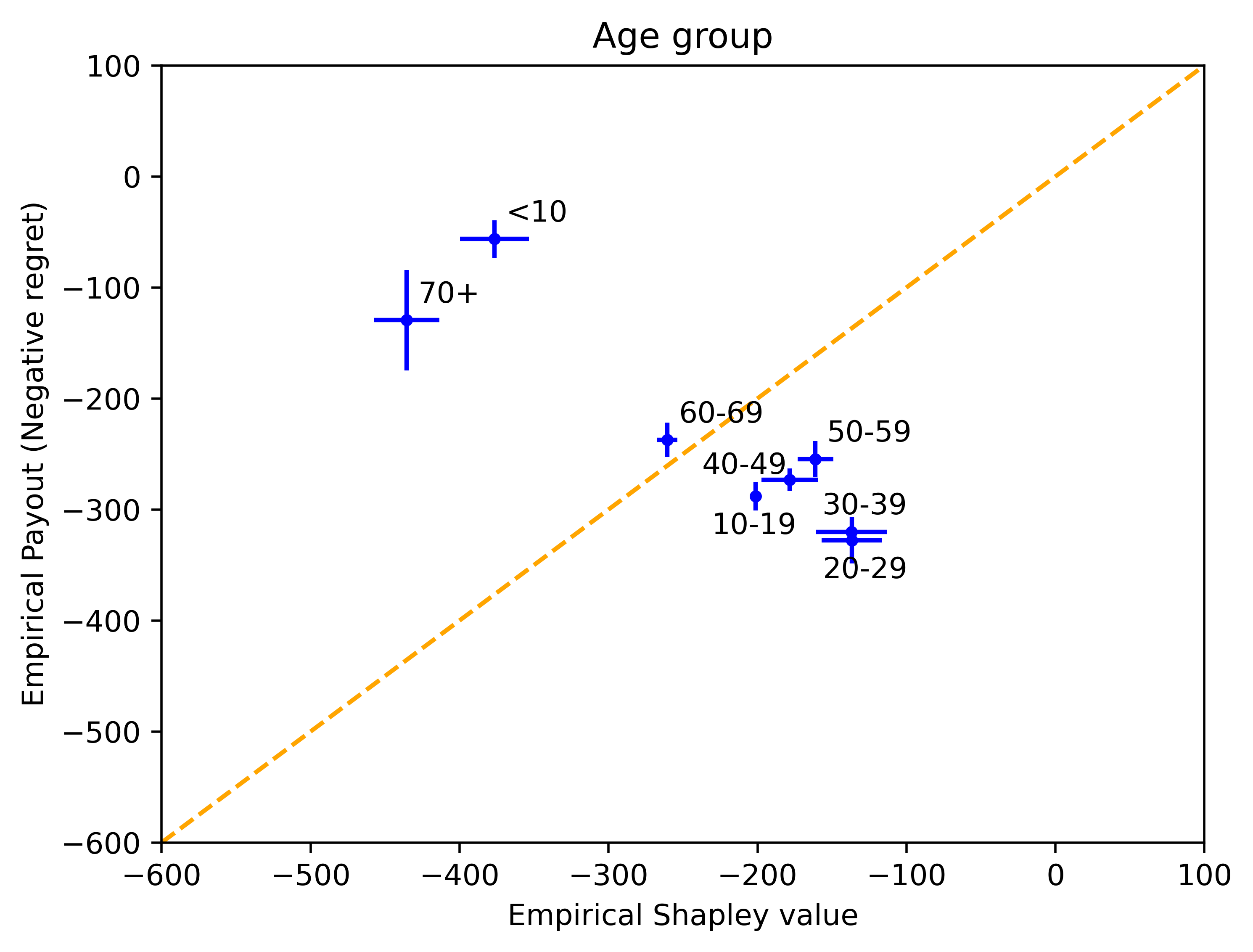}
        \caption{$(\iage, \mlu)$}\label{fig:ml-age}
    \end{subfigure}%
    \begin{subfigure}{.51\linewidth}
        \centering
        \includegraphics[width=\linewidth]{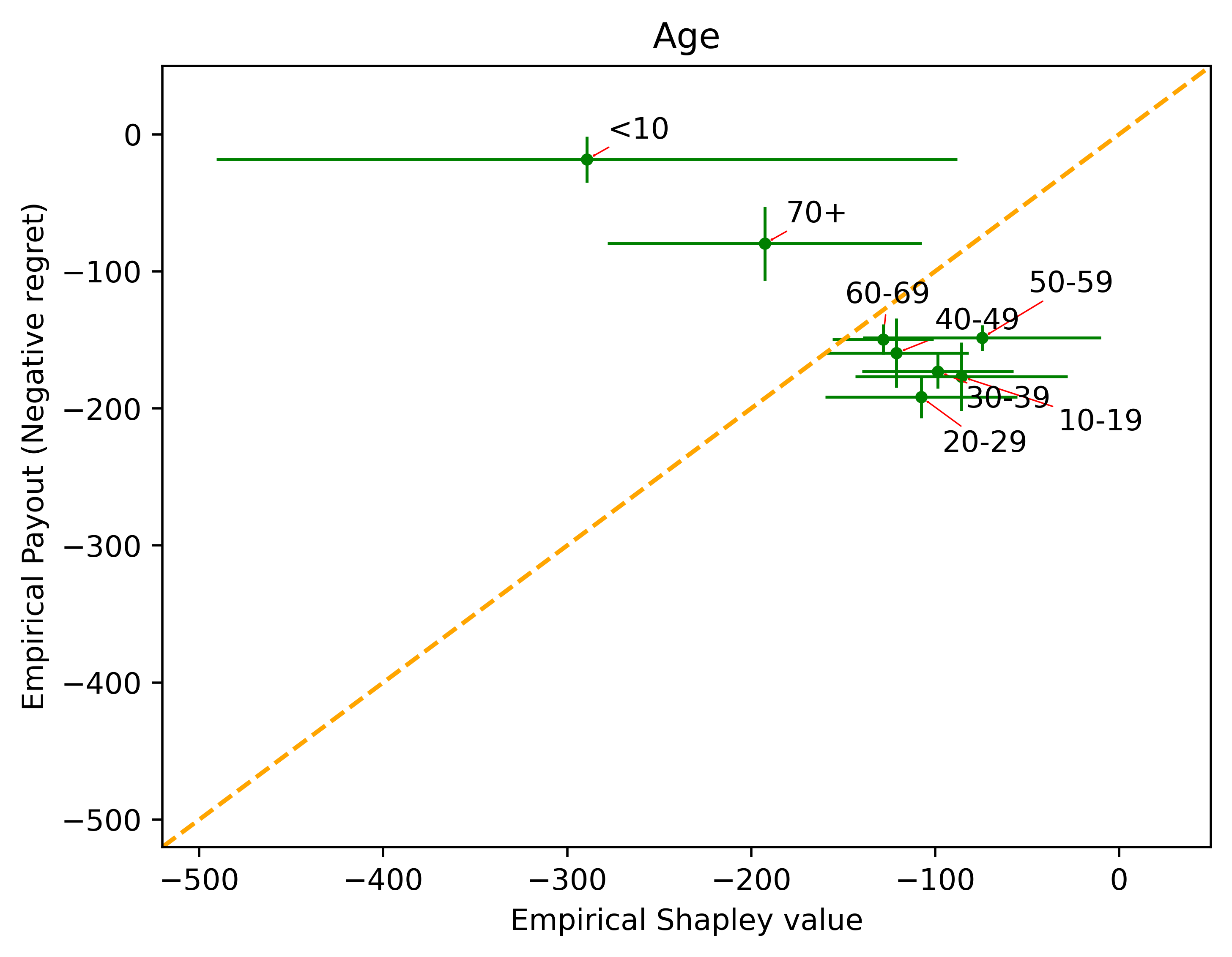}
        \caption{$(\iage, \greedy)$}\label{fig:ml-age-greedy}
    \end{subfigure}
    
    \begin{subfigure}{.49\linewidth}
        \centering
        \includegraphics[width=\linewidth]{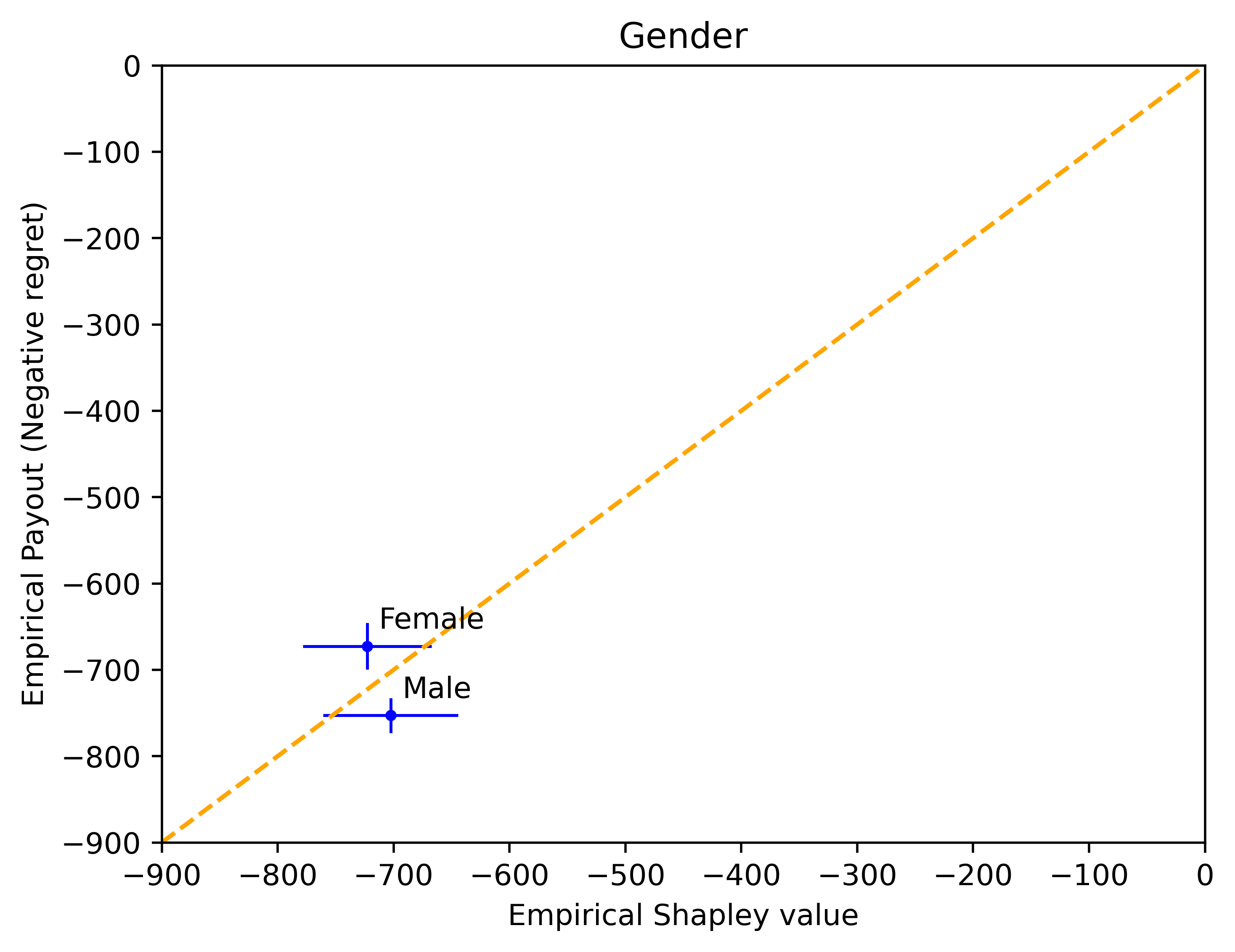}
        \caption{$(\igen, \mlu)$}\label{fig:ml-gender}
    \end{subfigure}
    \begin{subfigure}{.49\linewidth}
        \centering
        \includegraphics[width=\linewidth]{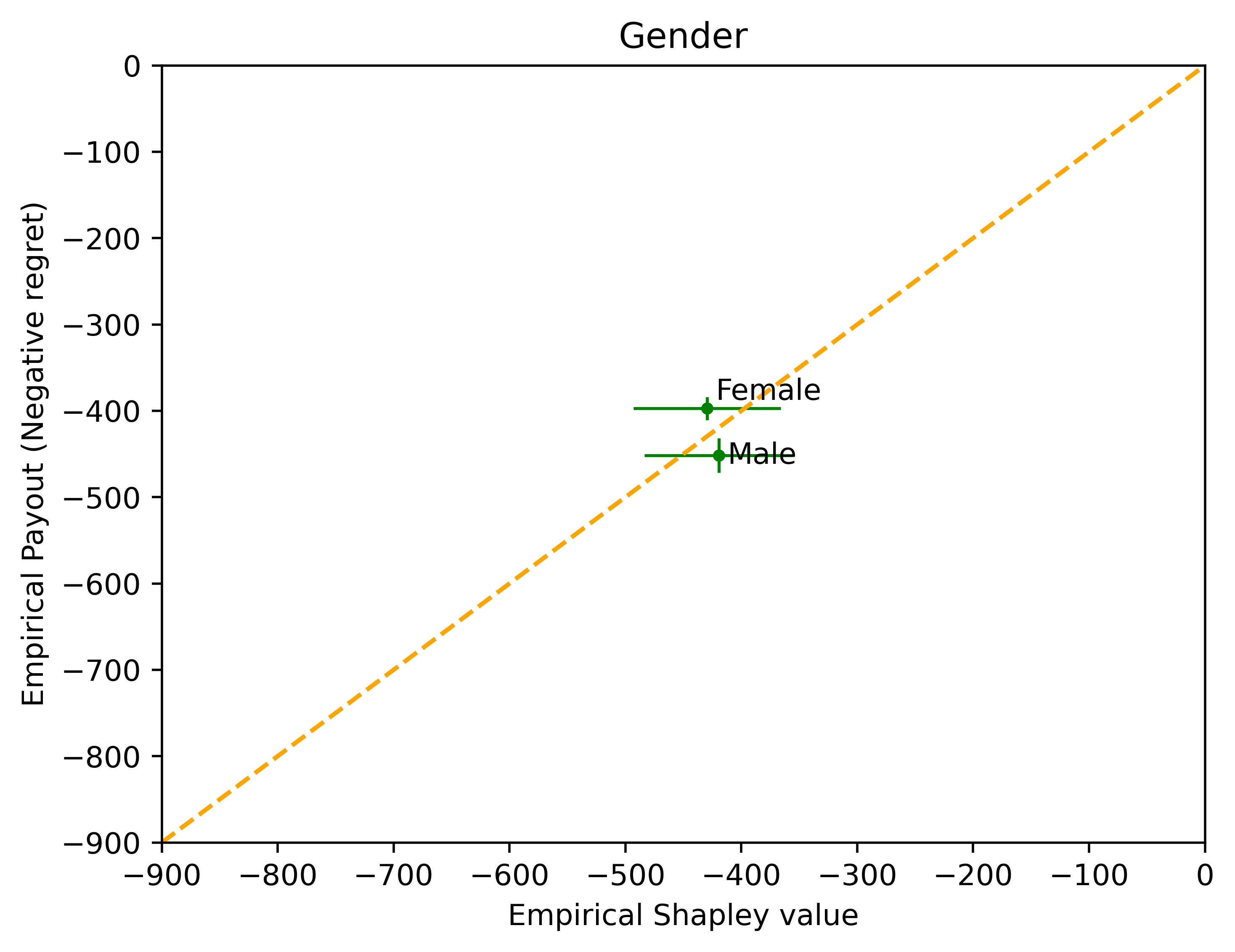}
        \caption{$(\igen, \greedy)$}\label{fig:ml-gender-greedy}
    \end{subfigure}
    \caption{\textbf{MovieLens experiments} - Instances based on classification by age and gender.}
    \label{fig:movielens-gender-age}
\end{figure}

In this subsection, we present the experimental results on MovieLens problem instances generated by classifying by gender and age group. Note the main paper presents experiments on instances based on classification by occupation and geography.

\cref{fig:movielens-gender-age} draws similar insights to the ones mentioned in the main paper. 
The payouts are close to the Shapley values empirically for several agents, but there exist outlier agents who disproportionately gain or contribute to the other agents.

On another note, the greedy algorithm appears to outpeform the LinUCB, and we attribute this to the inherent heterogeneity of the action space (the user representations, here).

\subsection{On satisfying \cref{assm:large-coalition-less-regret}}
\label{appn-subsec:on-satisfying-more-the-merrier}

We check if the Assumption---that the regret of an agent doesn't increase if more agents join the coalitoin---holds for our MovieLens experiments. 
We observe that it is mostly satisfied, with some rare cases where it is not. We present the results for $\iocc$ and $\igen$ problem instances with both algorithms $\mlu$ and $\greedy$.

In the $\iocc$ instance, there are $2816$ distinct agent-coalition pairs of $(a \in Q, Q \subseteq \agents)$ with $\cardinality{\agents} = 8$. Among them, all but $3$ (sim. $22$) pairs satisfy the Assumption when run with algorithm $\mlu$ (sim. $\greedy$), and the details of the pairs that do \emph{not} satisfy are plotted in \cref{tab:occupation} (sim. \cref{tab:occupation-greedy}).
All other pairs satisfy the Assumption and are not presented due to large volume of data.
Read agents from $0-7$ in order ['student', 'technical', 'management', 'creative', 'academic', 'business', 'healthcare', 'non-professional'].

\begin{table}
\caption{$(\iocc, \mlu)$}
\label{tab:occupation}
\begin{tabular}{|p{3cm}|c|c|c|c|c|}
\hline
\textbf{Coalition $Q$} & \textbf{Agent $a \in Q$} & \textbf{Regret $R^Q_a$} & \textbf{Sub-coalition $S$} & \textbf{Reg. $R^S_a$} & \textbf{$\nicefrac{R^Q_a}{R^S_a} \geq 1$} \\
\hline

\begin{filecontents*}{coop-content/mtm-table/occ-full-1-occupation.txt}
(0, 1, 3, 4, 6, 7)|1|354|18|(1, 3, 4, 6, 7)|352|47|1.006
(0, 1, 3, 4, 6, 7)|1|354|18|(1, 3, 4, 6, 7)|352|47|1.006
(2, 3, 4, 5, 6, 7)|5|208|23|(3, 4, 5, 6, 7)|208|14|1.0
(0, 1, 2, 3, 5, 6, 7)|5|189|29|(1, 2, 3, 5, 6, 7)|182|4|1.038
\end{filecontents*}
\csvreader[
  separator=pipe,
  late after line=\\ \hline
]{coop-content/mtm-table/occ-full-1-occupation.txt}{}%
{%
  \csvcoli & \csvcolii & \csvcoliii~$\pm$~\csvcoliv & \csvcolv & \csvcolvi~$\pm$~\csvcolvii & \csvcolviii
}
\end{tabular}
\end{table}

\begin{table}
\caption{$(\iocc, \greedy)$}
\label{tab:occupation-greedy}
\begin{tabular}{|p{3cm}|c|c|c|c|c|}
\hline
\textbf{Coalition $Q$} & \textbf{Agent $a \in Q$} & \textbf{Regret $R^Q_a$} & \textbf{Sub-coalition $S$} & \textbf{Reg. $R^S_a$} & \textbf{$\nicefrac{R^Q_a}{R^S_a} \geq 1$} \\
\hline

\begin{filecontents*}{coop-content/mtm-table/occ-greedy-full-1-occupation.txt}
(0, 2, 3, 6)|6|268|37|(2, 3, 6)|252|23|1.063
(0, 2, 3, 6)|6|268|37|(2, 3, 6)|252|23|1.063
(0, 3, 6, 7)|7|266|20|(3, 6, 7)|256|27|1.039
(2, 3, 5, 7)|7|230|15|(3, 5, 7)|226|15|1.018
(0, 1, 2, 5, 7)|5|181|25|(1, 2, 5, 7)|172|13|1.052
(0, 1, 4, 5, 7)|7|195|11|(1, 4, 5, 7)|176|19|1.108
(0, 1, 4, 6, 7)|6|205|49|(1, 4, 6, 7)|195|21|1.051
(0, 2, 5, 6, 7)|5|204|45|(2, 5, 6, 7)|184|20|1.109
(0, 4, 5, 6, 7)|6|243|13|(4, 5, 6, 7)|215|24|1.13
(1, 2, 4, 5, 6)|5|193|37|(2, 4, 5, 6)|176|14|1.097
(1, 4, 5, 6, 7)|6|227|20|(4, 5, 6, 7)|215|24|1.056
(0, 1, 2, 3, 6, 7)|6|179|25|(1, 2, 3, 6, 7)|165|27|1.085
(0, 1, 3, 4, 5, 6)|6|190|18|(1, 3, 4, 5, 6)|165|5|1.152
(0, 1, 3, 5, 6, 7)|5|158|10|(1, 3, 5, 6, 7)|152|19|1.039
(0, 3, 4, 5, 6, 7)|7|173|40|(3, 4, 5, 6, 7)|172|17|1.006
(1, 2, 3, 4, 6, 7)|3|210|19|(2, 3, 4, 6, 7)|198|20|1.061
(1, 3, 4, 5, 6, 7)|5|165|34|(3, 4, 5, 6, 7)|154|19|1.071
(1, 3, 4, 5, 6, 7)|6|201|10|(3, 4, 5, 6, 7)|184|54|1.092
(1, 3, 4, 5, 6, 7)|7|174|21|(3, 4, 5, 6, 7)|172|17|1.012
(0, 1, 2, 3, 5, 6, 7)|6|165|33|(1, 2, 3, 5, 6, 7)|145|21|1.138
(0, 1, 2, 4, 5, 6, 7)|6|156|26|(1, 2, 4, 5, 6, 7)|140|19|1.114
(1, 2, 3, 4, 5, 6, 7)|2|162|15|(2, 3, 4, 5, 6, 7)|162|13|1.0
(1, 2, 3, 4, 5, 6, 7)|7|156|18|(2, 3, 4, 5, 6, 7)|152|19|1.026
\end{filecontents*}
\csvreader[
  separator=pipe,
  late after line=\\ \hline
]{coop-content/mtm-table/occ-greedy-full-1-occupation.txt}{}%
{%
  \csvcoli & \csvcolii & \csvcoliii~$\pm$~\csvcoliv & \csvcolv & \csvcolvi~$\pm$~\csvcolvii & \csvcolviii
}
\end{tabular}
\end{table}

Next, we present the full result for $\igen$ instance with both algorithms. It can be seen that the Assumption is satisfied as in \cref{tab:gender,tab:gender-greedy}.
Read agents from $0-1$ in order ['Male', 'Female'].

\begin{table}
\caption{$(\igen, \mlu)$}
\label{tab:gender}
\begin{tabular}{|p{3cm}|c|c|c|c|c|}
\hline
\textbf{Coalition $Q$} & \textbf{Agent $a \in Q$} & \textbf{Regret $R^Q_a$} & \textbf{Sub-coalition $S$} & \textbf{Reg. $R^S_a$} & \textbf{$\nicefrac{R^Q_a}{R^S_a}$} \\
\hline

\begin{filecontents*}{coop-content/mtm-table/gender-full-1-gender.txt}
(0,)|0|1085|84|()|NA|NA|NA
(0,)|0|1085|84|()|NA|NA|NA
(1,)|1|1106|48|()|NA|NA|NA
(0, 1)|0|752|20|(0,)|1085|84|0.693
(0, 1)|1|672|27|(1,)|1106|48|0.607
\end{filecontents*}
\csvreader[
  separator=pipe,
  late after line=\\ \hline
]{coop-content/mtm-table/gender-full-1-gender.txt}{}%
{%
  \csvcoli & \csvcolii & \csvcoliii~$\pm$~\csvcoliv & \csvcolv & \csvcolvi~$\pm$~\csvcolvii & \csvcolviii
}
\end{tabular}
\end{table}

\begin{table}
\caption{$(\igen, \greedy)$}
\label{tab:gender-greedy}
\begin{tabular}{|p{3cm}|c|c|c|c|c|}
\hline
\textbf{Coalition $Q$} & \textbf{Agent $a \in Q$} & \textbf{Regret $R^Q_a$} & \textbf{Sub-coalition $S$} & \textbf{Reg. $R^S_a$} & \textbf{$\nicefrac{R^Q_a}{R^S_a}$} \\
\hline

\begin{filecontents*}{coop-content/mtm-table/gender-greedy-full-1-gender.txt}
(0,)|0|673|96|()|NA|NA|NA
(0,)|0|673|96|()|NA|NA|NA
(1,)|1|683|53|()|NA|NA|NA
(0, 1)|0|451|20|(0,)|673|96|0.67
(0, 1)|1|397|13|(1,)|683|53|0.581
\end{filecontents*}
\csvreader[
  separator=pipe,
  late after line=\\ \hline
]{coop-content/mtm-table/gender-greedy-full-1-gender.txt}{}%
{%
  \csvcoli & \csvcolii & \csvcoliii~$\pm$~\csvcoliv & \csvcolv & \csvcolvi~$\pm$~\csvcolvii & \csvcolviii
}
\end{tabular}
\end{table}

% \input{coop-content/crap}

% \input{coop-content/apdx-mapping-model-to-application}

% \parttoc % Insert the appendix TOC

\end{document}